\algrenewcommand\algorithmicrequire{\textbf{Input:}}
\algrenewcommand\algorithmicensure{\textbf{Output:}}
\renewcommand{\epsilon}{\varepsilon}
\pgfplotsset{compat=1.15}
\DeclareMathOperator{\C}{\mathfrak C}
\DeclareMathOperator{\cC}{\mathcal C}
\DeclareMathOperator{\cL}{\mathcal L}
\DeclareMathOperator{\cU}{\mathcal U}
\DeclareMathOperator{\cT}{\mathcal T}
\DeclareMathOperator{\conv}{\mathfrak{c}} %
\newcommand{\cm}{\ensuremath{\mathfrak{m}}\xspace}
\newcommand{\cM}{\ensuremath{\mathfrak{M}}\xspace}
\DeclareMathOperator{\Imp}{\iota}
\DeclareMathOperator{\vcdim}{VCdim}
\DeclareMathOperator{\TD}{TD}
\DeclareMathOperator{\TS}{TS}
\DeclareMathOperator{\cTS}{\mathcal T \mathcal S}
\DeclareMathOperator{\RTD}{RTD}
\newcommand{\Sam}{\mathrm{Samp}}
\DeclareMathOperator{\ord}{\mathrm{ord}}
\DeclareMathOperator{\Hm}{{\mathcal H}_m}
\DeclareMathOperator{\Hml}{{\mathcal H}_m^\ell}
\DeclareMathOperator{\wHm}{\widehat{{\mathcal H}}_m}
\DeclareMathOperator{\Hmm}{{\mathcal H}^+_m}
\DeclareMathOperator{\hullm}{{hull}_m}
\DeclareMathOperator{\hull}{hull}
\DeclareMathOperator{\twin}{twin}
\DeclareMathOperator{\CC}{\mathcal{C}}
\newcommand{\covectors}{\ensuremath{\mathcal{L}}}
\newcommand{\cH}{\ensuremath{H^{\scriptstyle c}}}
\DeclareMathOperator{\oGamma}{{\Gamma}}
\title{Efficient Algorithms for Learning and Compressing Monophonic Halfspaces in Graphs}
\author{%
\name{Marco Bressan} \email{marco.bressan@unimi.it}\\
\addr Università degli Studi di Milano, Milan, Italy%
\AND
\name{Victor Chepoi} \email{victor.chepoi@lis-lab.fr}\\
\addr Universit\'e d'Aix-Marseille and CNRS, Marseille, France%
\AND
\name{Emmanuel Esposito} \email{emmanuel@emmanuelesposito.it}\\
\addr Università degli Studi di Milano, Milan, Italy%
\AND
\name{Maximilian Thiessen} \email{maximilian.thiessen@tuwien.ac.at}\\
\addr TU Wien, Vienna, Austria%
}
\crefname{algocf}{alg.}{algs.}
\Crefname{algocf}{Algorithm}{Algorithms}
\newcommand{\eps}{\varepsilon}
\newcommand{\scC}{\mathcal{C}} %
\newcommand{\scH}{\mathcal{H}} %
\newcommand{\scO}{\mathcal{O}}
\newcommand{\MonInt}{I}
\newcommand{\algoname}[1]{\ensuremath{\textsc{#1}}\xspace}
\newcommand{\HalfspaceSeparation}{\algoname{HalfspaceSep}}
\newcommand{\ShadowClosure}{\algoname{ShadowClosure}}
\newcommand{\R}{\mathbb{R}}
\DeclareMathOperator{\VC}{VCdim}
\DeclareMathOperator{\diam}{diam}
\DeclareMathOperator{\poly}{poly}
\DeclareMathOperator{\qc}{qc} %
\DeclareMathOperator*{\argmin}{arg\,min}
\newcounter{emmanuelbox}\setcounter{emmanuelbox}{0}
\newcommand{\NO}{\ensuremath{\mathrm{NO}}}
\newcommand{\ignore}[1]{}
\begin{document}

\maketitle

\begin{abstract}%
Abstract notions of convexity over the vertices of a graph, and corresponding notions of halfspaces, have recently gained attention from the machine learning community.
In this work we study \emph{monophonic halfspaces}, a notion of graph halfspaces defined through closure under induced paths.
Our main result is a $2$-satisfiability based decomposition theorem, which allows one to represent monophonic halfspaces as a disjoint union of certain vertex subsets.
Using this decomposition, we achieve efficient and (nearly) optimal algorithms for various learning problems, such as teaching, active, and online learning. 
Most notably, we obtain a polynomial-time algorithm for empirical risk minimization. %
Independently of the decomposition theorem, we obtain an efficient, stable, and proper sample compression scheme. This makes monophonic halfspaces %
efficiently learnable with proper learners and linear error rate $\sfrac{1}{\eps}$ in the realizable PAC setting. %
Our results answer open questions from the literature, and show a stark contrast with geodesic halfspaces, for which most of the said learning problems are NP-hard.
\end{abstract}

\begin{keywords}%
node classification, PAC learning, computational complexity, sample compression, online learning, active learning, teaching, graph convexity, monophonic convexity
\end{keywords}
\section{Introduction}
We study binary classification of the vertices of a graph. With the advent of social networks and graph-based machine learning, this problem has received considerable attention in supervised learning~\citep{hanneke2006analysis,pelckmans2007margin}, active learning~\citep{afshani2007complexity,guillory2009label,cesa2010active,dasarathy2015s2}, and online learning \citep{herbster2005online,cesa2013random,herbster2015online}.
The concept class is often designed to reflect the homophily principle, that is, the tendency of adjacent vertices to belong to the same class.
In this work we take a different perspective, and assume that concepts are realized by \emph{monophonic halfspaces}, an abstract notion of halfspaces related to linear separability and convexity in Euclidean spaces.
In this way we hope to exploit the well-established machinery behind convexity, which is often at the heart of machine learning models (think of intervals, halfspaces, or polytopes).

The interest in graph halfspaces dates back to the '80s, but has been recently revived~\citep{duchet1983ensemble,chepoi1986some,chepoi1994separation,chepoi2024separation,farber1986convexity,pelayo2013geodesic,thiessen2021active,bressan2021exact,chalopin2022unlabeled,seiffarth2023maximal, chepoi2024separation, sokolov2025self}.
Notwithstanding, little is known about efficient algorithm for learning graph halfspaces.
On the one hand, most works focus on abstract properties of graph halfspaces, such as the relation between their invariants (say, the VC-dimension) and the properties of the underlying graph (say, the clique number).
On the other hand, for some standard graph halfspaces (e.g., geodesic halfspaces, which are closed under shortest paths) even deciding if there exists a hypothesis consistent with a given labeled sample is NP-hard \citep{seiffarth2023maximal}.
Therefore, it is not even clear whether nontrivial graph halfspaces exist that can be learned efficiently.
One of the contributions of this work is to show this is the case.

In this work we study \emph{monophonic halfspaces}, a notion of graph halfspaces defined through induced paths \citep{farber1986convexity,duchet1988convex,bandelt1989graphs}.
Let us introduce some notation. Let $G=(V,E)$ be a graph. Given two vertices $x,y \in V$, the \emph{monophonic interval} $I_m(x,y)$ between $x$ and $y$ is the set of all vertices that lie on some induced $(x,y)$-path.
One can see $I_m(x,y)$ as the graph equivalent of the segment connecting two points in $\R^d$.
A set $C \subseteq V$ is \emph{monophonically convex} (m-convex) if $I_m(x,y)\subseteq C$ for all $x,y\in C$. A set $H \subseteq V$ is a \emph{monophonic halfspace} (m-halfspace) if both $H$ and $V \setminus H$ are m-convex. For instance, if $G$ is a tree, then the connected components left by deleting an edge are m-halfspaces; if $G$ is a clique, then any subset is a m-halfspace. \Cref{fig:halfspace} gives another example. In real-world networks, communities and clusters often tend to be geodesically convex, e.g., in gene similarity networks \citep{zhou2002transitive}, protein-protein interaction networks \citep{li2013identification}, community detection benchmark datasets \citep{thiessen2021active}, and collaboration networks \citep{vsubelj2019convexity}. In the latter and many other cases, the set of  monophonic and geodesic convexity actually coincides \citep{malvestuto2012characteristic}.
Monophonic halfspaces are among the most studied graph halfspaces \citep{bandelt1989graphs,changat2005convexities,dourado2010complexity}, second only to geodesic halfspaces. %
Our work provides several concrete results about the structure and learnability of  m-halfspaces.

\subsection{Contributions}
For a graph $G=(V,E)$, let $n=|V|$ and $m=|E|$, let $\omega(G)$ be the clique number,  $\Hm(G)$ be the set of m-halfspaces, and $d$ be the VC-dimension of $\Hm(G)$. For an edge $ab$, we denote by $\Hm(ab)$ the set of m-halfspaces containing $a$ and avoiding $b$. 
\setlist[description]{font=\normalfont\bfseries}
\begin{description}[leftmargin=0in,itemsep=0pt,topsep=0pt]

\item[A polynomial-time algorithm for halfspace separation] (\Cref{sec:polycheck}). 
 Our first contribution is a polynomial-time algorithm for finding a halfspace $H \in \Hm(G)$ that separates two given subsets $A,B \subseteq V$ if one exists, see \Cref{thm:monophonic-halfspace-separation}.
 As a consequence, in the realizable case we obtain a polynomial-time PAC learning algorithm with sample complexity $\scO\left(\frac{d\log(1/\eps)+\log(1/\delta)}{\eps}\right)$, see \Cref{thm:polytime_ralizable_PAC}.
 Our result also implies separation with geodesic convexity, where the halfspace separation problem is NP-complete~\citep{seiffarth2023maximal}, and answers the open question of~\cite{gonzalez2020covering} about the complexity of finding a $k$-partition of $V(G)$ for the case $k=2$.
 Our algorithm relies on a careful reduction to 2-SAT that may be of independent interest.

\item[A decomposition theorem for monophonic halfspaces] (\Cref{sec:decomposition-VC-dim}). 
This is the most technical of our contributions.
Through a careful analysis of the 2-SAT formula mentioned above, we show that any m-halfspace $H\in \Hm(ab)$ can be written as a disjoint union of certain subsets, called \textit{shadows} and \textit{cells}.
This yields what we call the \emph{shadow-cell decomposition} of monophonic halfspaces, see \Cref{thm:unifiedDecomposition}. Shadow-cell decompositions can be computed efficiently, and support useful algorithmic operations such as finding the halfspace that better agrees with some labeled sample (that is, empirical risk minimization).
Indeed, shadow-cell decompositions yield efficient algorithms for several learning tasks, see below.
They also provide a tight characterization of the VC dimension of $\Hm(G)$ up to an additive constant of $4$, see \Cref{thm:connected-components-vc-dim} in \Cref{sec:VC-dim}.

\item[An efficient sample compression scheme] (\Cref{sec:LSCS}). We give a proper labeled sample compression scheme (LSCS) of size $4\omega(G)$ that runs in polynomial time, see \Cref{t:m-halfspaceCompression}. 
Moreover our scheme is \emph{stable}; by \citet{BousquetHMZ20} this implies a polynomial-time realizable PAC learner with sample complexity $\scO\left(\frac{\omega(G)+\log(1/\delta)}{\eps}\right)$, without the extra $\log(1/\eps)$ factor as above. When $d=\Theta(\omega(G))$, this makes m-halfspaces one of the few nontrivial classes where the optimal sample complexity is known to be achievable in polynomial time.

\item[Efficient learning algorithms](\Cref{sec:supervised,sec:active,sec:online,sec:teaching}). 
By exploiting our shadow-cell decomposition, we provide polynomial algorithms for learning monophonic halfspaces in several standard settings.
For supervised learning (\Cref{sec:supervised}), we give an empirical risk minimization (ERM) algorithm that achieves the optimal sample complexity, see \Cref{thm:erm-poly}.
For active learning (\Cref{sec:active}), we give an algorithm that runs in polynomial time and uses $\scO\left(\hullm(G)+\log\diam(G)+d\right)$ queries, see \Cref{thm:active_upper_bound}. Here $\diam(G)$ is the diameter and $\hullm(G)$ the ``m-hull number'' of $G$.
For online learning (\Cref{sec:online}), we obtain an algorithm that makes $\scO(d\log n)$ mistakes by using \textsc{Winnow}, as well as an algorithm that makes $\scO\Bigl(d + \log n\Bigr)$ mistakes (but runs in $2^d\poly(n)$ time) by using \textsc{Halving}; see \Cref{thm:polyAndFPTonline}.
For teaching (\Cref{sec:teaching}), we show that the teaching dimension of $\Hm(G)\setminus \{ \varnothing, V\}$ and the recursive teaching dimension of $\Hm(G)$ are both at most $2d+2$; see \Cref{thm:teaching}. Teaching sets of size $2d+2$ can be efficiently computed by relying on the shadow-cell decomposition.

\end{description}

\subsection{Discussion and related work}\label{sec:related}
We substantially extend the algorithmic results of \citet{BrEsTh24} and the results on halfspace separation of \citet{chepoi2024separation}.
Our results on supervised, active, and online learning of m-halfspaces significantly improve the results of \citet{BrEsTh24}, since we replace the clique number $\omega(G)$ used by \citet{BrEsTh24} with the VC-dimension $d\le \omega(G)$; it is easy to see that $\omega(G)$ can be unbounded as a function of $d$, and in fact one can have $\omega(G) \simeq d \cdot |V(G)|$.
A polynomial-time algorithm for m-halfspace separation was independently obtained by \citet{BrEsTh24},  \citet{ElaroussiNV2024half}, and \citet{chepoi2024separation}; we use the (slightly corrected) third solution.

Our contributions are among the few \emph{constructive} results on the efficiency of learning halfspaces and/or convex sets in graphs, and belong to a line of research on graph hypothesis spaces and their learning properties~\citep{chepoi2007covering,chepoi2021labeled,ChChMcRaVa,ChalopinCIR24,CoudertCDV24,Si25}. Note that almost all of our proposed algorithms with near-optimal guarantees have polynomial runtime on all graphs. %
By contrast previous results only achieve polynomial runtime or tight guarantees by assuming bounded treewidth \citep{thiessen2021active} and/or maximum \emph{clique-minor} size \citep{duchet1983ensemble,chepoi2007covering,ChChMcRaVa,le2023vc}. %

Empirical risk minimization can be reduced to listing  $\Hm(G)$.
If $C$ is m-convex, then the vertices of $V\setminus C$ that are adjacent to $C$ form a clique. Thus one can list $\Hm(G)$ by listing all pairs of cliques of $G$ and checking if the edges between them form a cut of $G$, for a running time of $n^{2\omega(G)}\poly(n)$. A better bound can be achieved using a polynomial-time consistency checker: in that case, by a folklore algorithm one can list $\Hm(G)$ in time $|\Hm(G)| \poly(n)$. Our work gives both a polynomial-time consistency checker and a tight bound on $|\Hm(G)|$; neither one was known before. In particular, bounds on $|\Hm(G)|$ given by standard VC-dimension arguments suffer an exponential dependence on the cutsize (i.e., the number of edges) of the halfspace~\citep{kleinberg2004detecting}. In our case the cutsize can be as large as $\Theta(\omega(G)^2)$, which yields $|\Hm(G)| \le n^{\scO(\omega(G)^2)}$.
This is significantly beaten by our %
bound $|\Hm(G)| \le m2^{d+1}+2$. %
\citet{glantz2017finding} give polynomial time algorithms for enumerating geodesic halfspaces of bipartite and planar graphs, but do not have results for general graphs. By contrast, we can enumerate all m-halfspaces in time $2^d\poly(n)$.

For active learning, \citet{thiessen2021active} give lower and upper bounds, but for \emph{geodesic} halfspaces. Their algorithm requires computing a minimum geodesic hull set, which is APX-hard with no known approximation algorithm. %
in contrast, our algorithm runs in polynomial time. \citet{bressan2021exact} also studied active learning on graphs under a geodesic convexity assumption. They achieved polynomial time however with additional assumptions on the convex sets, such as margin.
For online learning, \citet{thiessen2022online} give again results for geodesic halfspaces. Their algorithms, however, are computationally inefficient and/or loose in terms of mistake bounds. We instead represent monophonic halfspaces as a union of a small number of shadows and cells, which allows us to achieve a near-optimal mistake bound efficiently relying on \textsc{Winnow} \citep{littlestone1988learning}.

Labeled sample compression schemes (LSCS) have been introduced
by \citet{FlWa}, who asked whether
any set family of VC-dimension $d$ has an LSCS %
of size~$O(d)$. This remains one of the oldest open problems in machine learning. The conjecture was confirmed for various hypothesis spaces \citep{Ben-DavidL98,chalopin2022unlabeled, ChChMcRaVa, chepoi2021labeled, FlWa, KuzminW07, MoranW16, RubinsteinR12, RubinsteinR22}.
\citet{MoranY16} proved that any hypothesis space $\CC$ of VC-dimension $d$ admits an LSCS of size $O(2^d)$.

\section{Preliminaries}\label{sec:prelim}
\paragraph{Graphs.}
All graphs $G=(V,E)$ in this paper are
simple, undirected, and connected and let $n=|V|$ and $m=|E|$.
We write $u \sim v$ if $u, v \in V$ are adjacent and $u\nsim v$ if $u,v$ are not adjacent.
Let $N(v) = \{u \in V \,:\, uv \in E\}$ and $N[v]=N(v)\cup \{ v\}$ are the \emph{open} and the \emph{closed neighborhoods} of $v\in V$.  The subgraph of $G=(V,E)$  \emph{induced by} $A\subseteq V$
is the graph $G[A]=(A,E')$ such that $uv\in E'$ if and only if $uv\in E$. A \emph{clique}  is a complete subgraph of $G$.
For an edge $uv$, denote by $\Delta_{uv}$ the set $N[u]\cap N[v]$ and the subgraph induced by this set.
Let $\omega(G)$ denote the clique number of $G$.  %
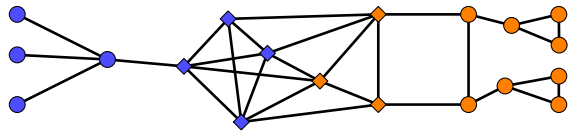
\begin{figure}[tb]
    \centering
    \colorlet{ffttww}{orange}
\definecolor{ududff}{rgb}{0.30196078431372547,0.30196078431372547,1}
\begin{tikzpicture}[line cap=round,line join=round,>=triangle 45,x=1cm,y=1cm,scale=.6]
\draw [line width=1pt] (-2,0)-- (0,-1);
\draw [line width=1pt] (-2,-0.89835)-- (0,-1);
\draw [line width=1pt] (-2,-2)-- (0,-1);
\draw [line width=1pt] (0,-1)-- (1.6941,-1.15245);
\draw [line width=1pt] (2.6742,-0.09975)-- (3.5454,-0.86205);
\draw [line width=1pt] (3.5454,-0.86205)-- (1.6941,-1.15245);
\draw [line width=1pt] (2.6742,-0.09975)-- (2.9646,-2.38665);
\draw [line width=1pt] (2.9646,-2.38665)-- (1.6941,-1.15245);
\draw [line width=1pt] (2.6742,-0.09975)-- (1.6941,-1.15245);
\draw [line width=1pt] (2.9646,-2.38665)-- (3.5454,-0.86205);
\draw [line width=1pt] (3.5454,-0.86205)-- (6,0);
\draw [line width=1pt] (6,0)-- (4.707,-1.47915);
\draw [line width=1pt] (4.707,-1.47915)-- (6,-2);
\draw [line width=1pt] (6,0)-- (6,-2);
\draw [line width=1pt] (6,-2)-- (8,-2);
\draw [line width=1pt] (6,0)-- (8,0);
\draw [line width=1pt] (8.9541,-0.24495)-- (8,0);
\draw [line width=1pt] (10,0)-- (8.9541,-0.24495);
\draw [line width=1pt] (10.0068,-0.68055)-- (8.9541,-0.24495);
\draw [line width=1pt] (8,0)-- (8,-2);
\draw [line width=1pt] (10,0)-- (10.0068,-0.68055);
\draw [line width=1pt] (8,-2)-- (8.8089,-1.58805);
\draw [line width=1pt] (8.8089,-1.58805)-- (10,-1.37025);
\draw [line width=1pt] (8.8089,-1.58805)-- (10,-2);
\draw [line width=1pt] (10,-1.37025)-- (10,-2);
\draw [line width=1pt] (3.5454,-0.86205)-- (4.707,-1.47915);
\draw [line width=1pt] (2.9646,-2.38665)-- (4.707,-1.47915);
\draw [line width=1pt] (2.6742,-0.09975)-- (6,0);
\draw [line width=1pt] (2.9646,-2.38665)-- (6,-2);
\draw [line width=1pt] (1.6941,-1.15245)-- (4.707,-1.47915);
\begin{scriptsize}
\draw [fill=ududff] (3.5454,-0.86205) ++(-5pt,0 pt) -- ++(5pt,5pt)--++(5pt,-5pt)--++(-5pt,-5pt)--++(-5pt,5pt);
\draw [fill=ududff] (2.6742,-0.09975) ++(-5pt,0 pt) -- ++(5pt,5pt)--++(5pt,-5pt)--++(-5pt,-5pt)--++(-5pt,5pt);
\draw [fill=ududff] (1.6941,-1.15245) ++(-5pt,0 pt) -- ++(5pt,5pt)--++(5pt,-5pt)--++(-5pt,-5pt)--++(-5pt,5pt);
\draw [fill=ududff] (2.9646,-2.38665) ++(-5pt,0 pt) -- ++(5pt,5pt)--++(5pt,-5pt)--++(-5pt,-5pt)--++(-5pt,5pt);
\draw [fill=ffttww]  (6,-2) ++(-5pt,0 pt) -- ++(5pt,5pt)--++(5pt,-5pt)--++(-5pt,-5pt)--++(-5pt,5pt);
\draw [fill=ffttww] (6,0) ++(-5pt,0 pt) -- ++(5pt,5pt)--++(5pt,-5pt)--++(-5pt,-5pt)--++(-5pt,5pt);
\draw [fill=ffttww] (4.707,-1.47915) ++(-5pt,0 pt) -- ++(5pt,5pt)--++(5pt,-5pt)--++(-5pt,-5pt)--++(-5pt,5pt);
\draw [fill=ududff] (3.5454,-0.86205) node (5pt) {};
\draw [fill=ududff] (2.6742,-0.09975) node (5pt) {};
\draw [fill=ududff] (1.6941,-1.15245) node (5pt) {};
\draw [fill=ududff] (2.9646,-2.38665) node (5pt) {};
\draw [fill=ffttww] (6,0) node (5pt) {};
\draw [fill=ffttww] (4.707,-1.47915) node (5pt) {};
\draw [fill=ffttww] (6,-2) node (5pt) {};
\draw [fill=ffttww] (8,-2) circle (5pt);
\draw [fill=ffttww] (8,0) circle (5pt);
\draw [fill=ffttww] (8.9541,-0.24495) circle (5pt);
\draw [fill=ffttww] (10,0) circle (5pt);
\draw [fill=ffttww] (10.0068,-0.68055) circle (5pt);
\draw [fill=ffttww] (8.8089,-1.58805) circle (5pt);
\draw [fill=ffttww] (10,-1.37025) circle (5pt);
\draw [fill=ffttww] (10,-2) circle (5pt);
\draw [fill=ududff] (0,-1) circle (5pt);
\draw [fill=ududff] (-2,-0.89835) circle (5pt);
\draw [fill=ududff] (-2,0) circle (5pt);
\draw [fill=ududff] (-2,-2) circle (5pt);
\end{scriptsize}
\end{tikzpicture}
    \caption{A toy graph $G$ %
    partitioned into a monophonic halfspace $H$ (in blue) and its complement $\cH$ (in orange). The diamond-shaped vertices form the boundaries $\partial H$ and $\partial\cH$ %
    and are cliques, see \Cref{mconv-recall}(1).}
    \label{fig:halfspace}
\end{figure}
A set $P$ is an \emph{induced path} if $G[P]$ is a path.
The \emph{length} of a path $P$ is the number  of
edges in $P$. A \emph{shortest $(u,v)$-path} is a $(u,v)$-path with a minimum number of edges. %
The \emph{distance} $d_G(u,v)$ between $u$ and $v$ is the length of a $(u,v)$-geodesic. If there is no ambiguity, we will write $d(u,v)=d_G(u,v)$.
We denote by $\diam(G)=\max\{ d(u,v): u,v\in V\}$  the \emph{diameter} of $G$. %
If $a,b \in V$ and $S \subseteq V$, then $S$ is an $(a,b)$-\emph{separator} if in $G$ every path between $a$ and $b$ intersects $S$. The \emph{boundary} $\partial A$ of a set $A\subseteq V$ is  the set of vertices $u\in A$ having a neighbor in $A^{\scriptstyle c}=V\setminus A$. %

\paragraph{Convexity spaces.}
Let $V$ be a finite set and $\C \subseteq 2^V$. Then $(V,\C)$ is a \emph{convexity space} if (i) $\varnothing, V \in \C$ and (ii) $\bigcap_{C\in\C'} C\in\C$ for every $\C'\subseteq\C$. %
A set $C \subseteq V$ is  \emph{convex} if $C \in \C$. Convexity spaces abstract standard Euclidean convexity (see, e.g., \citealt{van1993theory}). The \emph{convex hull} of $A \subseteq V$ is $\conv(A) = \bigcap_{C\in\C:A\subseteq C} C\in\C$.  %
A set $A$ is called  \emph{ $c$-independent} if $\conv(A)\setminus (\bigcup_{a\in A} \conv(A\setminus \{ a\})\ne\varnothing$ and the \emph{Carath\'eodory  number}  is the size of a largest $c$-independent set. A \emph{hull set} is any set $A$ with the property $\conv(A)=V$ and the \emph{hull number}  of $(V,\C)$ is the size of the smallest hull set.
A \emph{halfspace} of $(V,\C)$  is a convex set $H$ with convex complement $\cH=V\setminus H$. %
Two sets $A,B$ are \emph{separable by halfspaces} (or simply, \emph{separable}) if there exists a halfspace $H$ such that $A\subseteq H$ and $B\subseteq \cH$.
We will use the following well-known separation axioms from convexity theory \citep{van1993theory} to prove the tightness of some of our bounds: The space $(V,\C)$ is $S_2$ if any two distinct $p,q\in V$ are separable by halfspaces and $(V,\C)$ is $S_3$ if any convex set $A$ and any $p\notin A$ are separable by halfspaces.
Given two  sets $A$ and $B$ of a convexity space $(X,\C)$, the \emph{shadow of $A$ with respect to $B$}
is the set $A/B=\{ x\in X: \conv(B\cup \{ x\})\cap A\ne \varnothing\}$
\citep{chepoi1986some,chepoi1994separation}.
For $a,b\in X$, we call the shadow $a/b=\{a\}/\{b\}$ a \emph{point-shadow}.
A map $I\colon V\times V \rightarrow 2^V$ is an \emph{interval map} if (i) $u,v\in I(u,v)$ and (ii) $I(u,v)=I(v,u)$ for all $u,v\in V$. An example is the \emph{geodesic interval} of a metric space $(V,d)$ defined by $I_d(u,v) = \{z\in V: d(u,z)+d(z,v)=d(u,v)\}$. Any interval map $I$ defines a convexity space $(V,\C)$ where $C \in \C$ if and only if $I(u,v)\subseteq C$ for all $u,v\in C$. A \emph{graph convexity space} for $G=(V,E)$ is a convexity space $(V,\C)$ where $G[A]$ is connected for all $A\in\C$~\citep{duchet1983ensemble, pelayo2013geodesic}. We denote by %
$c(G)$ the  Carath\'eodory  number and by $\hull(G)$ the hull number of $(V,\C)$.

\paragraph{Monophonic convexity.}
For  vertices $u,v \in V$, the \emph{monophonic interval} (\emph{m-interval}) $\MonInt_m(u,v)$ is the set of all vertices that lie on induced $(u,v)$-paths.
A set $A \subseteq V$ is \emph{monophonically convex} (\emph{m-convex}) if $\MonInt_m(u,v) \subseteq A$ for all $u,v \in A$. The \emph{monophonic convex hull} (\emph{m-hull}) of $S \subseteq V$, denoted $\cm(S)$, is the smallest m-convex set containing $S$. Let $\cM=\cM(G)$ denote the set of  m-convex sets of a graph $G$.
We denote by $\Hm(G)$ the set of  of all monophonic halfspaces of a graph $G$, i.e., all halfspaces of the convexity space $(V,\cM)$. For an edge
 $ab$ of $G$, let $\Hm(ab)$ be the set of all  $H\in \Hm(G)$ with $a\in H$ and $b\in\cH$. %
For $i\in\{2,3,4\}$ we say that a graph is $S_i$ if the convexity space given by monophonically convex sets of $G$ satisfies the $S_i$ separation axiom.
The monophonic convexity has been investigated in several papers. \cite{farber1986convexity} and  \cite{duchet1988convex} investigated the Helly, Radon, and Carathéodory numbers of m-convexity and \cite{bandelt1989graphs} characterized the graphs satisfying %
various separation by m-halfspaces properties.  %
The complexity of various algorithmic problems
for m-convexity was investigated by \cite{dourado2010complexity} and \cite{gonzalez2020covering}.
We recall some structural and complexity results. %
\begin{lemma}  \label{mconv-recall} Let $G=(V,E)$ be a graph and $S\subseteq V$.
\begin{enumerate}[(1)]
\item  $S\subseteq V$  is m-convex iff for each connected component $B$ of $G[S^{\scriptstyle c}]$,  $\partial_B(S)$ is a clique %
\citep{duchet1988convex}.
\item  $c(G)\le 2$, i.e., $\cm(S)=\bigcup_{u,v\in S} \cm(u,v)$. The graph $G$ is complete if and only if $c(G)=1$ \citep{duchet1988convex,farber1986convexity}.
\item $r(G)=\omega(G)$ if $\omega(G)\ge 3$ and $r(G)\le 3$
if $\omega(G)\le 2$ \citep{duchet1988convex}.
\item m-hulls (and thus shadows) are computable in polynomial time \citep{dourado2010complexity}.
\item a minimum hull set can be computed in polynomial time \citep{dourado2010complexity}.
\end{enumerate}
\end{lemma}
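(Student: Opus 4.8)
All five items are due to the works cited; the plan is to recover them in the order in which they build on one another. Item (1) is the structural workhorse, so I would prove it first and directly. If $S$ is m-convex but some component $B$ of $G[S^{\scriptstyle c}]$ has two non-adjacent vertices $x,y\in\partial_B(S)$, then, choosing neighbours $b_x,b_y\in B$ of $x$ and $y$ and using that $G[B]$ is connected, the graph $G[\{x,y\}\cup B]$ is connected and has no edge $xy$, so a shortest $(x,y)$-path in it is an induced $(x,y)$-path of $G$ whose interior lies in $B\subseteq S^{\scriptstyle c}$ --- contradicting m-convexity. Conversely, assume every $\partial_B(S)$ is a clique, take $u,v\in S$ and an induced $(u,v)$-path $P$, and suppose $P$ leaves $S$: a maximal run $p_s,\dots,p_t$ of consecutive vertices of $P$ lying in $S^{\scriptstyle c}$ induces a connected subgraph of $G[S^{\scriptstyle c}]$, hence lies in a single component $B$, so by maximality $p_{s-1},p_{t+1}\in\partial_B(S)$ and thus $p_{s-1}\sim p_{t+1}$; as these vertices are non-consecutive on $P$, this edge is a chord, contradicting that $P$ is induced. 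Hence $\MonInt_m(u,v)\subseteq S$.

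For (2), set $T=\bigcup_{u,v\in S}\cm(\{u,v\})$, so $S\subseteq T\subseteq\cm(S)$; it suffices to prove $T$ is m-convex (then both inclusions collapse), and by (1) this reduces to showing $\partial_B(T)$ is a clique for every component $B$ of $G[T^{\scriptstyle c}]$. Given $x,y\in\partial_B(T)$ with $x\in\cm(\{a,b\})$ and $y\in\cm(\{c,d\})$ for some $a,b,c,d\in S$, the substantive step --- which I expect to be the main obstacle --- is a sub-lemma forcing $x\sim y$: otherwise one builds an induced $(x,y)$-path through $B$ exactly as in (1), and then, using that $\cm(\{a,b\})$ and $\cm(\{c,d\})$ are themselves m-convex and that an induced path may be rerouted through the connected set $B$, forces some vertex of $B$ into $\cm(\{a,b\})\cup\cm(\{c,d\})\subseteq T$, contradicting $B\subseteq T^{\scriptstyle c}$. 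The remaining claims of (2) are immediate: if $G$ is complete then $\MonInt_m(u,v)=\{u,v\}$ for all $u,v$, so every set is m-convex and $c(G)=1$; and if $G$ is not complete, a non-adjacent pair $x,y$ together with an interior vertex $z$ of a shortest $(x,y)$-path gives $z\in\cm(\{x,y\})\setminus(\cm(\{x\})\cup\cm(\{y\}))$, so $c(G)\ge 2$.

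For item (3), the lower bound $r(G)\ge\omega(G)$ (when $\omega(G)\ge 3$) is immediate from (1): every subset of a maximum clique $K$ is m-convex, so $\cm(A)=A$ for each part $A$ of a bipartition of $K$, whence $K$ admits no Radon partition; the matching upper bound $r(G)\le\omega(G)$, and the bound $r(G)\le 3$ when $\omega(G)\le 2$, are \citet{duchet1988convex}'s results. For item (4), I would compute $\cm(S)$ by the closure iteration $S_0=S$, $S_{i+1}=S_i\cup\bigcup_{u,v\in S_i}\MonInt_m(u,v)$, which strictly grows until it stabilizes and so halts within $n$ rounds; the only non-routine ingredient is that the m-interval $\MonInt_m(u,v)$ is itself polynomial-time computable --- exactly the content of \citet{dourado2010complexity} --- and shadows then follow at once, since $A/B=\{x:\cm(B\cup\{x\})\cap A\ne\varnothing\}$ costs $n$ hull computations.

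Finally, item (5) --- a minimum m-hull set in polynomial time --- is the most delicate, the analogous geodesic problem being NP-hard; here I would invoke \citet{dourado2010complexity}, whose argument identifies the vertices forced into any minimum m-hull set (essentially the monophonically extreme ones) and then builds such a set greedily. In summary, the one genuine obstacle is the rerouting sub-lemma in (2); the only other non-trivial ingredient --- polynomial-time computation of monophonic intervals, underlying items (4) and (5) --- is already available in the literature.
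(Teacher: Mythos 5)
The paper does not actually prove this lemma: it is a citation-only recall of results from \citet{duchet1988convex}, \citet{farber1986convexity}, and \citet{dourado2010complexity}, so the comparison is between your reconstruction and the cited literature. Your proof of item (1) is correct and is the standard argument in both directions. Items (4) and (5) are correctly reduced to the one non-trivial ingredient — polynomial-time computation of $I_m(u,v)$, i.e., deciding whether a vertex lies on some induced $(u,v)$-path — which you rightly attribute to \citet{dourado2010complexity}; the closure iteration on top of it is routine. For item (3) you argue only one inequality and defer the rest (also note the paper never defines $r(G)$, so whether the clique argument gives $r(G)\ge\omega(G)$ or $r(G)\ge\omega(G)+1$ depends on the Radon-number convention).

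The genuine gap is in item (2), exactly where you flag it, and the strategy you describe would not close. Suppose $x,y\in\partial_B(T)$ are non-adjacent, where $T=\bigcup_{u,v\in S}\cm(u,v)$. The induced $(x,y)$-path $P$ through $B$ only tells you that the interior of $P$ lies in $I_m(x,y)\subseteq\cm(S)$ (since $\cm(S)$ is m-convex and contains $x,y$) while being disjoint from $T$ — but that is precisely the configuration whose impossibility is the content of the theorem, not a contradiction you can derive from it. To ``force some vertex of $B$ into $\cm(a,b)\cup\cm(c,d)$'' you would need an exchange lemma of the form: if $z\in I_m(p,q)$ with $p\in\cm(a,b)$ and $q\in\cm(c,d)$, then $z\in\cm(u,v)$ for some pair $u,v\in\{a,b,c,d\}$. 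That statement is essentially the full strength of Duchet's Carath\'eodory theorem; its proof requires the careful path surgery of \citet[][Lemmas 6.1--6.4]{duchet1988convex} (parts of which the paper itself imports in Section 5 for the VC-dimension bound) and is not recoverable from the generic ``reroute through the connected set $B$'' idea. So the sub-lemma is not a small obstacle to be smoothed over: it is the theorem, and as written your item (2) is a restatement rather than a proof.
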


\paragraph{Hypothesis spaces and sign vectors.}
In machine learning, a \emph{hypothesis space} on a set $V$ is any collection $\CC$ of subsets of $V$.
The \emph{VC-dimension} $\vcdim(\mathcal{C})$ of $\mathcal{C}$ is the size of a largest set
$S\subseteq V$ \emph{shattered} by $\mathcal{C}$, that is, such that $\{C\cap S: C\in \mathcal{C}\}=2^S$.
A \emph{sample} is a set $X=\{(x_1,y_1),\ldots,(x_m,y_m)\}$, where $x_i\in V$ and $y_i\in\{-1,+1\}$.
A sample $X$ is \emph{realizable by a hypothesis} $C\in \CC$  if $y_i=+1$ when $x_i\in C$, and $y_i=-1$ when $x_i\notin C$;  we also say that the hypothesis $C$ is \emph{consistent} (for $X$). %
A sample $X$ is  a \emph{realizable sample} of $\CC$ if $X$ is realizable  by some hypothesis $C$ in $\CC$. We denote by $\Sam(\CC)$ the set of all realizable samples. To encode hypotheses of $\CC$ and samples of $\Sam(\CC)$, we use the language of  sign vectors
from oriented matroids theory~\citep{BLSWZ,ChChMcRaVa}. Let $\covectors$ be a non-empty \textit{set of sign vectors}, \textit{i.e.},
maps from $V$ to $\{\pm 1,0\} = \{-1,0,+1\}$.
For $X \in \covectors$, let $X^+ = \{v\in V: X_v=+1\}$ and $X^-= \{v\in V:
X_v=-1\}$. The set $\underline{X} = X^+ \cup X^-$
is called the \emph{support} of $X$.
We denote by $\preceq$  the product ordering
on $\{ \pm 1,0\}^V$ relative to the ordering of signs with $0 \preceq -1$  and $0 \preceq +1$.
Any hypothesis space $\CC\subseteq 2^V$ can be viewed as a set of sign vectors of $\{ \pm 1\}^V$:
each hypothesis $C\in \CC$ is encoded by the sign vector $X(C)$, where $X_v(C)=+1$ if $v\in C$ and
$X_v(C)=-1$ if  $v\notin C$. In what follows, we consider a hypothesis space $\CC$ simultaneously as a
collection of sets and as a set of  $\{ \pm 1\}$-vectors.
Note that each sample $X$ is just a sign vector and that the samples realizable by a hypothesis $C\in \CC$ are all $X\in \{ \pm 1,0\}^V$ such that $X\preceq C$.
In this paper, as  $\CC$ we will consider the set $\Hm(G)$ of monophonic halfspaces of a graph $G=(V,E)$ and we set $d=\vcdim(\Hm(G))$.

\section{Monophonic halfspace separation}\label{sec:hsp}\label{sec:polycheck}
This section describes our first main technical contribution: a polynomial-time algorithm for monophonic halfspace separation. 

\begin{definition}[Halfspace separation problem, \citealt{seiffarth2023maximal}] Given a convexity space $(V,\C)$ and two subsets $A,B\subseteq V$, the halfspace separation problem asks to decide whether $A$ and $B$ are separable by complementary halfspaces $H,\cH \in \C$.
\end{definition}
The halfspace separation problem is an instance of the \emph{consistent hypothesis finding} problem. In Euclidean spaces, it is equivalent to linear separability
\citep{BoGuVa,MiPa}. \citet{seiffarth2023maximal} introduced this problem in general convexity spaces and showed that it is NP-complete already for geodesic convexity in graphs. %
In contrast, we prove the following result. %
\begin{theorem}[Poly-time separation of m-halfspaces]
\label{thm:monophonic-halfspace-separation}
 There is an algorithm that, given a graph $G=(V,E)$ and two subsets $A,B \subseteq V$, in polynomial time either outputs $H \in \Hm(G)$ that separates $A$ and $B$ or correctly decides that no such $H$ exists.
\end{theorem}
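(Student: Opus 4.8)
The plan is to reduce monophonic halfspace separation to 2-SAT.

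The plan is to reduce monophonic halfspace separation to 2-SAT. I would first dispose of degenerate cases ($A=\varnothing$ gives $H=\varnothing$, $B=\varnothing$ gives $H=V$) and then, using that m-hulls are polynomial-time computable (\Cref{mconv-recall}(4)), replace $A,B$ by $\cm(A),\cm(B)$; if these intersect we report that no separating halfspace exists, since necessarily $\cm(A)\subseteq H$ and $\cm(B)\subseteq\cH$. Hence we may assume $A,B$ are disjoint m-convex sets, and it remains to decide whether this partition can be completed to complementary m-halfspaces.

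Introduce a Boolean variable $x_v$ for each $v\in V$ with intended meaning ``$v\in H$'', together with the unit clauses $x_a$ for $a\in A$ and $\neg x_b$ for $b\in B$. For the convexity requirements I would use two families of binary clauses. (i) \emph{Shadow clauses}: if $p\in H$, $q\in\cH$ and $p\in\cm(q,x)$ then $x\in H$ (otherwise $p\in\cm(q,x)\subseteq\cH$), and symmetrically; taking $q$ over the fixed seed $B$ yields, for every $v$, the implications $x_v\to x_w$ for all $w$ in the point-shadow of $v$ with respect to some $b\in B$, and taking $p$ over $A$ yields $\neg x_v\to\neg x_w$ for $w$ in the point-shadow of $v$ with respect to some $a\in A$. (ii) \emph{Boundary clauses} derived from \Cref{mconv-recall}(1): the attachment of each component of $G[\cH]$ into $H$, and of each component of $G[H]$ into $\cH$, must be a clique, which forbids certain nonadjacent pairs of vertices from ending up on the same boundary; turning this component-wise condition into binary clauses is the delicate part of the encoding. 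All point-shadows and hulls involved are polynomial-time computable by \Cref{mconv-recall}(4) and there are polynomially many clauses, so the formula $\varphi$ is built in polynomial time.

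Finally I would solve $\varphi$ in linear time. If $\varphi$ is unsatisfiable, report that no separating halfspace exists: this is correct because every separating halfspace satisfies all of the above (necessary) clauses. If $\varphi$ is satisfiable, set $H=\{v:x_v=1\}$ and verify directly, via \Cref{mconv-recall}(1), that $H$ is an m-halfspace with $A\subseteq H$ and $B\cap H=\varnothing$; output $H$. The main obstacle --- the heart of the ``careful reduction'' --- is proving sufficiency: the binary clauses must rule out \emph{every} violation of m-convexity of $H$ or $\cH$, even though the naive convexity constraint ``$u,v\in H\Rightarrow\cm(u,v)\subseteq H$'' is ternary. The argument should trace any such violation, using $c(G)\le 2$ (\Cref{mconv-recall}(2)) to reduce hull-membership to pairwise m-intervals, along an induced path back to one of the seeds $A$ or $B$, thereby contradicting a shadow or boundary clause; pinning down exactly which boundary clauses make $\varphi$ \emph{equivalent} to (not merely implied by) separability is where the real work lies, and where I would expect to spend most of the effort.
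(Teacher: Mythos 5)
Your high-level strategy (reduce to 2-SAT, exploit that the Carath\'eodory number is $2$ to turn hull-membership into pairwise intervals) is the same as the paper's, but the proposal stops short of the two ingredients that actually make the reduction work, and you explicitly defer the hardest one. First, the preprocessing: taking $\cm(A)$ and $\cm(B)$ once is not enough. The paper first guesses an edge $u_iu_{i+1}$ on a shortest $A$--$B$ path (so that the two sides become \emph{osculating}, i.e.\ disjoint and joined by an edge) and then iterates the map $(A,B)\mapsto(\cm(A/B),\cm(B/A))$ to a fixpoint, the \emph{shadow closure}. Your encoding never produces an adjacent pair $a\in A$, $b\in B$, and applies shadows only once from the original seeds; both are needed for the key structural fact (Lemma~\ref{x_0}) that every vertex of $\partial R$ on the residue $R=V\setminus(A^*\cup B^*)$ is adjacent to \emph{all} of $\partial_B A^*\cup\partial_A B^*$. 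Without that fact the residue's boundary has no usable structure.

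Second, and decisively, the ``boundary clauses'' are exactly where the theorem lives, and you leave them unspecified (``turning this component-wise condition into binary clauses is the delicate part''; ``where the real work lies''). The clique condition of Lemma~\ref{mconv-recall}(1) is per connected component of the complement, and the components depend on the assignment, so it is not a priori expressible by clauses on pairs of vertices. The paper escapes this only because, after the shadow closure of an osculating pair, Lemma~\ref{x_0} localizes the entire cut inside $\partial R$ near the osculating edge; the clique condition then collapses to the binary constraint ``nonadjacent vertices of $\partial R$ receive opposite values''. Even then, the converse direction (satisfying assignment $\Rightarrow$ m-halfspace) is a nontrivial induced-path chase (Theorem~\ref{l:halfspaces-from-SAT}, (iii)$\Rightarrow$(i)), not a routine verification; your fallback of ``check the output with Lemma~\ref{mconv-recall}(1) and output $H$'' gives no completeness guarantee if the check fails. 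As written, the proposal is a correct plan with the central lemma missing, so it does not constitute a proof.
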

As a consequence of \Cref{thm:monophonic-halfspace-separation}, by testing whether $A=\{u\}$ and $B=\{v\}$ are separable for every $u,v\in V$, we obtain:
\begin{corollary}
In polynomial time one can decide if $V(G)$ admits a proper bipartition into m-convex sets in~$G$ (that is, if there exists $H\in\Hm(G)$ such that $\varnothing\subsetneq H \subsetneq V$).
\end{corollary}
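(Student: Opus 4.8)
The plan is to reduce directly to \Cref{thm:monophonic-halfspace-separation}. The key observation I would use is the following equivalence: $G$ admits a proper bipartition into m-convex sets, i.e.\ there is some $H\in\Hm(G)$ with $\varnothing\subsetneq H\subsetneq V$, if and only if there is an ordered pair $(u,v)\in V\times V$ with $u\neq v$ such that the singletons $\{u\}$ and $\{v\}$ are separable by complementary m-halfspaces.

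To establish the equivalence I would argue in both directions. First, suppose $H\in\Hm(G)$ with $\varnothing\subsetneq H\subsetneq V$; pick any $u\in H$ and any $v\in V\setminus H=\cH$. Then $H\in\Hm(G)$ with $\{u\}\subseteq H$ and $\{v\}\subseteq\cH$, so $\{u\}$ and $\{v\}$ are separated by $H$. Conversely, if $\{u\}$ and $\{v\}$ are separated by complementary m-halfspaces $H,\cH$ with $u\in H$ and $v\in\cH$, then $u\in H$ forces $H\neq\varnothing$ and $v\in\cH$ forces $H\neq V$, so $H$ exhibits a proper bipartition. Both implications are immediate, so the only content is the equivalence statement itself.

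Given the equivalence, the algorithm simply runs the procedure of \Cref{thm:monophonic-halfspace-separation} on every input $A=\{u\}$, $B=\{v\}$ over all $u,v\in V$ with $u\neq v$; it outputs the first halfspace returned by a successful call, and reports that no proper bipartition exists if all $n(n-1)$ calls fail. Correctness is immediate from the equivalence, and the running time is $O(n^2)$ times the polynomial running time of the separation algorithm, hence polynomial. There is essentially no obstacle here — the statement is a direct corollary. I would only remark that one can trim the number of calls to $O(n)$ by fixing a base vertex $r$ and running, for each $v\neq r$, the separation procedure on both $(\{r\},\{v\})$ and $(\{v\},\{r\})$: any proper bipartition $H$ places $r$ on one of its two sides, and that case is caught by one of these two orientations.
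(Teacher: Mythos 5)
Your proposal is correct and matches the paper's argument exactly: the paper likewise obtains the corollary by running the separation algorithm of \Cref{thm:monophonic-halfspace-separation} on $A=\{u\}$, $B=\{v\}$ for every pair $u,v\in V$. The equivalence you spell out is immediate, and your remark on reducing to $O(n)$ calls via a fixed base vertex is a valid minor optimization.
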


This answers an open question by \citet{gonzalez2020covering}, who proved that finding a partition into $k \ge 2$ m-convex sets is NP-hard for $k \ge 3$, leaving open the case $k=2$.

\begin{algorithm}[h]
\caption{\HalfspaceSeparation}
\label{algorithm-sep}
\label{alg:halfspaceseparation}
\LinesNumbered
\DontPrintSemicolon
\SetAlgoNoEnd
\KwInput{a graph $G=(V,E)$ and two nonempty subsets $A,B\subseteq V$}
\KwOutput{a halfspace $H \in \Hm(G)$ separating $A$ and $B$ if one exists, \NO\ otherwise}
 compute a shortest $A$-$B$ path $P=(u_1,\ldots,u_k)$\;
 \For{each edge $u_iu_j \in P$}{
    $(A^*,B^*)=\ShadowClosure(A \cup \{u_i\},B \cup \{u_{i+1}\})$ \;
    \If{$A^*\cap B^* = \varnothing$ \label{alg:halfspaceseparation:if}}{
        compute $R = V\setminus (A^*\cup B^*)$, its boundary $\partial R$, and $S_x$ for every $x \in R$\; \label{algoline:X_P_Q}
        construct the 2-SAT formula $\Phi$ described in \cref{sub:2SATred}\;
        \If{$\Phi$ has a satisfying assignment $\alpha : R \to \{0,1\}$}{
            \textbf{return} $A \cup \{x \in R : \alpha(a_x)=1\}$ \label{algoline:return_H}
        }
    }
}
\textbf{return} \NO\;
\end{algorithm}

The pseudocode of \HalfspaceSeparation, the algorithm behind \Cref{thm:monophonic-halfspace-separation}, is given in \Cref{algorithm-sep}; the rest of this section describes the algorithm and proves the theorem itself.
Intuitively, the goal of \HalfspaceSeparation\ is to cast the problem of finding a halfspace $H \in \Hm(G)$ that separates $A$ and $B$ as a 2-SAT instance.
This could be done by associating a boolean variable $x$ to every vertex in $V$, and expressing the convexity of $H$ through constraints in the form ``if $x,y$ are both in $H$, and $z \in \cm(\{x,y\})$, then $z \in H$'', too.
Unfortunately, doing so for arbitrary $x,y,z$ would require \emph{three} literals per clause, yielding a 3-SAT formula.
We prove however that, after a certain preprocessing (see \Cref{sub:shadowclos}), the constraints of the remaining instance can indeed be expressed using a 2-SAT formula (see \Cref{sub:2SATred}).

\subsection{The shadow closure}\label{sub:shadowclos}
The first step of our algorithm finds those vertices that, for every pair of complementary monophonic halfspaces, can be ``easily'' seen to lie on the same side as $A$ (or $B$).
To this end we shall use the notion of \emph{shadow closure}.

Let $G=(V,E)$ and $A,B \subseteq V$ be the input to the halfspace separation problem.
Recall that we can assume $G$ is connected.
If there exists $H \in \Hm(G)$ that separates $A$ and $B$, then there exists an edge $uv \in E(G)$ such that $A \cup \{u\} \subseteq H$ and $B \cup \{v\} \subseteq H^c$.
In fact, let $P=u_1,\ldots,u_k$ be a shortest path between any two vertices $a\in A$ and $b \in B$.
It is immediate to see that $H \in \Hm(G)$ separates $A$ and $B$ if and only if $H \in \Hm(G)$ separates $A \cup \{u_i\}$ and $B \cup \{u_{i+1}\}$ for some $i$.
Now suppose that there exists $H \in \Hm(G)$ that separates $\cm(A/B)$ and $\cm(B/A)$.
Since $\cm(A/B) \supseteq A/B \supseteq A$ and $\cm(B/A) \supseteq B/A \supseteq B$, then $H$ separates $A$ and $B$, too.
It is not hard to prove that the converse holds as well: 
\begin{lemma}[\citealt{chepoi2024separation}]
Let $A,B \subseteq V$ and $H \in \Hm(G)$. If $A\subseteq H$ and $B\subseteq \cH$, then $\cm(A/B)\subseteq H$ and $\cm(B/A)\subseteq \cH$. 
\end{lemma}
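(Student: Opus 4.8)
The plan is to reduce everything to the defining property of shadows together with the m-convexity of the two complementary halfspaces $H$ and $\cH$. Since $H$ and $\cH=V\setminus H$ are both m-convex, it suffices to prove the weaker containments $A/B\subseteq H$ and $B/A\subseteq\cH$: applying the m-hull operator $\cm(\cdot)$, which is monotone and fixes m-convex sets, then immediately upgrades these to $\cm(A/B)\subseteq\cm(H)=H$ and $\cm(B/A)\subseteq\cm(\cH)=\cH$. So the whole argument boils down to the point-containment statement for shadows, which unwinds the definition $A/B=\{x\in V:\cm(B\cup\{x\})\cap A\ne\varnothing\}$ (here $\conv=\cm$ since we work in the convexity space $(V,\cM)$).

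To prove $A/B\subseteq H$ I would argue by contradiction. Take an arbitrary $x\in A/B$; by definition there is some $a\in A$ with $a\in\cm(B\cup\{x\})$. Suppose $x\notin H$, i.e.\ $x\in\cH$. Combined with the hypothesis $B\subseteq\cH$ this gives $B\cup\{x\}\subseteq\cH$, and m-convexity of $\cH$ yields $\cm(B\cup\{x\})\subseteq\cH$. Hence $a\in\cH$, contradicting $a\in A\subseteq H$. Therefore $x\in H$, and since $x$ was arbitrary, $A/B\subseteq H$; taking m-hulls finishes $\cm(A/B)\subseteq H$.

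The containment $\cm(B/A)\subseteq\cH$ follows by the symmetric argument, exchanging the roles of $A$ and $B$ and of $H$ and $\cH$ and using the m-convexity of $H$ in place of that of $\cH$. I do not expect any genuine obstacle here: the only ingredients are the definition of the shadow, the two hypotheses $A\subseteq H$ and $B\subseteq\cH$, and the elementary facts that a convex set equals its own convex hull and that the hull operator is monotone. If anything, the only point worth stating explicitly is that in the monophonic convexity space $\conv$ in the general shadow definition specializes to $\cm$, so that $A/B$ and $B/A$ here are exactly the monophonic point-shadows used later in the algorithm.
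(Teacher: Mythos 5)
Your proof is correct and complete: the reduction to the point-containment $A/B\subseteq H$ via monotonicity and idempotence of $\cm$ on m-convex sets, followed by the contradiction argument using $\cm(B\cup\{x\})\subseteq\cH$ when $x\in\cH$, is exactly the standard argument (the paper itself does not reprove this lemma but cites \citealt{chepoi2024separation}, and your argument matches what that reference does). Nothing is missing.
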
 
We conclude that $H$ separates $A$ and $B$ \emph{if and only if} $H$ separates $\cm(A/B)$ and $\cm(B/A)$.

Now we iterate the argument.
Let $A^0=A$ and $B^0=B$, and for every $i=1,2,\ldots$ let $A^i = \cm(A^{i-1}/B^{i-1})$ and $B^i = \cm(B^{i-1}/A^{i-1})$.
By the argument above, $H$ separates $A^{i-1}$ and $B^{i-1}$ if and only if it separates $A^i$ and $B^i$.
Moreover $A^{i-1} \subseteq A^i$ and $B^{i-1} \subseteq B^i$, thus the sequences $(A_i)_{i \ge 1}$ and $(B_i)_{i \ge 1}$ converge to some sets $A^*$ and $B^*$.
We conclude that $H$ separates $A$ and $B$ if and only if it separates $A^*$ and $B^*$.
We call the pair of sets $(A^*,B^*)$ the \emph{shadow closure} of $(A,B)$; and we say that a pair of sets $(A,B)$ is \emph{shadow-closed} whenever $A=\cm(A/B)$ and $B=\cm(B/A)$.
Since $\cm$ can be computed in polynomial time, see \Cref{mconv-recall}, then the iterative procedure above (formalized as \ShadowClosure\ in \Cref{alg:shadowclosure}) computes the shadows closure in polynomial time.
\begin{algorithm}[h]
\label{algorithm-shadowclosure}
\LinesNumbered
\DontPrintSemicolon
\SetAlgoNoEnd
\KwInput{a graph $G=(V,E)$ and two nonempty subsets $A,B\subseteq V$}
\KwOutput{the shadow closure of $(A,B)$}
\While{$A \ne A/B$ \textbf{or}  $B\ne B/A$}{
$A= \cm(A/B)$ and $B= \cm(B/A)$\;
}
\textbf{return} $(A,B)$\;
\caption{\ShadowClosure}
\label{alg:shadowclosure}
\end{algorithm}

By the discussion above, we need only decide in polynomial time, for each $i=1,\ldots,k-1$, whether the shadow closure of $(A \cup \{u_i\},B \cup\{u_{i+1}\})$ is separable by a monophonic halfspace.
This is what \HalfspaceSeparation\ does at lines \ref{algoline:X_P_Q}, as we shall prove in the next section.

\subsection{A reduction to 2-SAT}
\label{sub:2SATred}
Let $A,B \subseteq V$.
We say $A,B$ \emph{osculate} if they are disjoint and adjacent; that is, if $A\cap B = \varnothing$ and $u \sim v$ for some $u \in A$ and $v \in B$.
Note that, if $A^*,B^*$ satisfy the condition at line \ref{alg:halfspaceseparation:if} of \HalfspaceSeparation, then $(A^*,B^*)$ is a shadow-closed osculating pair.
Let then $(A,B)$ be a shadow-closed osculating pair of m-convex sets of $G$. The \emph{boundary of $A$ with respect to $B$} $\partial_B A$ is the set of all  $a\in A$ having a neighbor $b\in B$. Call the set $R=V\setminus (A\cup B)$ the \emph{residue} and let $\partial R$ be its boundary.
Clearly, each pair $(H,\cH)$ of halfspaces separating $A,B$ has the form $H=A\cup A^+, \cH=B\cup B^+$ for a partition $(A^+,B^+)$  of $R$.
To compute this partition $(A^+,B^+)$ of $R$ we define a 2-SAT formula $\Phi'$ that can be constructed in polynomial time.
We then define a refined version, $\Phi$.
The two formulas are equivalent; the advantage of $\Phi'$ is its simplicity, and the advantage of $\Phi$ will be apparent in the proofs of \Cref{sec:decomposition-VC-dim}.

Both $\Phi'$ and $\Phi$ have one boolean variable $a_x$ for every $x \in R$.
The meaning is that $a_x$ indicates whether $x$ is on the same side of $H$ as $A$.
The formulas contain different types of constraints.
First, for each $x\in R$ let $S_x$ be the set of all $x_0\in \partial R$ such that there exist $a\in \partial_B A,b\in \partial_A B$ with $a\sim b$ and $x_0\in \cm(x,a)\cap \cm(x,b)$. 
The first two types of constraints are called \emph{equality constraints} and \emph{difference constraints} and are defined by setting: 
\begin{align}
    a_x&=a_{x_0} \text{ for any } x\in  R \text{ and } x_0\in S_x\,,\label{eq:1satconstraints} \\
      a_{x_0}&\ne a_{y_0} \text{ if } x_0,y_0\in \partial R \text{ and } x_0 \nsim y_0\,\label{eq:11satconstraints}.
\end{align}
The third set consists of \emph{implication constraints} and is defined by setting: 
\begin{equation}\begin{aligned}\label{eq:33satconstraints}
    a_x\twoheadrightarrow_A a_{y} \text{ for any } x, y\in R  \text{ such that } y\in \cm(x,z)  \text{ for some } z\in A \,,\\
    a_x\twoheadrightarrow_B a_{y} \text{ for any } x,y\in R  \text{ such that }  y\in \cm(x,z)  \text{ for some } z\in B \,.\\
\end{aligned}\end{equation}
We also consider a particular subset of implication constraints for $x,y\in \partial R$.
A vertex $x \in R$ is said to $A$-\emph{imply} $y$ (notation $x\rightarrow_A y$) if there exists an induced path $(x,\ldots,y,z)$ with $z\in A$. Analogously, $x$ is said to $B$-\emph{imply} $y$ (notation $x\rightarrow_B y$)
if there exists an induced path $(x,\ldots,y,z)$ with $z\in B$.  
We consider the following constraints:
\begin{equation}\begin{aligned}\label{eq:3satconstraints}
    a_x\rightarrow_A a_{y} \text{ for any } x, y\in \partial R  \text{ such that } x\rightarrow_A y\,,\\
    a_x\rightarrow_B a_{y} \text{ for any } x,y\in \partial R  \text{ such that } x\rightarrow_B y\,.\\
\end{aligned}\end{equation}
Clearly, if $a_x\rightarrow_A a_{y}$, then $a_x\twoheadrightarrow_A a_{y}$. 
All pairs $a_x\rightarrow_A a_y$ can be constructed in polynomial time.
For this, pick any $z\in A$ adjacent to $y$, and test whether $y$ and $x$ can be connected by a path (and thus by an induced path) in the subgraph $G[(R\setminus N(z))\cup \{ y\}]$.
Note that $x\rightarrow_A y$ if and only if some $z$ passes the test.

It is now easy to see that, in polynomial time, one can cast all constraints defined above as 2-SAT formulas, as follows:
\begin{align}
    \Phi_1 &= \bigwedge_{x \in R, x_0 \in S_x} (\overline{a_x} \vee a_{y}) \wedge ({a_x} \vee \overline{a_y}) 
    \\
    \Phi_2 &= \bigwedge_{\substack{x,y \in \partial R\\ x \nsim y}} (a_x\vee a_y) \wedge (\overline{a_x}\vee \overline{a_y})
    \\
    \Phi_3 &= \bigwedge_{\substack{x,y \in R, z \in A \\ y\in\cm(x,z)}} (\overline{a_x} \vee a_{y}) \,\,\, \wedge \bigwedge_{\substack{x,y \in R, z \in B \\ y\in\cm(x,z)}} ({a_x} \vee \overline{a_y})
    \\
    \Phi_4 &=  \bigwedge_{\substack{x,y \in \partial R\\ x \rightarrow_A y}} (\overline{a_x} \vee a_{y}) \,\,\, \wedge \bigwedge_{\substack{x,y \in \partial R \\ x \rightarrow_B y}} ({a_x} \vee \overline{a_y})
\end{align}

We denote by $\Phi'$ the 2-SAT formula consisting of all clauses corresponding to difference constraints (\ref{eq:11satconstraints}) and implication constraints (\ref{eq:33satconstraints}).
Formally,
\begin{align}
    \Phi' = \Phi_2 \wedge \Phi_3 \,. \label{eq:Phi'}
\end{align}
We denote by $\Phi$ the 2-SAT formula consisting of all clauses corresponding to equality constraints (\ref{eq:1satconstraints}),  difference constraints (\ref{eq:11satconstraints}), and implication constraints (\ref{eq:3satconstraints}). Formally,
\begin{align}
    \Phi &= \Phi_1 \wedge \Phi_2 \wedge \Phi_4 \,. \label{eq:Phi}
\end{align}

Both  $\Phi'$ and $\Phi$ can be constructed in polynomial time and can be solved in linear time \citep{AsPlTa}. 
Our main result is that the solutions of $\Phi'$ and $\Phi$ precisely characterize the m-halfspaces $(H,\cH)$ separating $A$ and $B$.
This also completes the proof of Theorem~\ref{thm:monophonic-halfspace-separation}.
\begin{theorem}\label{l:halfspaces-from-SAT} Let $(A,B)$ be a pair of osculating shadow-closed m-convex sets of $G$ and $R=V\setminus (A\cup B)$. For a partition $(A^+,B^+)$ of $R$, the following conditions are equivalent: 
\begin{enumerate}\itemsep0pt
\item[(i)] $H=A\cup A^+, \cH=B\cup B^+$ are complementary m-halfspaces of $G$;
\item[(ii)] $A^+=\{ x\in R: \alpha(a_x)=1\}$ and $B^+=\{ x\in R: \alpha(a_x)=0\}$ for a solution $\alpha$ of $\Phi'$;
\item[(iii)] $A^+=\{ x\in R: \alpha(a_x)=1\}$ and $B^+=\{ x\in R: \alpha(a_x)=0\}$ for a solution $\alpha$ of $\Phi$.
\end{enumerate}
\end{theorem}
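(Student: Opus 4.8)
The plan is to prove the three conditions equivalent through the cycle (i) $\Rightarrow$ (ii) $\Rightarrow$ (iii) $\Rightarrow$ (i). The main structural tool is \Cref{mconv-recall}(1): a set $S$ is m-convex iff for every connected component $Q$ of $G[S^c]$ the trace $\partial_Q S$ of its boundary on $Q$ is a clique. I would begin with two consequences. First, m-convex sets induce connected subgraphs (a shortest, hence induced, path between two vertices in distinct components of $G[S]$ must leave $S$), so applying \Cref{mconv-recall}(1) to both $H$ and $\cH$ yields the reformulation: $H=A\cup A^+$ and $\cH=B\cup B^+$ are complementary m-halfspaces iff $G[H]$ and $G[\cH]$ are connected and $\partial H$ and $\partial\cH$ are cliques. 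Second, and this is where shadow-closedness is used, every vertex of $\partial R$ has a neighbour in $A$ and a neighbour in $B$ (otherwise the shadow-closure identities $\cm(A\cup\{x\})\cap B=\cm(B\cup\{x\})\cap A=\varnothing$, valid for $x\in R$, would be violated by an induced path through such a vertex); this is why the difference constraints~\eqref{eq:11satconstraints} are imposed precisely on $\partial R$. (If $R=\varnothing$ the claim is immediate, since $A,B$ are then already complementary m-halfspaces.)

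For (i) $\Rightarrow$ (ii): given complementary m-halfspaces $H=A\cup A^+$, $\cH=B\cup B^+$, set $\alpha(a_x)=1$ iff $x\in A^+$. The implication clauses $\Phi_3$ hold by idempotency of $\cm$: if $x\in H$ and $z\in A$ then $\cm(x,z)\subseteq\cm(H)=H$, so $y\in\cm(x,z)\cap R$ must lie in $A^+$, which is exactly $a_x\twoheadrightarrow_A a_y$; symmetrically for $\twoheadrightarrow_B$ using $\cm(\cH)=\cH$. The difference clauses $\Phi_2$ hold because, by the structural fact above, two non-adjacent vertices $x_0,y_0\in\partial R$ placed on the same side---say in $H$---both have a $B$-neighbour, hence both lie in the clique $\partial H$, contradicting $x_0\nsim y_0$.

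For (ii) $\Rightarrow$ (iii): a short syntactic step. Any solution of $\Phi'=\Phi_2\wedge\Phi_3$ satisfies $\Phi_1$: if $x_0\in S_x$, witnessed by an osculating edge $a\sim b$ with $a\in\partial_B A$, $b\in\partial_A B$, $x_0\in\cm(x,a)\cap\cm(x,b)$, then $\Phi_3$ contains $a_x\twoheadrightarrow_A a_{x_0}$ (as $x_0\in\cm(x,a)$, $a\in A$) and $a_x\twoheadrightarrow_B a_{x_0}$ (as $x_0\in\cm(x,b)$, $b\in B$), whose conjunction forces $\alpha(a_x)=\alpha(a_{x_0})$. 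It also satisfies $\Phi_4$, since every clause of $\Phi_4$ is already a clause of $\Phi_3$: if $x\rightarrow_A y$ via an induced path $(x,\dots,y,z)$ with $z\in A$, then $y\in\MonInt_m(x,z)\subseteq\cm(x,z)$, so $a_x\twoheadrightarrow_A a_y$ is in $\Phi_3$ (and symmetrically for $\rightarrow_B$). As $\Phi_2$ is common to $\Phi'$ and $\Phi$, every solution of $\Phi'$ solves $\Phi=\Phi_1\wedge\Phi_2\wedge\Phi_4$.

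For (iii) $\Rightarrow$ (i)---the crux---take a solution $\alpha$ of $\Phi$, set $A^+=\{x\in R:\alpha(a_x)=1\}$, $B^+=R\setminus A^+$, $H=A\cup A^+$, $\cH=B\cup B^+$; this is a proper bipartition since $A,B\neq\varnothing$. By symmetry (swap $A$ and $B$, complement $\alpha$) it suffices to show $H$ is m-convex. Suppose not; by \Cref{mconv-recall}(1) there is a component $Q$ of $G[\cH]$ and non-adjacent $u,v\in Q$, both adjacent to $H$; the goal is to extract a violated clause of $\Phi$. The key point is that, thanks to the shadow closure, every such obstruction is ``local'': one shows that $u,v$ can be replaced by vertices of $\partial R$ (tracing, from $u$ and $v$, induced paths inside $R$ that witness $A$- or $B$-implications, and invoking the equality clauses $\Phi_1$), and that these boundary vertices are then forced onto one side of the candidate halfspace by a chain of $\Phi_4$-clauses while remaining pairwise non-adjacent---contradicting a $\Phi_2$-clause. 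I expect this implication to carry essentially all of the difficulty: it needs the fine analysis of how induced paths traverse the residue $R$ (of the kind used again in \Cref{sec:decomposition-VC-dim}) together with the structural description of shadow-closed osculating pairs from the first step; the other two implications are comparatively short.
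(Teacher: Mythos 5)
Your implications (i)$\Rightarrow$(ii) and (ii)$\Rightarrow$(iii) follow the paper's route. The syntactic step is exactly the paper's: every clause of $\Phi=\Phi_1\wedge\Phi_2\wedge\Phi_4$ already occurs among the clauses of $\Phi'=\Phi_2\wedge\Phi_3$ (the two clauses encoding $a_x=a_{x_0}$ for $x_0\in S_x$ come from $x_0\in\cm(x,a)$ and $x_0\in\cm(x,b)$, and each $\Phi_4$-clause is a $\Phi_3$-clause). For (i)$\Rightarrow$(ii) you replace the paper's \Cref{x_0} by the weaker fact that every vertex of $\partial R$ has neighbours in both $A$ and $B$; this does suffice for $\Phi_2$ via \Cref{mconv-recall}(1), but your parenthetical justification of that fact is itself only a gesture --- proving it already requires the induced-path construction of \Cref{x_0} (choose the neighbour $z\in A$ of $x_0$ closest to the osculating edge $ab$ and splice a shortest $(z,a)$-path inside $A$ with the edges $x_0z$ and $ab$ to contradict shadow-closedness).

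The genuine gap is (iii)$\Rightarrow$(i), which you explicitly leave as a strategy rather than a proof, and which is the substance of the theorem: it certifies that every satisfying assignment of the \emph{weaker} formula $\Phi$ still yields a halfspace. This is not a routine ``localize the obstruction to $\partial R$ and chain $\Phi_4$-clauses'' argument. The paper's proof is a careful case analysis that needs (a) the full strength of \Cref{x_0} (every $x_0\in\partial R$ is adjacent to \emph{all} of $\partial_B A\cup\partial_A B$), (b) the metric consequence $d(x,a)=d(x,b)$ of shadow-closedness, and (c) the key construction for an endpoint $x\in A^+$ of the offending induced path whose path-neighbour $z$ lies deep in $R$: the neighbour $x_0$ of $a$ on a shortest $(a,x)$-path belongs to $S_q$ for every $q$ on that path and also to $S_z$, so the equality clauses $\Phi_1$ force $z$ into $H$, a contradiction that pushes both endpoints into $\partial R$, where a $\Phi_2$-clause finally forces them adjacent and collapses the path. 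None of these steps appears in your sketch, and the sketch even has the obstruction oriented the wrong way: by \Cref{mconv-recall}(1), if $H$ fails to be m-convex the non-adjacent pair lies in $\partial_Q H\subseteq H$ (vertices of $H$ with neighbours in a component $Q$ of $G[\cH]$), not in $Q$ itself. As written, the proposal does not establish (iii)$\Rightarrow$(i) and hence does not prove the theorem.
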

To prove \cref{l:halfspaces-from-SAT} we need the following technical lemma.
\begin{lemma} \label{x_0} Any $x_0\in \partial R$ is adjacent to all vertices of $\partial_B A\cup \partial_A B$. Consequently, for any $x\in R$, the set $S_x$ is nonempty and $S_x$ separates $x$ from any vertex of $A\cup B$. 
\end{lemma}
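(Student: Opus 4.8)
\emph{Setup and a consequence of shadow-closedness.} Write $R=V\setminus(A\cup B)$; recall that $A,B$ are m-convex, disjoint, adjacent, and shadow-closed, i.e.\ $A=\cm(A/B)$ and $B=\cm(B/A)$. The plan first records what this buys for any vertex $x_0\in R$: since $A/B\subseteq\cm(A/B)=A$ and $x_0\notin A$, we get $x_0\notin A/B$, which by the definition of the shadow means $\cm(B\cup\{x_0\})\cap A=\varnothing$; by monotonicity of the hull, $\cm(\{x_0,b'\})\cap A=\varnothing$ for every $b'\in B$, so every induced $(x_0,b')$-path avoids $A$. Symmetrically, every induced $(x_0,a')$-path avoids $B$ for every $a'\in A$.

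\emph{The adjacency claim.} Next I would show that every $x_0\in\partial R$ is adjacent to all of $\partial_B A\cup\partial_A B$. By the symmetry between $A$ and $B$, assume $x_0$ has a neighbour $a_0\in A$, and let $C$ be the connected component of $G[A^{c}]$ containing $x_0$, so $x_0\in\partial_C A$. The key step is $B\subseteq C$: for $b'\in B$, an induced $(x_0,b')$-path lies in $A^{c}$ by the previous paragraph, hence connects $x_0$ to $b'$ inside $G[A^{c}]$, forcing $b'\in C$. Thus $\partial_B A\subseteq\partial_C A$, and $\partial_C A$ is a clique by \Cref{mconv-recall}(1); hence $x_0$ is adjacent to all of $\partial_B A$. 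For $b\in\partial_A B$, I would pick a neighbour $a''\in A$ of $b$; then $a''\in\partial_B A$, so $x_0\sim a''$, and if $x_0\not\sim b$ then $(x_0,a'',b)$ is an induced path, giving $a''\in\cm(\{x_0,b\})\cap A$, a contradiction; hence $x_0\sim b$. I expect this to be the main obstacle: the inclusion $B\subseteq C$ cannot be obtained from m-convexity alone---a priori $B$ could lie in a different component of $G[A^{c}]$ than $x_0$---and it is precisely shadow-closedness that forbids this.

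\emph{Nonemptiness of $S_x$.} Here I would fix an osculating edge $ab$ with $a\in\partial_B A$ and $b\in\partial_A B$ (it exists since $A,B$ osculate), and for $x\in R$ take an induced $(x,a)$-path $P=(x=p_0,\dots,p_t=a)$, which avoids $B$ by the first paragraph. Let $j$ be least with $p_j\in A$; then $p_{j-1}\in R$ is adjacent to $p_j\in A$, so $p_{j-1}\in\partial R$ and $p_{j-1}\sim b$ by the adjacency claim. Let $i^{*}\le j-1$ be least with $p_{i^{*}}\sim b$; since $i^{*}<j$ and $P$ avoids $B$, $p_{i^{*}}\in R$, hence $p_{i^{*}}\in\partial R$; moreover $(p_0,\dots,p_{i^{*}},b)$ is an induced $(x,b)$-path, so $p_{i^{*}}\in\cm(\{x,b\})$, while $p_{i^{*}}\in P\subseteq\cm(\{x,a\})$. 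Thus $p_{i^{*}}\in S_x$, witnessed by the edge $ab$.

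\emph{$S_x$ separates $x$ from $A\cup B$.} For the separation claim, if $x\in\partial R$ then $x\in S_x$ and it is trivial, so I would assume $x\notin\partial R$ and suppose for contradiction that some $z\in A\cup B$ is joined to $x$ by a path avoiding $S_x$. Take a shortest $(x,A\cup B)$-path $W=(x=w_0,\dots,w_r=z)$ avoiding $S_x$; by minimality $W$ is induced and $w_0,\dots,w_{r-1}\in R$, so $w_{r-1}\in\partial R$ and thus $w_{r-1}\sim a$ by the adjacency claim. Let $i_a\le r-1$ be least with $w_{i_a}\sim a$, so $(w_0,\dots,w_{i_a},a)$ is an induced $(x,a)$-path; since $w_{i_a}\in R$ is adjacent to $a\in A$ we get $w_{i_a}\in\partial R$, hence $w_{i_a}\sim b$, again by the adjacency claim. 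Let $i_b\le i_a$ be least with $w_{i_b}\sim b$, so $(w_0,\dots,w_{i_b},b)$ is an induced $(x,b)$-path; then $w_{i_b}\in\partial R$ lies on both induced paths, so $w_{i_b}\in\cm(\{x,a\})\cap\cm(\{x,b\})$, i.e.\ $w_{i_b}\in S_x$ --- but $w_{i_b}$ lies on $W$, contradicting the choice of $W$. Hence $S_x$ meets every $(x,z)$-path with $z\in A\cup B$. Apart from Step 2, everything here is routine path surgery combined with the clique property of boundaries of m-convex sets.
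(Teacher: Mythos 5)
Your Step 1 is correct, and your Steps 3 and 4 are also correct --- indeed they are a more careful elaboration of the second assertion than the one-sentence argument the paper gives for it. The problem is Step 2, which you yourself identify as the crux. The clique guaranteed by \Cref{mconv-recall}(1) for the component $C$ of $G[A^c]$ is $\partial_C A=N(C)\cap A$, a subset of $A$: it consists of the vertices \emph{of $A$} having a neighbour in $C$. Since $x_0\in C\subseteq A^c$, you have $x_0\notin\partial_C A$, so the clique property (together with your correct observation $B\subseteq C$) only tells you that $a_0$ and the vertices of $\partial_B A$ are pairwise adjacent \emph{inside $A$}; it says nothing about edges between $x_0$ and $A$. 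The inference ``hence $x_0$ is adjacent to all of $\partial_B A$'' is a non sequitur, and it is precisely the statement the lemma asserts.

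The gap is repairable from what you already have. The clique property does give $a_0\sim a$ for every $a\in\partial_B A$. Now suppose $x_0\nsim a$ and let $b\in B$ be a neighbour of $a$ (so $a\neq a_0$, hence $a_0\sim a$). If $x_0\sim b$, then $(x_0,b,a)$ is an induced $(x_0,a)$-path meeting $B$, contradicting your Step 1. If $x_0\nsim b$ and $a_0\sim b$, then $(x_0,a_0,b)$ is an induced $(x_0,b)$-path meeting $A$, again contradicting Step 1. If $x_0\nsim b$ and $a_0\nsim b$, then $(x_0,a_0,a,b)$ is induced and meets $A$, the same contradiction. Hence $x_0\sim a$, after which your argument for $x_0\sim b$ with $b\in\partial_A B$ goes through. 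For comparison, the paper avoids the component detour altogether: it fixes an osculating pair $a\sim b$, chooses the neighbour $z$ of $x_0$ in $A$ closest to $a$, splices a shortest $(z,a)$-path (contained in $A$ by m-convexity) into an induced $(x_0,b)$-path to force $x_0\sim b$, and then uses the induced path $(a,b,x_0)$ to force $x_0\sim a$ --- the same shadow-closedness contradictions, applied directly.
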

\begin{proof} Suppose  $x_0$ is adjacent to $z\in A\cup B$, say $z\in A$. Pick $a\in \partial_B A$ and $b\in \partial_A B$ with $a\sim b$.
Suppose that $x_0\nsim b$ and assume that among all neighbors of $x_0$ in $A$, $z$ is closest to $a$.  Let $P'$ be a shortest $(z,a)$-path (if $x_0\sim a$, then $z=a$). 
Since $A$ is m-convex, $P'\subseteq A$. From the choice of $z$ and since $x_0\nsim b$, the $(x_0,b)$-path $P''$ consisting of $P'$ plus the edges $zx_0$ and $ab$ is induced. 
This implies that $z\in A/b\subseteq A/B$, contradicting that $(A,B)$ is shadow-closed and $x_0\in R$. Hence $x_0\sim b$. If $x_0\nsim a$, then $x_0\in b/a\subseteq B/A$, contradicting again 
that the pair $(A,B)$ is shadow-closed.  This establishes the first assertion. Since any induced path $P$ from $x\in R$ to a vertex of $A\cup B$ contains a vertex $x_0\in \partial R$, from the first assertion we get $P\cap S_x\ne\varnothing$ and thus $S_x$ is nonempty and separates $x$ from $A\cup B$. 
\end{proof}
\vspace*{-1em}
\begin{proof}[of \Cref{l:halfspaces-from-SAT}] To prove (i)$\Rightarrow$(ii), let $(H,\cH)$ be a pair of complementary halfspaces with $A\subseteq H$ and $B\subseteq \cH$. Let $A^+=H\cap R$ and $B^+=\cH\cap R$.
Note that by Lemma \ref{x_0} both $H\cap \partial R$ and $\cH\cap \partial R$ are cliques, so the difference constraints (\ref{eq:11satconstraints}) hold. If 
$x\in A^+$ and $y\in \cm(x,z)\cap R$ for some $z\in A$, then the m-convexity of $H$ implies that $y\in H$, thus $y\in A^+$. Thus the implication constraints (\ref{eq:33satconstraints}) hold too. 

Now, we show (ii)$\Rightarrow$(iii). The equality constraints  (\ref{eq:1satconstraints}) and the implication constrains (\ref{eq:3satconstraints}) are particular implication constrains from (\ref{eq:33satconstraints}). Consequently,  $\Phi$ is a sub-formula of $\Phi'$, and thus each solution of $\Phi'$ is a solution of $\Phi$. 

Finally, to show that (iii)$\Rightarrow$(i), pick any satisfying assignment $\alpha$ of $\Phi$. Let $A^+=\{ x\in R: \alpha(a_x)=1\}, B^+=\{ x\in R: \alpha(a_x)=0\}$ and set $H=A\cup A^+, \cH=B\cup B^+$.  
Suppose by way of contradiction that $H$ is not m-convex. Then $H$ contains two non-adjacent vertices
$x,x'$ which can be connected outside $H$ by an induced path $P$. We can suppose without loss of generality that all vertices of $P\setminus \{ x,x'\}$ belong to $\cH$.  Since $H=A\cup A^+$ and $A$ is m-convex, either $x,x'\in A^+$ or $x\in A^+$ and $x'\in A$. 

First, let  $x\in A^+$ and $x'\in A$. If $x\in R\setminus \partial R$, by Lemma~\ref{x_0}, $P$ intersect $S_x$ in a vertex $x_0$. 
By equality constraints (\ref{eq:1satconstraints}), $x_0$ belongs to $H$, contrary to the assumption that $P\setminus \{ x,x'\}\subseteq \cH$. Now, let $x\in \partial R$.
Let $y$ be the neighbor of $x'$ in $P$. Then $y\in P\setminus \{ x,x'\}\subseteq \cH=B\cup B^+$. If $y\in B$, then $x'\in \partial_B A$ and by Lemma \ref{x_0}
we get that $x\sim x'$, which is impossible. Therefore $y\in R$. Since $y\sim x'$ and $x'\in A$, necessarily $y\in \partial R$. Consequently,  $x\rightarrow_A y$.
By conditions (\ref{eq:3satconstraints}),  $a_x=a_{y}=1$, contrary to the assumption that $y\in \cH$.

Now, let $x,x'\in A^+$. We assert that $x,x'\in \partial R$.  Let $z$ be the neighbor of $x$ in $P$ and $z'$ be the neighbor of $x'$ in $P$. First we show that $z,z'\in B\cup \partial R$. Pick any $a\in \partial_B A$ and $b\in \partial_A B$ with $a\sim b$. Since the pair $(A,B)$ is shadow-closed, $d(x,a)=d(x,b)$. Suppose by way of contradiction that $z\notin B$ and $z$ is not adjacent to $a$ and $b$ (by Lemma \ref{x_0}, $z$ is adjacent to $a$ if and only if $z$ is adjacent to $b$). 
Pick any shortest path $Q$ from $a$ to $x$ and let $x_0$ be the neighbor of $a$ in $Q$. From Lemma \ref{x_0}, $x_0$ is also adjacent to $b$. Since $d(x,a)=d(x,b)$,  $x_0\in \cm(x,a)\cap \cm(x,b)$, thus $x_0\in S_x$.  Furthermore, $x_0\in S_q$ for any vertex $q\in Q'=Q\setminus \{ a\}$. From the equality constraints (\ref{eq:1satconstraints}), we deduce that all vertices of $Q'$ belong to $H$. Consequently, $z$ does not belong to $Q'$.  %
Since $z\nsim a$, from the path consisting of the edge $zx$, the path $Q'$, and the edge $x_0a$ we can extract an induced
$(z,a)$-path passing via $x_0$. Analogously, since $z\nsim b$, from the path consisting of the edge $zx$, the path $Q$, and the edge $x_0b$ we can extract an induced $(z,b)$-path passing via $x_0$.
Consequently, $x_0\in \cm(z,a)\cap \cm(z,b)$, showing that $x_0\in S_z$. By (\ref{eq:1satconstraints}), we get $a_z=a_{x_0}=a_x$, thus $z$ belongs to $H$, contrary to our assumption. Thus
$z,z'\in B\cup \partial R$. If $z\in B$, then $x\in \partial R$ by the definition of $\partial R$. %
If $z\in \partial R$,
then $z\sim a,b$. If $x\notin \partial R$, then $x\nsim a,b$, thus  $(x,z,a)$ and $(x,z,b)$ are induced paths, yielding $z\in S_x$. By (\ref{eq:1satconstraints}) we get $a_z=a_x$, contrary to $z\in \cH$. 
Therefore, $x,x'\in \partial R$. %
Since $x,x'\in H\cap \partial R$ from  (\ref{eq:11satconstraints}), we get $x\sim x'$ and thus $P=(x,x')$. This concludes the proof.
\end{proof}

\begin{remark} By Theorem \ref{l:halfspaces-from-SAT}, both $\Phi'$ and $\Phi$ have the same set of solutions. The advantage of $\Phi'$ is its simplicity. The advantage of $\Phi$, which is a subformula of $\Phi'$ and has refined constraints, is to highlight the structure of the solution space---this is the subject of \cref{sec:decomposition-VC-dim}. 
\end{remark}

From Theorem~\ref{l:halfspaces-from-SAT} we obtain the following
corollary.
\begin{corollary} \label{c:bijection-halfspaces-assignments} Let $(A,B)$ be a pair of osculating shadow-closed m-convex sets of $G$. 
The number of halfspaces $H$ with $A\subseteq H$ and $B\subseteq \cH$ equals the number of satisfying assignments of the 2-SAT formula $\Phi$ (or, equivalently, $\Phi'$).
\end{corollary}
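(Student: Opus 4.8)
The plan is to build the claimed equality out of two elementary bijections, with Theorem~\ref{l:halfspaces-from-SAT} supplying the only real content. Write $R = V \setminus (A \cup B)$ for the residue. First I would check that the map $H \mapsto (H \cap R,\, \cH \cap R)$ is a bijection from the set of halfspaces $H \in \Hm(G)$ with $A \subseteq H$ and $B \subseteq \cH$ onto the set of partitions $(A^+, B^+)$ of $R$ satisfying condition~(i) of Theorem~\ref{l:halfspaces-from-SAT}. It is well defined: since $B \subseteq \cH$ we have $B \cap H = \varnothing$, and together with $A \subseteq H$ this gives $H \cap (A \cup B) = A$, so $A^+ = H \cap R$ and $B^+ = \cH \cap R$ genuinely partition $R$, and $H = A \cup A^+$, $\cH = B \cup B^+$. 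This last identity also shows injectivity, and surjectivity onto the partitions satisfying~(i) is immediate from the definition of~(i).

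Second, a boolean assignment $\alpha$ to the variables $\{a_x : x \in R\}$ is literally the same datum as a partition of $R$, via $A^+ = \{x \in R : \alpha(a_x) = 1\}$ and $B^+ = \{x \in R : \alpha(a_x) = 0\}$; this is a bijection between all assignments and all partitions of $R$. Under this identification, Theorem~\ref{l:halfspaces-from-SAT} states precisely that a partition of $R$ satisfies~(i) if and only if the associated assignment satisfies $\Phi'$, and equivalently if and only if it satisfies $\Phi$.

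Composing the two bijections, restricted to the subsets they cut out, yields a bijection between the halfspaces $H$ with $A \subseteq H$ and $B \subseteq \cH$ and the satisfying assignments of $\Phi'$, and likewise with those of $\Phi$. In particular all three cardinalities coincide, which is the corollary. (That $\Phi$ and $\Phi'$ have the same number of solutions is also visible directly from the equivalence (ii)$\Leftrightarrow$(iii) of the theorem, applied to every partition of $R$.)

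There is no substantive obstacle here: the only step requiring a moment's care is verifying that the map on halfspaces is a genuine bijection onto partitions of $R$ --- i.e., that none of $R$ is ``used up'' by $A$ or $B$ and that $H$ can be recovered from $A^+$ --- and this is exactly the observation $H = A \cup (H \cap R)$, valid because $A \subseteq H$ and $B \cap H = \varnothing$. Everything else is bookkeeping on top of Theorem~\ref{l:halfspaces-from-SAT}.
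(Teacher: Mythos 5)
Your proof is correct and takes exactly the route the paper intends: the paper states the corollary as an immediate consequence of Theorem~\ref{l:halfspaces-from-SAT}, and your two bookkeeping bijections (halfspaces $\leftrightarrow$ partitions of $R$ satisfying (i), assignments $\leftrightarrow$ partitions of $R$) composed with the theorem's equivalence (i)$\Leftrightarrow$(ii)$\Leftrightarrow$(iii) is precisely that argument, spelled out. The one point worth making explicit --- that $H = A \cup (H\cap R)$ is recoverable because $A$ and $B$ are disjoint (they osculate) and $R = V\setminus(A\cup B)$ --- you handle correctly.
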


\section{The structure of monophonic halfspaces} \label{sec:decomposition-VC-dim}
This section investigates the structure of m-halfspaces.
Our main result is a \emph{decomposition theorem} saying that every m-halfspace $H$ that separates an osculating shadow-closed pair $(A,B)$ %
can be written in a nice form.
More precisely, every such $H$ can be written as a disjoint union of certain subsets, called \emph{cells}.
This characterization yields efficient algorithms for all the learning settings below, including empirical risk minimization and compression, as well as tight bounds on the VC-dimension of the class of halfspaces separating $A$ and $B$.

For an edge $ab$  of $G$, recall that $\Hm(ab)=\{H\in\Hm(G): a\in H, b\in \cH \}$ is the set of m-halfspaces $H \in \Hm(G)$ that contain $a$ and avoid $b$. %
We show that all such m-halfspaces can be represented in a specific form.
More precisely, we describe a family of disjoint non-empty subsets of vertices $\{C_1,\dots,C_k\}$, each of which can in turn be partitioned into two, as $C_i=C'_i\mathbin{\dot{\cup}} C''_i$
(one of two sets $C_i'$, $C_i''$ can be empty).  %
We additionally partition $\{C_1,\dots,C_k\}$ into $\scC_1,\dots,\scC_p$ and denote $\scC_{< \ell} = \scC_1\cup\cdots\cup \scC_{\ell-1}$ for $\ell\in[p]$, similarly we define $\scC_{> \ell}$.
We call this structure $(\{C_1',C_1'',\dots,C_k',C_k''\}, \scC_1,\dots,\scC_p)$ the \emph{cell decomposition} of $\Hm(ab)$ (see \Cref{section:cells} for details).
Our cell decomposition theorem, stated next, says that $H \in \Hm(ab)$ if and only if $H$ can be expressed in a certain form using the cell decomposition.
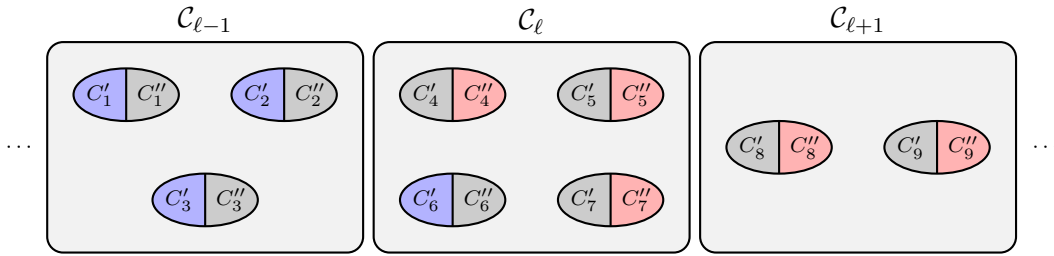
\begin{figure}
    \centering
    \scriptsize
\begin{tikzpicture}[scale=.7]

  \node at (-.5, 2) {$\cdots$};

  \begin{scope}[xshift=0cm]
    \draw[fill=gray!10, draw=black, line width=0.8pt, rounded corners=6pt]
      (0,0) rectangle (6,4);
    \node at (3,4) [anchor=south, font=\normalsize] {$\scC_{\ell-1}$};

    \foreach \cx/\cy/\i in {1.5/3/1, 4.5/3/2, 3/1/3} {
      \begin{scope}
        \clip (\cx-1,\cy-0.5) rectangle (\cx,\cy+0.5);
        \fill[blue!30] (\cx,\cy) ellipse [x radius=1cm, y radius=0.5cm];
      \end{scope}
      \begin{scope}
        \clip (\cx,\cy-0.5) rectangle (\cx+1,\cy+0.5);
        \fill[black!20] (\cx,\cy) ellipse [x radius=1cm, y radius=0.5cm];
      \end{scope}
      \draw[line width=0.8pt] (\cx,\cy) ellipse [x radius=1cm, y radius=0.5cm];
      \draw[line width=0.8pt] (\cx, {\cy - 0.5}) -- (\cx, {\cy + 0.5});
      \node at ({\cx - 0.5}, \cy) {$C_{\i}'$};
      \node at ({\cx + 0.5}, \cy) {$C_{\i}''$};
    }
  \end{scope}

  \begin{scope}[xshift=6.2cm]
    \draw[fill=gray!10, draw=black, line width=0.8pt, rounded corners=6pt]
      (0,0) rectangle (6,4);
    \node at (3,4) [anchor=south, font=\normalsize] {$\scC_{\ell}$};

\begin{scope}
  \clip (1.5-1, 3-0.5) rectangle (1.5, 3+0.5);
  \fill[black!20] (1.5,3) ellipse [x radius=1cm, y radius=0.5cm];
\end{scope}
\begin{scope}
  \clip (1.5, 3-0.5) rectangle (1.5+1, 3+0.5);
  \fill[red!30] (1.5,3) ellipse [x radius=1cm, y radius=0.5cm];
\end{scope}
\draw[line width=0.8pt] (1.5,3) ellipse [x radius=1cm, y radius=0.5cm];
\draw[line width=0.8pt] (1.5, {3 - 0.5}) -- (1.5, {3 + 0.5});
\node at ({1.5 - 0.5}, 3) {$C_{4}'$};
\node at ({1.5 + 0.5}, 3) {$C_{4}''$};

\begin{scope}
  \clip (1.5-1, 1-0.5) rectangle (1.5, 1+0.5);
  \fill[blue!30] (1.5,1) ellipse [x radius=1cm, y radius=0.5cm];
\end{scope}
\begin{scope}
  \clip (1.5, 1-0.5) rectangle (1.5+1, 1+0.5);
  \fill[black!20] (1.5,1) ellipse [x radius=1cm, y radius=0.5cm];
\end{scope}
\draw[line width=0.8pt] (1.5,1) ellipse [x radius=1cm, y radius=0.5cm];
\draw[line width=0.8pt] (1.5, {1 - 0.5}) -- (1.5, {1 + 0.5});
\node at ({1.5 - 0.5}, 1) {$C_{6}'$};
\node at ({1.5 + 0.5}, 1) {$C_{6}''$};

\begin{scope}
  \clip (4.5-1, 3-0.5) rectangle (4.5, 3+0.5);
  \fill[black!20] (4.5,3) ellipse [x radius=1cm, y radius=0.5cm];
\end{scope}
\begin{scope}
  \clip (4.5, 3-0.5) rectangle (4.5+1, 3+0.5);
  \fill[red!30] (4.5,3) ellipse [x radius=1cm, y radius=0.5cm];
\end{scope}
\draw[line width=0.8pt] (4.5,3) ellipse [x radius=1cm, y radius=0.5cm];
\draw[line width=0.8pt] (4.5, {3 - 0.5}) -- (4.5, {3 + 0.5});
\node at ({4.5 - 0.5}, 3) {$C_{5}'$};
\node at ({4.5 + 0.5}, 3) {$C_{5}''$};

\begin{scope}
  \clip (4.5-1, 1-0.5) rectangle (4.5, 1+0.5);
  \fill[black!20] (4.5,1) ellipse [x radius=1cm, y radius=0.5cm];
\end{scope}
\begin{scope}
  \clip (4.5, 1-0.5) rectangle (4.5+1, 1+0.5);
  \fill[red!30] (4.5,1) ellipse [x radius=1cm, y radius=0.5cm];
\end{scope}
\draw[line width=0.8pt] (4.5,1) ellipse [x radius=1cm, y radius=0.5cm];
\draw[line width=0.8pt] (4.5, {1 - 0.5}) -- (4.5, {1 + 0.5});
\node at ({4.5 - 0.5}, 1) {$C_{7}'$};
\node at ({4.5 + 0.5}, 1) {$C_{7}''$};
  \end{scope}

  \begin{scope}[xshift=12.4cm]
    \draw[fill=gray!10, draw=black, line width=0.8pt, rounded corners=6pt]
      (0,0) rectangle (6,4);
    \node at (3,4) [anchor=south, font=\normalsize] {$\scC_{\ell+1}$};

    \foreach \cx/\cy/\i in {1.5/2/8, 4.5/2/9} {
      \begin{scope}
        \clip (\cx-1,\cy-0.5) rectangle (\cx,\cy+0.5);
        \fill[black!20] (\cx,\cy) ellipse [x radius=1cm, y radius=0.5cm];
      \end{scope}
      \begin{scope}
        \clip (\cx,\cy-0.5) rectangle (\cx+1,\cy+0.5);
        \fill[red!30] (\cx,\cy) ellipse [x radius=1cm, y radius=0.5cm];
      \end{scope}
      \draw[line width=0.8pt] (\cx,\cy) ellipse [x radius=1cm, y radius=0.5cm];
      \draw[line width=0.8pt] (\cx, {\cy - 0.5}) -- (\cx, {\cy + 0.5});
      \node at ({\cx - 0.5}, \cy) {$C_{\i}'$};
      \node at ({\cx + 0.5}, \cy) {$C_{\i}''$};
    }
  \end{scope}

  \node at (19, 2) {$\cdots$};

\end{tikzpicture}
    \caption{Visualization of the cell decomposition. Vertices are partitioned into cells (the oval halves). We first choose a block $\scC_\ell$. In this block an arbitrary selection of either $C_i'$ or $C_i''$ for each $i$ is valid. In the blocks before $\scC_\ell$ each first half $C_i'$ must be selected and in blocks after $\scC_\ell$ each second half $C_i''$ must be selected. The union of any such combination of cells yields a valid m-halfspace (see \Cref{thm:unifiedDecomposition}).}
    \label{fig:decomposition}
\end{figure}

\begin{theorem}[Cell decomposition of m-halfspaces, simplified version]\label{thm:unifiedDecomposition}
    Let $G=(V,E)$ be a graph and $ab \in E$ with $\Hm(ab)\neq\varnothing$. There exist a set $A^*\subseteq V$ and a cell decomposition $(\{C_1',C_1'',\dots,C_k',C_k''\}, \scC_1,\dots,\scC_p)$ of $\Hm(ab)$
    such that $H\in\Hm(ab)$ if and only if
        \[
  H=A^* \cup\left(\bigcup\limits_{C_i\in\scC_{< \ell}\,\cup\,\scC_\ell'} C'_i\right)\cup  \left(\bigcup\limits_{C_i\in(\scC_\ell\setminus\scC_\ell')\,\cup\,\scC_{> \ell}}C''_i\right)
  \quad \text{ for some } \ell\in[p] \text{ and }\scC_\ell'\subseteq \scC_\ell\,.
    \]
    Moreover, computing $A^*$ and the cell decomposition requires polynomial time.
\end{theorem}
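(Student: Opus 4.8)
The plan is to read the cell decomposition directly off the solution space of the $2$-SAT formula $\Phi$ built in \Cref{sub:2SATred}, using \Cref{l:halfspaces-from-SAT,c:bijection-halfspaces-assignments} as the bridge between m-halfspaces of $\Hm(ab)$ and satisfying assignments of $\Phi$.

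\emph{Step 1: reduction to the analysis of $\Phi$.} Since $\Hm(ab)\neq\varnothing$, run $\ShadowClosure$ on $(\{a\},\{b\})$ and call its output $(A,B)$. Then $(A,B)$ is an osculating shadow-closed pair of m-convex sets: it is shadow-closed by construction, $a\in A$ and $b\in B$ with $a\sim b$, and $A\cap B=\varnothing$ because every $H\in\Hm(ab)$ satisfies $A\subseteq H$ and $B\subseteq\cH$ (the shadow-closure argument of \Cref{sub:shadowclos}). Hence $\Hm(ab)=\{H\in\Hm(G):A\subseteq H\subseteq V\setminus B\}$, so with $R:=V\setminus(A\cup B)$ the map $H\mapsto H\cap R$ is, by \Cref{l:halfspaces-from-SAT}, a bijection from $\Hm(ab)$ onto the satisfying assignments of $\Phi$ over $\{a_x:x\in R\}$, with inverse $\alpha\mapsto A\cup\{x\in R:\alpha(a_x)=1\}$. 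It therefore suffices to describe the solution set of $\Phi$; the set $A^*$ and the cell decomposition will be manufactured entirely from $\Phi$.

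\emph{Step 2: cells from the implication digraph.} Form the implication digraph $D$ of $\Phi$ on the literals $\{a_x,\overline{a_x}:x\in R\}$; it is solvable (since $\Hm(ab)\neq\varnothing$), so $a_x$ and $\overline{a_x}$ lie in different strongly connected components for every $x$, and the SCCs come in genuine twin pairs $\{\sigma,\bar\sigma\}$ under negation. Two features of $\Phi$ organize $D$. First, by \Cref{x_0} every $x\in R$ has $\varnothing\neq S_x\subseteq\partial R$, and the equality clauses \eqref{eq:1satconstraints} place $a_x$ together with all $a_{x_0}$, $x_0\in S_x$, in one SCC; thus each vertex of $R\setminus\partial R$ is welded to a vertex of $\partial R$. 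Second, each difference clause $a_{x_0}\ne a_{y_0}$ puts $a_{x_0}$ and $\overline{a_{y_0}}$ in one SCC. Accordingly, define a \emph{cell} to be the set of $x\in R$ attached to a single twin pair, namely those $x$ with $a_x\in\sigma\cup\bar\sigma$, split into $C'=\{x:a_x\in\sigma\}$ and $C''=\{x:a_x\in\bar\sigma\}$ (so $C=C'\mathbin{\dot\cup}C''$). Each cell is \emph{rigid}: in every solution $(C',C'')$ receives $(1,0)$ or $(0,1)$ uniformly, so exactly one of $C',C''$ is placed in $H$. Let $A^*:=A\cup\{x\in R:a_x\text{ is forced to }1\}$; the vertices forced to $0$ together with $B$ form the leftover $V\setminus(A^*\cup\bigcup_iC_i)$, which lies in $\cH$ for all $H\in\Hm(ab)$ (in particular $b$, hence $B$, is forced to $0$). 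The \emph{genuine cells} $C_1,\dots,C_k$ are the non-forced ones: for each, both $C_i'\subseteq H$ and $C_i''\subseteq H$ occur in some solution.

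\emph{Step 3: the block structure.} It remains to analyze the rigid choices across genuine cells. Put $C_i\vartriangleleft C_j$ when $D$ has a directed path from $a_{x_i}$ to $a_{x_j}$ for chosen representatives $x_i\in C_i$; by \eqref{eq:3satconstraints} and the welding of Step 2 this means exactly ``$C_i'\subseteq H$ forces $C_j'\subseteq H$''. Since the difference and equality clauses are internal to single cells, the only clauses of $\Phi$ relating distinct genuine cells are implication clauses. The crux is to prove that these are governed by a single threshold: that (i) no directed path of $D$ joins a literal of one genuine cell to the negation of a literal of a different genuine cell (no ``NAND'' or ``OR'' between genuine cells), and (ii) $\vartriangleleft$ is a \emph{weak order}, i.e.\ $\vartriangleleft$-incomparability is transitive. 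Granting (i)--(ii), the $\vartriangleleft$-classes, linearly ordered, are the blocks $\scC_1,\dots,\scC_p$; in any solution the set $\{C_i:C_i'\subseteq H\}$ is an up-set of $\vartriangleleft$, hence equals $\scC_{<\ell}\cup\scC_\ell'$ for some $\ell\in[p]$ and $\scC_\ell'\subseteq\scC_\ell$, and conversely every such set is realizable; re-inserting the forced halves ($C_i'$ for blocks before $\ell$, $C_i''$ for blocks after $\ell$) and $A^*$ gives precisely the displayed formula. To prove (i)--(ii) one unwinds \eqref{eq:3satconstraints}: $x\rightarrow_A y$ asserts an induced path $(x,\dots,y,z)$ with $z\in A$, dually for $\rightarrow_B$; combined with \Cref{x_0} (every vertex of $\partial R$ is adjacent to all of $\partial_B A\cup\partial_A B$) and the fact that the difference clauses make each of the two sides of $\partial R$ a clique, one argues that a cell's position relative to the $A$--$B$ frontier is a single coordinate on a line, which rules out incomparable-but-linked configurations.

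\emph{Step 4: complexity and main obstacle.} Computing $(A,B)$ is polynomial by \Cref{mconv-recall}(4); the sets $S_x$ and the relations $\rightarrow_A,\rightarrow_B$, hence $\Phi$, are polynomial by the construction of \Cref{sub:2SATred}; and $D$, its SCCs, the parts $C_i',C_i''$, the order $\vartriangleleft$, and the blocks are all computable in polynomial time by standard graph algorithms, giving the last sentence of the theorem. The one non-routine point — and where I expect essentially all the work to lie — is Step 3: an arbitrary $2$-SAT instance has a solution space that is merely the family of closed sets of a skew-symmetric condensation, far from this rigid, so collapsing it to a single linearly ordered threshold must genuinely exploit the monophonic geometry, i.e.\ the interplay between the path-reachability meaning of $\rightarrow_A$, $\rightarrow_B$ and the clique structure on $\partial R$ coming from \Cref{x_0} and the difference clauses.
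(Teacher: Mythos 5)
Your overall strategy coincides with the paper's: reduce to the 2-SAT formula $\Phi$ via \Cref{l:halfspaces-from-SAT}, carve $R$ into twin cells given by equivalence of variables, and show that the implication structure between cells collapses to a linearly ordered sequence of blocks with free choice inside a single block. The problem is that essentially the whole content of the theorem sits in your Step~3, and there you only \emph{state} the two key claims (i) and (ii) and gesture at how one might prove them. In the paper those claims are the bulk of \Cref{sec:decomposition-VC-dim}: \Cref{l:forgotten} (after eliminating trivial variables, $x\rightarrow_A y$ forces $x\sim y$), \Cref{l:non-adjacent}, \Cref{l:in-out-twin}, \Cref{l:emptytwin} (an $a$-arc into a cell kills that cell's twin), \Cref{l:in-neighbors} (the trichotomy ``either $C''\rightarrow_A C'$ or $C\rightarrow_A C''$''), \Cref{l:emptytwinsT}, culminating in \Cref{l:structureoGamma}, \Cref{l:quasiorder}, \Cref{t:reminders=ideals} (halfspaces $=$ conflict-free lower sets) and the three-case analysis of \Cref{thm:decompositionCases}. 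Saying that ``a cell's position relative to the $A$--$B$ frontier is a single coordinate on a line'' is a restatement of what must be proved, not a proof; nothing in your sketch explains why an arbitrary pattern of implication clauses between cells cannot arise.

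Two of your structural claims are moreover incorrect or ill-posed as stated. First, claim (i) fails precisely in the paper's Case~3 of \Cref{thm:decompositionCases}, where $\cT=\{C,\twin(C)\}$ and the order is nontrivial: there one has $\twin(C)\le C'\le C$ for every other cell $C'$, i.e.\ directed paths from literals of the pair $\{C,\twin(C)\}$ through literals of other cells back to the \emph{negated} literals of that same pair. Relatedly, your order on cell-pairs is not well defined, because reachability in the implication digraph depends on whether the representative of $C_i$ is taken in $C_i'$ or in $C_i''$; the paper avoids this by ordering the individual halves, characterizing halfspaces as conflict-free lower sets, and then proving separately that at most one nonempty twin pair can be comparable to anything and that it must sit at an extreme of the order (which is why Case~3 still fits the unified form, with $\{C,\twin(C)\}$ as its own block). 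Second, your cells are the SCC classes of the implication digraph, whereas the paper's are the classes of variables \emph{semantically} equivalent across all solutions (computed by substitution tests). If these notions differed for $\Phi$, two of your ``genuine cells'' could be locked together in every solution yet be incomparable and hence placed in the same block, breaking the ``arbitrary selection within $\scC_\ell$'' direction of the statement. So the proposal identifies the correct target and the correct intermediate objects, but the actual argument --- and in particular the delicate handling of nonempty twins --- is missing.
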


The details of the decomposition and the full proof are quite technical and require setting up a good deal of notation and definitions, which we start doing in the next sections.

\subsection{A closer look at the 2-SAT formula}
Let $(A,B)=\ShadowClosure(\{ a\},\{ b\})$.
If $A\cap B\ne \varnothing$, then $\Hm(ab)=\varnothing$. 
If instead $A\cap B=\varnothing$, then $(A,B)$ is an osculating shadow-closed pair, see \cref{sub:2SATred}.
Let $\Phi(ab)$ be the 2-SAT formula $\Phi$ as defined in \cref{eq:Phi} for the pair $(A,B)$.
By Corollary~\ref{c:bijection-halfspaces-assignments}, the satisfying assignments of $\Phi(ab)$ are in bijection with the pairs of complementary halfspaces of $\Hm(ab)$. \emph{From now on we assume that $\Phi(ab)$ is satisfiable and thus $\Hm(ab) \ne \varnothing$.}

The result itself, \cref{t:reminders=ideals}, is quite technical, and requires setting up a good deal of notation and definitions, which we start doing next.
Let $(A,B)=\ShadowClosure(\{ a\},\{ b\})$.
If $A\cap B\ne \varnothing$, then $\Hm(ab)=\varnothing$. 
If instead $A\cap B=\varnothing$, then $(A,B)$ is an osculating shadow-closed pair, see \cref{sub:2SATred}.
Let $\Phi(ab)$ be the 2-SAT formula $\Phi$ as defined in \cref{eq:Phi} for the pair $(A,B)$.
By Corollary~\ref{c:bijection-halfspaces-assignments}, the satisfying assignments of $\Phi(ab)$ are in bijection with the pairs of complementary halfspaces of $\Hm(ab)$. \emph{From now on we assume that $\Phi(ab)$ is satisfiable and thus $\Hm(ab) \ne \varnothing$.}
We recall some definitions about 2-SAT, in which we follow \cite{Fe}. Let $\Phi$ be any 2-SAT formula. A variable $a_x$ is \emph{trivial} if it has the same value in all solutions of $\Phi$.
Testing if $a_x$ is trivial takes linear time by substituting $a_x=1$ and $a_x=0$ and checking satisfiability of the resulting formulas.
Two nontrivial variables $a_x$ and $a_y$ are \emph{equivalent} if $a_x=a_y$ in all solutions of $\Phi$, or if $a_x=\overline{a_y}$ in all solutions of $\Phi$. Being equivalent is an equivalence relation, thus the set of variables of $\Phi$ is partitioned into equivalence classes.  
Testing if $a_x$ and $a_y$ are equivalent takes linear time by substituting $a_x, a_y$ with $\{ 0,0\},\{0,1\},\{ 1,0\},\{ 1,1\}$ and testing satisfiability of the resulting formulas.
Each equivalence class is thus partitioned in two \emph{groups}; all variables in a group have the same value in all solutions, and variables in different groups of the same equivalence class have different values in all solutions.
Note also that some groups may be empty.
Clearly, in linear time one can compute the equivalence classes of $\Phi$ and their partitions into groups.

We apply the previous definitions to the 2-SAT formula $\Phi(ab)$ and compute in polynomial time its trivial variables, equivalence classes of variables, and the partition of each equivalence class into two parts.
We can suppose that $\Phi(ab)$ does not contain trivial
variables.  Indeed, if $a_x$ is trivial, say $a_x=1$ in all solutions, then %
$x\in H$ for all $H\in\Hm(ab)$. 
Then we can set $(A,B)= \ShadowClosure(\cm(A\cup \{ x\}),B)$ and define a new 2-SAT formula with respect to the new shadow-closed pair $(A,B)$ and smaller
set $R=V\setminus (A\cup B)$. Under this condition, we have a simple characterization of the implication constraints of \cref{eq:3satconstraints}:

\begin{lemma} \label{l:forgotten} If $x,y\in \partial R$ and $x\rightarrow_A y$ or $x\rightarrow_B y$, then $x\sim y$ in $G$. Furthermore, for $x,y\in \partial R$ with $x\sim y$
we have $x\rightarrow_A y$ (respectively, $x\rightarrow_B y$) if and only if
there exists $u'\in A\setminus \partial_B A$ (respectively, there exists $v'\in B\setminus \partial_A B$) adjacent to $y$ and not adjacent to $x$.
\end{lemma}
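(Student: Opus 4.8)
The plan is to replace the (possibly long) witnessing paths in the definitions of $\rightarrow_A$ and $\rightarrow_B$ by short, explicit induced paths, and then invoke Lemma~\ref{x_0} together with shadow-closedness. Throughout I may assume $x\neq y$ (the case $x=y$ being vacuous, as the implication constraint $a_x\rightarrow a_x$ is a tautology), and it suffices to treat $\rightarrow_A$: the $\rightarrow_B$ case follows by exchanging the roles of $A$ and $B$ (so ``no neighbor in $B$'' becomes ``no neighbor in $A$'', etc.), since Lemma~\ref{x_0}, shadow-closedness, and osculation are all symmetric in $A$ and $B$.

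For the first assertion I would argue by contradiction: suppose $x\rightarrow_A y$ via an induced path $(x,\dots,y,z)$ with $z\in A$, but $x\nsim y$. Three facts about $z$ are then immediate: $z\sim y$ (they are consecutive on the path); $z\nsim x$ (since $y$ lies strictly between the endpoints $x$ and $z$ on an induced path, so $x$ and $z$ are non-consecutive); and $z\notin\partial_B A$ (otherwise $z\in\partial_B A\cup\partial_A B$ and Lemma~\ref{x_0} would force $x\sim z$). In particular $z$ has no neighbor in $B$. Now pick $b'\in\partial_A B$ (nonempty, since $(A,B)$ osculate); by Lemma~\ref{x_0}, $x\sim b'$ and $y\sim b'$. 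Then $(x,b',y,z)$ is an induced path: the edges $xb'$, $b'y$, $yz$ are present, the non-edges $xy$ (hypothesis), $xz$, $b'z$ ($z$ has no neighbor in $B$) are absent, and the four vertices are pairwise distinct since $x,y\in R$, $b'\in B$, $z\in A$. Hence $b'$ lies on an induced $(x,z)$-path, so $b'\in\cm(\{x,z\})$. But $x\in R$ means $x\notin B$, and shadow-closedness gives $B=B/A$, whence $\cm(A\cup\{x\})\cap B=\varnothing$; by monotonicity of $\cm$ and $z\in A$ this yields $\cm(\{x,z\})\cap B=\varnothing$, contradicting $b'\in B$.

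For the second assertion, fix $x\sim y$ in $\partial R$. The ``if'' direction is direct: if $u'\in A\setminus\partial_B A$ is adjacent to $y$ and not to $x$, then $(x,y,u')$ is an induced path ending in a vertex of $A$, so $x\rightarrow_A y$. For ``only if'', take a witnessing induced path $(x,\dots,y,z)$ with $z\in A$; its subpath from $x$ to $y$ is an induced path with endpoints $x,y$, and since $x\sim y$ this subpath can only be the single edge $xy$ (otherwise $x,y$ would be non-consecutive on an induced path yet adjacent). So the whole path is $(x,y,z)$, inducedness forces $x\nsim z$, and then $z\notin\partial_B A$ exactly as above (via Lemma~\ref{x_0}). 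Thus $u':=z\in A\setminus\partial_B A$ is adjacent to $y$ and not to $x$, as required.

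I expect the only delicate point to be the step in the first assertion where the long witnessing path is discarded in favour of the four-vertex detour $(x,b',y,z)$: one must verify all three adjacencies (from Lemma~\ref{x_0} and consecutiveness on the original path) and all three non-adjacencies (from inducedness of the original path, from $x\nsim y$, and from $z\notin\partial_B A$), and then correctly convert ``$b'$ lies on an induced $(x,z)$-path'' into the hull membership $b'\in\cm(\{x,z\})$ that collides with shadow-closedness. Everything else is bookkeeping with the definition of $\rightarrow_A$ and with Lemma~\ref{x_0}.
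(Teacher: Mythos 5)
Your proof is correct, but for the first assertion it takes a genuinely different route from the paper. The paper dispatches that part in two lines using the 2-SAT machinery: if $x,y\in\partial R$ and $x\nsim y$, the difference constraint (\ref{eq:11satconstraints}) forces $a_x\ne a_y$ in every solution, while the implication constraint coming from $x\rightarrow_A y$ forces $a_y=1$ whenever $a_x=1$; together these make $a_x$ identically $0$, i.e.\ a trivial variable of $\Phi(ab)$, contradicting the standing assumption (set up just before the lemma) that all trivial variables have been eliminated. You instead argue purely graph-theoretically: you replace the witnessing path by the four-vertex induced path $(x,b',y,z)$ through some $b'\in\partial_A B$ and derive $b'\in\cm(x,z)\cap B$, which collides with shadow-closedness. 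I checked the adjacencies and non-adjacencies you need ($x\sim b'$ and $y\sim b'$ from \Cref{x_0}, $x\nsim z$ from inducedness of the original path, $b'\nsim z$ from $z\notin\partial_B A$, which itself follows from \Cref{x_0} and $x\nsim z$), and the hull/shadow step ($x\notin B$ plus $B=\cm(B/A)$ gives $\cm(A\cup\{x\})\cap B=\varnothing$, hence $\cm(x,z)\cap B=\varnothing$); all of this is sound. What your version buys is independence from the no-trivial-variables reduction and indeed from the 2-SAT formula altogether: it works for any osculating shadow-closed pair. What the paper's version buys is brevity, since the constraints are already on the table at that point. Your treatment of the second assertion (collapsing the witnessing path to $(x,y,z)$ and using \Cref{x_0} to place $z$ in $A\setminus\partial_B A$) is exactly the ``follows from the first assertion and the definition'' step that the paper leaves implicit.
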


\begin{proof} Suppose  $x\rightarrow_A y$ but in $G$ we have $x\nsim y$ . Then by difference constraints (\ref{eq:11satconstraints}), we have $a_x\ne a_{y}$. On the other hand, $x\rightarrow_A y$ and  (\ref{eq:3satconstraints}) imply that if $a_x=1$, then $a_{y}$ is also 1. Therefore there is no satisfying assignment of $\Phi(ab)$ such that $a_x=1$, thus $a_x$ must be a trivial variable of $\Phi(ab)$.
This shows that $x\rightarrow_A y$ implies $x\sim y$. The second assertion follows from the first assertion and the definition of the relations $x\rightarrow_A y$ and $x\rightarrow_B y$.
\end{proof}

\subsection{Cells and the graphs of cells}\label{section:cells}
Let  $\cC=\{C'_1,C''_1,\ldots,C'_k,C''_k\}$  be the partition of the vertices of $R$ defined by the partition of the set $a_x, x\in R$ of variables of $\Phi(ab)$ into groups.
Formally, $x,y\in C'_i$ or $x,y\in C''_i$ if and only if $a_x=a_y$ in all solutions of $\Phi(ab)$ and $x\in C'_i$ and $y\in C''_i$ if and only if $a_x\ne a_y$ in all solutions
of $\Phi(ab)$.  The sets $C'_1,C''_1,\ldots,C'_k,C''_k$ are called \emph{cells}  (some cells may be empty). To simplify the notation, we will denote by $\cC$ also the collection
of nonempty cells. The cells $C'_i,C''_i$ are called \emph{twins} and we set
$\twin(C'_i)=C''_i$ and $\twin(C''_i)=C'_i$ (even if one of the two cells is empty).  Denote by $\cT$ the set of all nonempty cells $C$ such that $\twin(C)\ne\varnothing$. Clearly, if $C\in \cT$, then $\twin(C)\in \cT$. From the definition it follows that
each nonempty cell $C\in \cC$ intersects the set $\partial R$ in a nonempty clique, which we denote by $\partial C=C\cap \partial R$. We will repeatedly use (without explicit reference) the following properties of cells: 

\begin{lemma} \label{l:non-adjacent} Let $C$ be a nonempty cell of $\cC$.
\begin{enumerate}
\item[(1)] $\partial C\ne \varnothing$;
\item[(2)] If $x\in \partial C$, then all vertices $y\in \partial R$ not adjacent to $x$ belong to $\twin(C)$. Consequently, if $C'$ is a nonempty  cell different from $C$ and $\twin(C)$, then all vertices of $\partial C$ are adjacent to all vertices of $\partial C'$;
\item[(3)] $|\cC|\le 2\omega(G)$.
\end{enumerate}
\end{lemma}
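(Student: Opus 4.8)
The plan is to derive all three items directly from the definition of cells as \emph{groups} of nontrivial variables of $\Phi(ab)$, combined with the relevant clauses of $\Phi$ and Lemma~\ref{x_0}. Throughout, I would keep in mind that ``$a_x=a_y$ in all solutions'' (resp.\ ``$a_x\ne a_y$ in all solutions'') is by definition the same as ``$x,y$ lie in the same cell'' (resp.\ ``$x,y$ lie in twin cells''), and that this rests on our earlier reduction to the case where $\Phi(ab)$ has no trivial variables.

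For (1), take an arbitrary $x\in C\subseteq R$ and apply Lemma~\ref{x_0} to obtain some $x_0\in S_x\subseteq\partial R$. The equality constraint~\eqref{eq:1satconstraints} is a conjunct of $\Phi=\Phi(ab)$, so $a_x=a_{x_0}$ holds in every satisfying assignment; since $\Phi(ab)$ has no trivial variables, this places $x$ and $x_0$ in the same group, hence in the same cell $C$. Therefore $x_0\in C\cap\partial R=\partial C$, so $\partial C\ne\varnothing$.

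For (2), let $x\in\partial C$ and let $y\in\partial R$ with $x\nsim y$. Since $x,y$ are non-adjacent vertices of $\partial R$, the difference constraint~\eqref{eq:11satconstraints} forces $a_x\ne a_y$ in every solution, which by the definition of cells means precisely that $y$ lies in the twin of the cell of $x$, i.e.\ $y\in\twin(C)$. For the ``consequently'' part, pick a nonempty cell $C'\notin\{C,\twin(C)\}$, any $x\in\partial C$, and any $y\in\partial C'\subseteq\partial R$; if $x\nsim y$ then the previous sentence gives $y\in\twin(C)$, contradicting that $y\in C'$ while distinct cells are disjoint (if $\twin(C)=\varnothing$ this is immediate, since then no such $y$ exists at all). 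Hence $x\sim y$.

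Item (3) is the only step I expect to require a genuine argument, and the idea is to pass to one representative per cell. Let $C_1,\dots,C_t$ be the nonempty cells, so $t=|\cC|$, and by (1) choose $v_i\in\partial C_i$; the $v_i$ are pairwise distinct because cells are disjoint. By the ``consequently'' in (2), $v_i\sim v_j$ whenever $i\ne j$ and $C_j\ne\twin(C_i)$, so in the complement graph restricted to $\{v_1,\dots,v_t\}$ every vertex $v_i$ has at most one neighbour, namely $v_{\twin(C_i)}$ (when that cell is nonempty). Thus this complement graph is a disjoint union of isolated vertices and isolated edges, hence has at most $t/2$ edges; choosing one endpoint from each edge together with all isolated vertices yields an independent set of that graph of size at least $t/2$, i.e.\ a clique of $G$ of size at least $t/2$. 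Consequently $\omega(G)\ge t/2$, which is $|\cC|\le 2\omega(G)$. The main obstacle, such as it is, is simply recognizing that (2) says exactly that the ``non-adjacency graph'' among cell representatives is a matching; once that is in place, everything reduces to short bookkeeping with the definition of cells and the clauses of $\Phi(ab)$.
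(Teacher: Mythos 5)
Your proof is correct and, for items (1) and (2), follows exactly the paper's argument: (1) via Lemma~\ref{x_0} and the equality constraints \eqref{eq:1satconstraints} forcing $S_x\subseteq C$, and (2) via the difference constraints \eqref{eq:11satconstraints}. For item (3) you take a slightly different (but equally valid) route: the paper fixes one satisfying assignment and partitions $\partial R$ into the two cliques of true- and false-valued boundary vertices, giving the intermediate bound $|\cC|\le|\partial R|\le 2\omega(G)$, whereas you pick one boundary representative per cell, observe via (2) that the non-adjacency graph on representatives is a matching between twins, and extract a single clique of size at least $|\cC|/2$; both arguments are short and yield the same conclusion.
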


\begin{proof}   To prove assertion (1), pick any $x\in C$. By equality constraints (\ref{eq:1satconstraints}),  $S_x\subseteq C$. Since $S_x\ne\varnothing$, $\partial C$ is nonempty as well. 
To prove assertion (2) pick any $x,y\in \partial R$ with $x\nsim y$. From difference constraints (\ref{eq:11satconstraints}), we have $a_x\ne a_y$ in all solutions of $\Phi$, whence $x$ and $y$ belong to twin cells of $\cC$. For assertion (3) let $a$ be a solution of $\Phi$. Similarly as in (2) for all $x,y\in\partial R$ if $a_x=a_y$ then $x$ and $y$ must be adjacent. Thus $\partial R$ can be partitioned into two cliques (of size at most $\omega(G)$) for each truth value. Moreover by one each cell $C\in\cC$ overlaps with $\partial R$ and is disjoint with all other cells. thus $|\cC|\le \partial R\le 2\omega(G)$.
\end{proof}

For two non-empty cells $C,C'\in \cC$, we define an \emph{$a$-arc} $C\rightarrow_A C'$ if $\partial C$ contains a vertex $x$ and $\partial C'$ contains a vertex $y$ adjacent to $x$ such that $x\rightarrow_A y$.
Analogously, we define a \emph{$b$-arc} $C\rightarrow_B C'$  if $\partial C$ contains a vertex $x$ and $\partial C'$ contains a vertex $y$ such that $x\rightarrow_B y$. By Lemma \ref{l:forgotten}, if
$C$ contains a vertex $u$ and $C'$ contains a vertex $v$ such that $u\rightarrow_A v$ (respectively, $u\rightarrow_B v$), then necessarily $C\rightarrow_A C'$ (respectively, $C\rightarrow_B C'$).
The meaning of these arcs between cells is the following. If $C\rightarrow_A C'$, then in all solutions of $\Phi(ab)$ such that $a_{z}=1$ for any $z\in C$ we must also have $a_{z'}=1$ for any $z'\in C'$.
Equivalently, all m-halfspaces of $\Hm(ab)$ %
containing the cell $C$ must also contain $C'$.

From the definition of cells and the equivalence (\Cref{l:halfspaces-from-SAT}) between the solutions of $\Phi(ab)$ and %
m-halfspaces $\Hm(ab)$, %
we obtain the following properties:
\begin{itemize}
\item[(P1)] the halfspaces $H$ and $\cH$ have the form $H=A\cup A^+, \cH=B\cup B^+$, where $A^+$ and $B^+$ are disjoint unions of cells;
\item[(P2)] if  $C\in \cC$ with $C\subseteq H$ then  $\twin(C)\subseteq \cH$; %
\item[(P3)] if  $C\rightarrow_A C'$ and $C$ belongs to $H$, then $C'$ also belongs to $H$. Analogously, if $C\rightarrow_B C'$ and $C$ belongs to $\cH$, then $C'$ belongs to $\cH$;
\item[(P4)] if  $C$ belongs to $H$ and $C'\rightarrow_B C$, then $C'$ belongs to $H$. Analogously, if $C$ belongs to $H$ and $C'\rightarrow_A C$, then  $C'$ belongs to $\cH$.
\end{itemize}

Only property (P4) requires an argument. If $C'$ belongs to $\cH$, then $C'\rightarrow_B C$ implies that $C$ must be included in $\cH$, contrary to the assumption that $C$ belongs to $H$.

On the set $\cC$ of nonempty cells we consider the directed graphs $\oGamma_a$ and $\oGamma_b$: $\oGamma_a$ has the $a$-arcs as arcs and $\oGamma_b$ has the $b$-arcs as arcs.
For a cell $C'$ we denote by $N^-_a(C')$ the set of all in-neighbors of $C'$ in $\oGamma_a$, i.e., of all cells $C$ such that $C\rightarrow_A C'$. The set of all out-neighbors of $C'$ in $\oGamma_a$ is denoted by
$N^+_a(C')$. Set $\deg^-_a(C')=|N^-_a(C')|$ and $\deg^+_a(C')=|N^+_a(C')|$.
We continue with some properties of the directed graph $\oGamma_a$ (similar results hold for $\oGamma_b$):

\begin{lemma} \label{l:in-out-twin} For any cell $C'$, neither of the sets $N^-_a(C'),N^+_a(C')$ contains a cell $C$ and its twin $\twin(C)\ne \varnothing$.
\end{lemma}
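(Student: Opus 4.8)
The plan is to reduce the statement to the standing assumption that $\Phi(ab)$ has no trivial variables, using the halfspace-theoretic meaning of the $a$-arcs recorded in properties (P1)--(P4). Recall that, by the bijection between solutions of $\Phi(ab)$ and pairs of complementary halfspaces in $\Hm(ab)$ (\Cref{c:bijection-halfspaces-assignments}), an $a$-arc $C\rightarrow_A C'$ means exactly that every $H\in\Hm(ab)$ with $C\subseteq H$ satisfies $C'\subseteq H$ as well (this is precisely the content of (P3)). I will also use two elementary facts. First, since $C'$ is a nonempty cell and $\Phi(ab)$ is assumed to have no trivial variables, the variables $a_z$ for $z\in C'$ are nontrivial; as $C'$ is a single group, there are solutions assigning them all $1$ and solutions assigning them all $0$, i.e.\ there exist $H_1\in\Hm(ab)$ with $C'\subseteq H_1$ and $H_0\in\Hm(ab)$ with $C'\subseteq\cH_0$. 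Second, by (P1)--(P2), whenever $\twin(C)\ne\varnothing$ the two groups $C,\twin(C)$ take opposite values in every solution, so every $H\in\Hm(ab)$ satisfies \emph{exactly one} of $C\subseteq H$ or $\twin(C)\subseteq H$ (in particular no halfspace of $\Hm(ab)$ contains both $C$ and $\twin(C)$).

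For the in-neighborhood, I would argue by contradiction. Suppose $C,\twin(C)\in N^-_a(C')$ with $\twin(C)\ne\varnothing$, i.e.\ both $C\rightarrow_A C'$ and $\twin(C)\rightarrow_A C'$. Take an arbitrary $H\in\Hm(ab)$. By the second fact, either $C\subseteq H$, in which case $C\rightarrow_A C'$ gives $C'\subseteq H$, or $\twin(C)\subseteq H$, in which case $\twin(C)\rightarrow_A C'$ gives $C'\subseteq H$. Either way $C'\subseteq H$, so $C'$ is contained in every halfspace of $\Hm(ab)$, contradicting the existence of $H_0$ (equivalently, the variables of $C'$ would be trivial).

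For the out-neighborhood, suppose $C,\twin(C)\in N^+_a(C')$ with $\twin(C)\ne\varnothing$, i.e.\ both $C'\rightarrow_A C$ and $C'\rightarrow_A\twin(C)$. Apply this to the halfspace $H_1$ with $C'\subseteq H_1$: then $C'\rightarrow_A C$ forces $C\subseteq H_1$ and $C'\rightarrow_A\twin(C)$ forces $\twin(C)\subseteq H_1$, so $H_1$ contains both a cell and its nonempty twin, contradicting the second fact. This covers both cases and completes the proof.

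I do not expect a genuine obstacle here: the lemma is essentially a restatement of ``trivial variables have been eliminated'', once the arc semantics of (P1)--(P4) is in place. The only point that needs a little care is making sure the two ``easy facts'' are stated correctly --- namely that nonemptiness of $C'$ together with the absence of trivial variables really yields a halfspace containing $C'$ and one avoiding it, and that $\twin(C)\ne\varnothing$ is exactly what guarantees $C$ and $\twin(C)$ always land on opposite sides --- both of which follow directly from the bijection with solutions of $\Phi(ab)$ and the group/equivalence-class structure set up in \Cref{section:cells}.
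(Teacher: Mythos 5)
Your proof is correct and follows essentially the same route as the paper's: both cases reduce to the arc semantics (P2)--(P3) plus the standing assumption that $\Phi(ab)$ has no trivial variables. The only cosmetic difference is in the out-neighbor case, where the paper shows the variables of $C'$ would be forced to $0$ in every solution (hence trivial), while you instead pick a halfspace containing $C'$ and derive that it would contain both twins; the two arguments are logically equivalent.
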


\begin{proof}  Pick any cells $C$ and $\twin(C)$, both non-empty. Let also $\alpha$ be any solution of $\Phi(ab)$ and let $(H,\cH)$ be the pair of complementary halfspaces
of $\cH(ab)$ defined by  $\alpha$, where $A\subseteq H, B\subseteq \cH$.  By (P2), $C$ and $\twin(C)$ belong to distinct halfspaces $H$ and $\cH$, say $C\subseteq H$ and $\twin(C)\subseteq \cH$.
First suppose that both $C$ and $\twin(C)$ belong to $N^-_a(C')$. Since $C\rightarrow_A C'$ and $C$ belongs to the halfspace $H$ containing $A$, necessarily $C'\subseteq H$. This implies that all $a_y, y\in C'$ take value 1 in all true assignments $\alpha$  of $\Phi(ab)$, contrary to the assumption that $\Phi(ab)$ does not contain trivial variables.
Now suppose that both $C$ and $\twin(C)$ belong to $N^+_a(C')$. Since $C'\rightarrow_A \twin(C)$ and $\twin(C)$ belongs to the halfspace $\cH$ containing $B$, we conclude that $C'$ cannot belong to the halfspace $H$ containing $A$. This implies that all $a_y, y\in C'$ take value 0 in all true assignments $\alpha$  of $\Phi(ab)$, contrary to the assumption that $\Phi(ab)$ does not contain trivial variables.
\end{proof}

\begin{lemma} \label{l:emptytwin} If $C,C'\in \cC$ and $C\rightarrow_A C'$, then $\twin(C')=\varnothing$.
\end{lemma}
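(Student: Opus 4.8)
The plan is a proof by contradiction: assume $C \rightarrow_A C'$ and $\twin(C') \ne \varnothing$, and derive a violation of Lemma~\ref{l:in-out-twin}. First I would unpack the hypotheses. The $a$-arc $C \rightarrow_A C'$ provides $x \in \partial C$ and $y \in \partial C'$ with $x \sim y$ and $x \rightarrow_A y$; since $x,y \in \partial R$, Lemma~\ref{l:forgotten} then yields a vertex $u' \in A \setminus \partial_B A$ with $u' \sim y$ and $u' \nsim x$ (so $(x,y,u')$ is the induced path realizing $x \rightarrow_A y$). Because $\twin(C') \ne \varnothing$, Lemma~\ref{l:non-adjacent}(1) gives a vertex $y' \in \partial \twin(C')$. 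Before proceeding I would dispose of the case $C' = \twin(C)$: an $a$-arc between a cell and its non-empty twin forces $a_x = 0$ in every solution of $\Phi(ab)$ (by property (P3) the arc gives $C \subseteq H \Rightarrow C' \subseteq H$, while property (P2) gives $C \subseteq H \Rightarrow \twin(C) = C' \subseteq \cH$), contradicting the absence of trivial variables. The only other degenerate possibility is $C' = C$, which I flag as the delicate point below; assuming $C' \notin \{C,\twin(C)\}$, Lemma~\ref{l:non-adjacent}(2) applied to $x \in \partial C$ forces $x \sim y'$, since a non-edge $x \nsim y'$ would place $y'$ in $\twin(C)$.

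The core step is to prove $u' \sim y'$. Suppose not. Since $y'$ lies in the non-trivial cell $\twin(C')$, choose a halfspace $H \in \Hm(ab)$ with $y' \in H$; then $C' \subseteq \cH$, so $y \in \cH$, and the contrapositive of the arc $C \rightarrow_A C'$ gives $C \subseteq \cH$, so $x \in \cH$, whereas $u' \in A \subseteq H$. Split on whether $y \sim y'$. If $y \sim y'$, then $(y',y,u')$ is an induced path (its endpoints are non-adjacent), hence $y \in I_m(y',u') \subseteq \cm(\{y',u'\}) \subseteq H$ by m-convexity of $H$, contradicting $y \in \cH$. If $y \nsim y'$, then $(y',x,y,u')$ is an induced path, because the required non-edges $y' \nsim y$, $y' \nsim u'$, $x \nsim u'$ all hold; this path exhibits $y' \rightarrow_A y$ with $y',y \in \partial R$ and $y' \nsim y$, contradicting Lemma~\ref{l:forgotten}. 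Hence $u' \sim y'$.

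Finally, from $u' \in A \setminus \partial_B A$, $u' \sim y'$, $u' \nsim x$ and $x \sim y'$, Lemma~\ref{l:forgotten} gives $x \rightarrow_A y'$, so $C \rightarrow_A \twin(C')$ is also an $a$-arc. But then $N^+_a(C)$ contains both $C'$ and its non-empty twin $\twin(C')$, contradicting Lemma~\ref{l:in-out-twin}, which completes the proof. I expect the main difficulty to be the adjacency bookkeeping via Lemmas~\ref{x_0} and~\ref{l:non-adjacent}(2), and especially the self-loop case $C' = C$: there the derivation of $x \sim y'$ breaks down, so one must either argue that a self-loop of $\oGamma_a$ with a non-empty twin cannot exist, or redo the argument of the second paragraph, where the sub-case $y \sim y'$ still goes through verbatim but the sub-case $y \nsim y'$ requires a more careful choice of an intermediate $H$-neighbour of $y$ in place of $x$.
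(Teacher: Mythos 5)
Your proof is correct and follows essentially the same route as the paper's: establish $x\sim y'$ and $u'\sim y'$ so that $(x,y',u')$ is an induced path, deduce the second arc $C\rightarrow_A \twin(C')$, and contradict Lemma~\ref{l:in-out-twin}. The only differences are cosmetic --- in one sub-case you replace the paper's appeal to the non-triviality of the variables (via ruling out $\twin(C')\rightarrow_A C'$) by a direct m-convexity argument, and you are more explicit about excluding the degenerate cases $C'=C$ and $C'=\twin(C)$, which the paper leaves implicit in the definition of an arc between two (distinct) cells.
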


\begin{proof} By the definition of $a$-arcs and Lemma \ref{l:forgotten}, there exist two adjacent vertices $x\in \partial C, y\in\partial C'$  and a vertex $z\in A\setminus \partial_B A$ such
that $z\sim y$ and $z\nsim x$. Suppose by way of contradiction that $\twin(C')\ne \varnothing$. %
First suppose that $y$ is adjacent to a vertex $y'\in \partial\twin(C')$. By Lemma \ref{l:non-adjacent},   all vertices of $\partial C$ are adjacent to all vertices of $\partial\twin(C')$, thus $x\sim y'$. Since $\Phi(ab)$ is satisfiable, we cannot have $\twin(C')\rightarrow_A C'$. Therefore, $y'$ is adjacent to $z$ by \Cref{l:forgotten}. Consequently, $(x,y',z)$ is an induced path of $G$, yielding $x\rightarrow_A y'$ and thus $C\rightarrow_A \twin(C')$. Since we also have $C\rightarrow_A C'$, we get a contradiction with Lemma \ref{l:in-out-twin}.

Now, suppose that $y$ is not adjacent to a vertex $y'\in \partial\twin(C')$. By Lemma \ref{l:non-adjacent} $x$ is adjacent to all vertices of $\partial\twin(C')$ and thus $x\sim y'$.
If $y'\nsim z$, since $y'\nsim y$, $(y',x,y,z)$ is an induced path of $G$. This implies that $y'\rightarrow_A y$ and hence that $\twin(C')\rightarrow_A C'$. Since this is impossible, we must have
$y'\sim z$, yielding $x\rightarrow_A y'$. Consequently,  $C\rightarrow_A \twin(C')$. Since we also have $C\rightarrow_A C'$, this contradicts Lemma \ref{l:in-out-twin}.
\end{proof}

\begin{lemma} \label{l:in-neighbors} If $C,C',C''$ are three different nonempty cells such that $C\rightarrow_A C'$ and $C''\ne \twin(C)$, then either $C''\rightarrow_A C'$ or $C\rightarrow_A C''$. %
\end{lemma}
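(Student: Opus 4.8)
The plan is to produce, from the hypotheses, an explicit small configuration in $G$ and then read off the conclusion directly from \Cref{l:forgotten}. The first step is to apply \Cref{l:emptytwin} to the arc $C\rightarrow_A C'$, obtaining $\twin(C')=\varnothing$; this is the structural ingredient that makes the argument go through, since it forces $C''\neq\twin(C')$ (because $C''$ is nonempty), which is precisely what is needed to later invoke \Cref{l:non-adjacent}(2) with $C'$ as the base cell. Using the definition of an $a$-arc together with \Cref{l:forgotten}, I then fix adjacent vertices $x\in\partial C$ and $y\in\partial C'$ and a vertex $z\in A\setminus\partial_B A$ with $z\sim y$ and $z\nsim x$.

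Next I pick any vertex $w\in\partial C''$, which exists by \Cref{l:non-adjacent}(1). Since $C''$ is nonempty and distinct from both $C$ and $\twin(C)$, \Cref{l:non-adjacent}(2) yields $x\sim w$. Since $C''$ is nonempty and distinct from both $C'$ and $\twin(C')=\varnothing$, the same lemma yields $y\sim w$. The argument now splits according to whether $z$ is adjacent to $w$.

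If $w\sim z$: the vertices $x,w\in\partial R$ are adjacent, and $z\in A\setminus\partial_B A$ is adjacent to $w$ but not to $x$, so \Cref{l:forgotten} gives $x\rightarrow_A w$; as $x\in\partial C$, $w\in\partial C''$ and $x\sim w$, this is an $a$-arc $C\rightarrow_A C''$. If $w\nsim z$: the vertices $w,y\in\partial R$ are adjacent, and $z\in A\setminus\partial_B A$ is adjacent to $y$ but not to $w$, so \Cref{l:forgotten} gives $w\rightarrow_A y$; as $w\in\partial C''$, $y\in\partial C'$ and $w\sim y$, this is an $a$-arc $C''\rightarrow_A C'$. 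In either case the claimed disjunction holds.

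I do not expect a real obstacle here: once \Cref{l:emptytwin} is used to eliminate $\twin(C')$, the statement reduces to a one-line case analysis on top of \Cref{l:forgotten} and \Cref{l:non-adjacent}. The only thing requiring care is the bookkeeping in the two applications of \Cref{l:non-adjacent}(2) — the first relying on the given hypothesis $C''\neq\twin(C)$, the second on $\twin(C')=\varnothing$ just obtained — together with a quick check that $x,y,w$ indeed lie in $\partial R$, which holds since they are chosen in the boundary cliques $\partial C,\partial C',\partial C''$.
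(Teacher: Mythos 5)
Your proof is correct and follows essentially the same route as the paper's: the same choice of $x,y,z$ from the $a$-arc via \Cref{l:forgotten}, the same use of \Cref{l:emptytwin} and \Cref{l:non-adjacent}(2) to get a vertex of $\partial C''$ adjacent to both $x$ and $y$, and the same case split on its adjacency to $z$. The only difference is expository — you make the appeal to \Cref{l:non-adjacent} explicit where the paper states the adjacencies more tersely.
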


\begin{proof} By the definition of $a$-arcs and Lemma \ref{l:forgotten}, there exist $x\in \partial C, y\in \partial C'$   and  $z\in A\setminus \partial_B A$ such
that $x\sim y$, $z\sim y$, and $z\nsim x$. By Lemma \ref{l:emptytwin}, $\twin(C')=\varnothing$, hence $y$ is adjacent to all vertices of $C''$. Since $C''\ne \twin(C)$, $x$ is adjacent to all vertices of $C''\cap \partial R$. Pick any $v\in \partial C''$. Then $v\sim x,y$.
If $v\nsim z$, then $v\rightarrow_A y$, yielding $C''\rightarrow_A C'$. If $v\sim z$, then $x\rightarrow_A v$ and thus $C\rightarrow_A C''$.
\end{proof}

\begin{lemma} \label{l:emptytwinsT} If $\cC$ contains a cell $C'$ such that $\deg^-_a(C')\ge 2$, then $\cT=\varnothing$.
\end{lemma}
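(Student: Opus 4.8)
The plan is to show that any cell possessing a nonempty twin would be forced to become an in-neighbor of $C'$ in $\oGamma_a$---and then so would its twin, which is impossible. Fix two distinct cells $C_1,C_2\in N^-_a(C')$. Applying Lemma~\ref{l:emptytwin} to the arc $C_1\rightarrow_A C'$ gives $\twin(C')=\varnothing$, so $C'\notin\cT$; and by Lemma~\ref{l:in-out-twin} the set $N^-_a(C')$ contains no cell together with its twin. (We may treat $\oGamma_a$ as loopless: a self-loop $C\rightarrow_A C$ encodes only the vacuous implication ``$C\subseteq H\Rightarrow C\subseteq H$''. This ensures in particular $C_1\neq C'\neq C_2$.)

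The key claim is that $\cT\subseteq\{C_1,\twin(C_1)\}$. To prove it, take any $D\in\cT$ with $D\notin\{C_1,\twin(C_1),C'\}$. Then $C_1,C',D$ are three distinct nonempty cells with $D\neq\twin(C_1)$, so Lemma~\ref{l:in-neighbors} applied to $C_1\rightarrow_A C'$ with third cell $D$ gives $D\rightarrow_A C'$ or $C_1\rightarrow_A D$; the latter would force $\twin(D)=\varnothing$ by Lemma~\ref{l:emptytwin}, contradicting $D\in\cT$, hence $D\in N^-_a(C')$. Now $\twin(D)$ is nonempty, belongs to $\cT$, and (because $\twin(C')=\varnothing$) also avoids $\{C_1,\twin(C_1),C'\}$, so the identical argument---including the use of Lemma~\ref{l:emptytwin} to rule out $C_1\rightarrow_A\twin(D)$---yields $\twin(D)\in N^-_a(C')$. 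But then $N^-_a(C')$ contains both $D$ and $\twin(D)$, contradicting Lemma~\ref{l:in-out-twin}. Thus every $D\in\cT$ lies in $\{C_1,\twin(C_1),C'\}$, and since $C'\notin\cT$ the claim follows.

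Running the claim with $C_2$ in place of $C_1$ gives $\cT\subseteq\{C_2,\twin(C_2)\}$ as well. The pairs $\{C_1,\twin(C_1)\}$ and $\{C_2,\twin(C_2)\}$ are disjoint: $C_1\neq C_2$ by choice, and each of $C_1=\twin(C_2)$, $\twin(C_1)=C_2$, $\twin(C_1)=\twin(C_2)$ would either place a cell together with its twin into $N^-_a(C')$ or force $C_1=C_2$, all excluded by Lemma~\ref{l:in-out-twin}. Therefore $\cT=\varnothing$. The one genuinely substantive step is this self-referential application of Lemma~\ref{l:in-neighbors}: an in-neighbor of $C'$ lying outside $\{C_1,\twin(C_1),C'\}$ drags its twin into $N^-_a(C')$ too, which Lemma~\ref{l:in-out-twin} forbids. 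The rest is bookkeeping---verifying that all cells involved are distinct each time Lemmas~\ref{l:in-out-twin}, \ref{l:emptytwin}, and~\ref{l:in-neighbors} are invoked, with the loopless convention disposing of the only degenerate case.
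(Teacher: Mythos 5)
Your proof is correct and uses essentially the same mechanism as the paper's: Lemma~\ref{l:in-neighbors} combined with Lemma~\ref{l:emptytwin} forces any twin-paired cell (and then its twin) into $N^-_a(C')$, contradicting Lemma~\ref{l:in-out-twin}. The only difference is organizational — the paper handles the degenerate case $C=\twin(C_1)$ by switching to $C_2$ on the spot, whereas you derive $\cT\subseteq\{C_1,\twin(C_1)\}$ and $\cT\subseteq\{C_2,\twin(C_2)\}$ separately and conclude by disjointness of the two pairs.
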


\begin{proof} Suppose by way of contradiction that $\cT\ne\varnothing$ and let $C\in \cT$. Pick any two cells $C_1,C_2\in N^-_a(C')$. By Lemma \ref{l:emptytwin}, $\twin(C')=\varnothing$, thus $C\ne \twin(C')$. Also we can suppose that $C$ is different from  one of the cells $\twin(C_1)$ or $\twin(C_2)$, say $C\ne \twin(C_1)$. Indeed, if at least one of $\twin(C_1)$,$\twin(C_2)$ is empty, say $\twin(C_1)=\varnothing$, then clearly $C\neq\twin(C_1)$. If both $\twin(C_1)$,$\twin(C_2)$ exist, then again $C$ is different from at least one of the cells, so again without loss of generality  $C\neq\twin(C_1)$.
We assert that $C\in N^-_a(C')$. Suppose not. Then $C\notin N^-_a(C')\cup \{\twin(C_1)\}$ and by
Lemma  \ref{l:in-neighbors}, we conclude that $C_1\rightarrow_A C$. By Lemma \ref{l:emptytwin}, $\twin(C)$ must be empty, contrary to  $C\in \cT$.
Consequently, $C$ must belong to $N^-_a(C')$. Applying the same argument to $\twin(C)\in \cT$, we also conclude that $\twin(C)\in N^-_a(C')$. But this contradicts Lemma \ref{l:in-out-twin}.
This shows that $\cT=\varnothing$, as required.
\end{proof}

The previous lemmas impose strong constrains on the structure of the graphs $\oGamma_a$ and $\oGamma_b$:

\begin{proposition} \label{l:structureoGamma} For graph $\oGamma_a$,  one of the three complementary possibilities holds:
\begin{itemize}
\item[(1)] $\oGamma_a$ does not contain any arc;
\item[(2)] $\oGamma_a$ contains a vertex $C$ with $\deg^-_a(C)\ge 2$. Then $\cT=\varnothing$ and $\cC$ can be partitioned into the sets
$\cC'=N^-_a(C)$ and $\cC''=\cC\setminus N^-_a(C)$ such that for any $C'\in \cC'$ and any $C''\in \cC''$ we have $C'\rightarrow_A C''$;
\item[(3)] $\oGamma_a$ contains at least one arc and $\deg^-_a(C)\le 1$ for any $C\in \cC$. Then either $\cT=\varnothing$ or $\cT$ consists of a single pair $\{ C,\twin(C)\}$ of twins. Furthermore, in both cases,
$C\rightarrow_A C'$ holds for any cell $C'\in\cC\setminus\{C,\twin(C)\}$ and $\twin(C)$ is either empty or is an isolated vertex of $\oGamma_a$. Furthermore, there are no $a$-arcs between
two cells of $N^+_a(C)=\cC\setminus \{ C,\twin(C)\}$.
\end{itemize}
\end{proposition}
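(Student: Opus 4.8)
The three alternatives are exhaustive and mutually exclusive: either $\oGamma_a$ has no arc, giving case~(1), or $\oGamma_a$ has at least one arc, and we then split according to whether some cell has $a$-in-degree at least $2$ (case~(2)) or every cell has $a$-in-degree at most $1$ (case~(3)). So the content is to establish the structural assertions in cases~(2) and~(3), and the whole proof is an orchestration of \Cref{l:in-out-twin,l:emptytwin,l:in-neighbors,l:emptytwinsT}. The conceptual engine is \Cref{l:in-neighbors}: from one arc and a third cell (other than the twin of the arc's tail), a further arc is forced; iterating this, together with the in-degree dichotomy, pins down the shape of $\oGamma_a$.

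For case~(2), let $C$ be a cell with $\deg^-_a(C)\ge 2$. Then $\cT=\varnothing$ by \Cref{l:emptytwinsT}. Put $\cC'=N^-_a(C)$ and $\cC''=\cC\setminus N^-_a(C)$; since $\oGamma_a$ has no loop, $C\in\cC''$, and $\cC'\ne\varnothing$ by assumption. Fix $C'\in\cC'$ and $C''\in\cC''$; I must show $C'\rightarrow_A C''$. This holds by definition of $\cC'$ when $C''=C$. Otherwise $C',C,C''$ are pairwise distinct (indeed $C''\ne C'$ because $\cC'\cap\cC''=\varnothing$) and $C''\ne\twin(C')=\varnothing$, so \Cref{l:in-neighbors}, applied to the arc $C'\rightarrow_A C$ with the extra cell $C''$, gives $C''\rightarrow_A C$ or $C'\rightarrow_A C''$; the first would place $C''$ in $N^-_a(C)=\cC'$, contradicting $C''\in\cC''$, so $C'\rightarrow_A C''$.

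For case~(3), fix any $a$-arc $C\rightarrow_A D$; I claim this $C$ witnesses the assertion. First, $C\rightarrow_A C'$ for every $C'\in\cC\setminus\{C,\twin(C)\}$: this is clear if $C'=D$, and otherwise $C,D,C'$ are distinct with $C'\ne\twin(C)$, so \Cref{l:in-neighbors} yields $C'\rightarrow_A D$ or $C\rightarrow_A C'$, and the former is excluded since together with $C\rightarrow_A D$ (and $C\ne C'$) it gives $\deg^-_a(D)\ge 2$. Hence $\cC\setminus\{C,\twin(C)\}\subseteq N^+_a(C)$, and the reverse inclusion holds because $\oGamma_a$ is loopless and $C\rightarrow_A\twin(C)$ is impossible by \Cref{l:emptytwin} (it would force $\twin(\twin(C))=C=\varnothing$); thus $N^+_a(C)=\cC\setminus\{C,\twin(C)\}$. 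Applying \Cref{l:emptytwin} to each arc $C\rightarrow_A C'$ with $C'\in N^+_a(C)$ shows $\twin(C')=\varnothing$, so $\cT\subseteq\{C,\twin(C)\}$; since $C$ and $\twin(C)$ belong to $\cT$ together or not at all, either $\cT=\varnothing$ or $\cT=\{C,\twin(C)\}$. Next, if $\twin(C)\ne\varnothing$, then $\twin(C)$ is isolated in $\oGamma_a$: an outgoing arc $\twin(C)\rightarrow_A E$ would satisfy $E\ne\twin(C)$ and $E\ne C$ (an arc $\twin(C)\rightarrow_A C$ would, via (P2) and (P3), force all variables of $\twin(C)$ to be $0$ in every solution, contradicting that $\Phi(ab)$ has no trivial variable), hence $E\in N^+_a(C)$, so $C,\twin(C)\in N^-_a(E)$, contradicting \Cref{l:in-out-twin}; an incoming arc $E\rightarrow_A\twin(C)$ would give $\twin(\twin(C))=C=\varnothing$ by \Cref{l:emptytwin}, again impossible. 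Finally, an arc $D_1\rightarrow_A D_2$ with $D_1,D_2\in N^+_a(C)$ would, together with $C\rightarrow_A D_2$ and $C\ne D_1$, give $\deg^-_a(D_2)\ge 2$, contradicting the hypothesis of case~(3); so $N^+_a(C)$ carries no $a$-arc.

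The bulk of the work is routine bookkeeping: the distinctness and ``not the twin'' side conditions needed to invoke \Cref{l:in-neighbors}, and the fact (property (P2)) that a cell and its twin always lie on opposite sides of any halfspace in $\Hm(ab)$. The one step requiring a little care is showing that $\twin(C)$ is isolated in case~(3), as one must separately exclude an outgoing and an incoming arc; but no genuine obstacle is expected beyond organizing the case analysis cleanly, and the statement for $\oGamma_b$ follows by the symmetric argument.
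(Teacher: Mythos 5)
Your proof is correct and follows essentially the same route as the paper's: the case split is identical, case (2) is handled via \Cref{l:emptytwinsT} and \Cref{l:in-neighbors}, and case (3) via \Cref{l:in-neighbors}, \Cref{l:emptytwin}, and \Cref{l:in-out-twin}. You supply a few side conditions the paper leaves implicit (e.g., why $E\ne C$ when excluding outgoing arcs from $\twin(C)$, and why $C''\ne\twin(C')$ before invoking \Cref{l:in-neighbors}), but the argument is the same.
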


\begin{proof} From their definition, the cases (1)-(3) are complementary and cover all possibilities. First consider case (2), i.e., that  $\oGamma_a$ contains a vertex $C$ with $\deg^-(C)\ge 2$. By  Lemma \ref{l:emptytwinsT}, $\cT=\varnothing$. By Lemma \ref{l:in-neighbors}, if we set
$\cC'=N^-_a(C)$ and $\cC''=\cC\setminus N^-_a(C)$, then for any $C'\in \cC'$ and $C''\in \cC''$ we have $C'\rightarrow_A C''$, as asserted.

Now consider case (3) and let  $C\rightarrow_A C'$ be an arc of $\oGamma_a$. Since $\deg^-_a(C')\le 1$,  $N^-_a(C')=\{ C\}$.  By Lemma \ref{l:in-neighbors},  $C\rightarrow_A C''$ for any $C''\notin \{ C,\twin(C)\}$. By Lemma \ref{l:emptytwin}, $\twin(C'')=\varnothing$ for any such $C''$.
Furthermore, by the same lemma, $\twin(C')=\varnothing$. Consequently, either $\cT=\varnothing$ or $\cT=\{ C,\twin(C)\}$. If $\twin(C)\ne \varnothing$, Lemma \ref{l:emptytwin} implies that $\deg^-_a(\twin(C))=0$. If $\deg^+_a(\twin(C))>0$, then there exists  an arc $\twin(C)\rightarrow_A C''$ for some $C''\ne C$. Since we also have $C\rightarrow_A C''$, we obtain a contradiction with Lemma \ref{l:in-out-twin}. Finally, since $\deg^-_a(C'')\le 1$ %
for any cell $C''\in N^+_a(C)$, we cannot have an arc $C'\rightarrow_A C''$ for some $C'\in N^+_a(C)$.
\end{proof}

\subsection{The structure of m-halfspaces} In this subsection, we translate Proposition \ref{l:structureoGamma} into a theorem about the structure of m-halfspaces $H$ of $\Hm(ab)$, which will be used in our learning results. We know that each such halfspaces $H$ consists of the set $A$ from the pair $(A,B)=\ShadowClosure(\{ a\},\{ b\})$ and the \emph{remainder}  $A^+$, which is a union of certain cells of $\cC$.
Our goal is to efficiently characterize all such  reminders.
For this, we encompass the constraints imposed by the $a$-arcs and $b$-arcs into a single directed graph. Namely,  the graph $\oGamma^*_a$ has $\cC$ as the vertex-set and the $a$-arcs and the reverse of $b$-arcs as the arc-set:
in $\oGamma^*_a$ we define an arc $C\rightarrow C'$ between two cells $C,C'$ if and only if either $C\rightarrow_A C'$
or $C'\rightarrow_B C$. The graph $\oGamma^*_b$ is defined in a similar way and is obtained by reversing the arcs of $\oGamma^*_a$.
From the properties (P1)-(P4), we obtain the following useful property:

\begin{lemma} \label{l:arcs} If $C\rightarrow C'$ is an arc of $\oGamma^*_a$, then any halfspace of $\Hm(ab)$ containing the cell $C$ also contains the cell $C'$.
\end{lemma}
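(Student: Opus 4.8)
\textbf{Proof plan for Lemma~\ref{l:arcs}.} The statement asserts that an arc $C\rightarrow C'$ in $\oGamma^*_a$ forces every halfspace $H\in\Hm(ab)$ containing $C$ to contain $C'$. By the definition of $\oGamma^*_a$, such an arc arises in exactly one of two ways: either $C\rightarrow_A C'$ (an $a$-arc), or $C'\rightarrow_B C$ (the reverse of a $b$-arc). So the plan is simply to handle these two cases, in each case invoking the appropriate property among (P1)--(P4).

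\emph{Case 1: $C\rightarrow_A C'$.} Here the conclusion is immediate from property (P3): if $C\rightarrow_A C'$ and $C$ belongs to $H$, then $C'$ belongs to $H$. (This in turn traces back to the implication constraints \eqref{eq:3satconstraints} in $\Phi(ab)$, together with the fact that, by the bijection of \Cref{c:bijection-halfspaces-assignments}, $C\subseteq H$ means all variables $a_z$, $z\in C$, take value $1$ in the assignment $\alpha$ corresponding to $H$; the constraint $a_x\rightarrow_A a_y$ witnessing the $a$-arc then forces $a_y=1$ for $y\in\partial C'$, and since $C'$ is one group of an equivalence class, all of $C'$ is on the same side.)

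\emph{Case 2: $C'\rightarrow_B C$.} Now the hypothesis is that $C$ belongs to $H$ and $C'\rightarrow_B C$; we want $C'\subseteq H$. This is precisely the first assertion of property (P4). The short argument is by contradiction: if $C'$ were not in $H$, then by (P1) $C'\subseteq\cH$; but then the $b$-arc $C'\rightarrow_B C$ together with (P3) (its second, $b$-version) would force $C\subseteq\cH$, contradicting $C\subseteq H$ and the disjointness of $H$ and $\cH$. I would just cite (P4) directly, since the excerpt already spells out this one-line justification right after stating the properties.

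\emph{Main obstacle.} There is essentially no obstacle here: the lemma is a bookkeeping consequence of (P1)--(P4), which have already been established. The only thing to be slightly careful about is making sure both orientations of the arc in $\oGamma^*_a$ are covered and that in Case~2 one correctly uses the \emph{reverse} of the $b$-arc, i.e.\ that the $\oGamma^*_a$-arc $C\rightarrow C'$ corresponds to the $b$-arc $C'\rightarrow_B C$ and not $C\rightarrow_B C'$. Mixing these up would invert the implication, so the proof should state explicitly which property is applied to which cell.
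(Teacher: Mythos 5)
Your proposal is correct and matches the paper's reasoning: the paper derives \Cref{l:arcs} directly from (P1)--(P4) without a written proof, and your two cases are exactly (P3) for the $a$-arc and the first assertion of (P4) for the reversed $b$-arc, with the same one-line contradiction justifying (P4). Nothing is missing.
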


The main property of the graph $\oGamma^*_a$ (and $\oGamma_a$ as its subgraph) %
is acyclicity:

\begin{lemma}\label{l:acyclic} The directed graphs $\oGamma^*_a$ and $\oGamma_a$ are acyclic.
\end{lemma}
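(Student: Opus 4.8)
\textbf{Proof plan for \Cref{l:acyclic}.}
The plan is to prove acyclicity of $\oGamma^*_a$; since $\oGamma_a$ is a subgraph of $\oGamma^*_a$, this immediately gives acyclicity of $\oGamma_a$ as well. The key idea is the semantic interpretation of the arcs established in \Cref{l:arcs}: if $C \rightarrow C'$ is an arc of $\oGamma^*_a$, then every halfspace $H \in \Hm(ab)$ that contains $C$ must also contain $C'$. Suppose, for contradiction, that $\oGamma^*_a$ contains a directed cycle $C_0 \rightarrow C_1 \rightarrow \cdots \rightarrow C_{r-1} \rightarrow C_0$. By iterating \Cref{l:arcs} around the cycle, any halfspace $H \in \Hm(ab)$ containing any one cell $C_i$ on the cycle must contain all of $C_0,\dots,C_{r-1}$; and symmetrically, by (P2)/(P4), if $H$ avoids $C_i$ (i.e.\ $\twin(C_i)\subseteq \cH$, or $C_i\subseteq\cH$), then the reversed implications force $H$ to avoid every cell on the cycle. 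Since $\Hm(ab)\neq\varnothing$ and every $H$ either contains $C_0$ or not, every cell on the cycle behaves identically across all solutions: either all of them are contained in $H$ for every $H\in\Hm(ab)$, or none of them is. In either case, for each cell $C_i$ on the cycle, the variables $a_x$ for $x \in C_i$ take the same value in every satisfying assignment of $\Phi(ab)$, i.e.\ they are trivial variables. This contradicts our standing assumption that $\Phi(ab)$ has no trivial variables (recall we reduced to that case by absorbing trivial variables into $(A,B)$ via \ShadowClosure). Hence no directed cycle exists.

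A cleaner way to package the same argument, which I would likely prefer in the write-up, is to exhibit an explicit potential function on cells. Recall that each solution $\alpha$ of $\Phi(ab)$ corresponds bijectively to a pair $(H,\cH)$ with $A\subseteq H$, $B\subseteq\cH$, and that a cell $C$ is either entirely inside $H$ or entirely inside $\cH$ (property (P1)). For a fixed solution $\alpha$, assign to each cell $C$ the value $\phi_\alpha(C)=1$ if $C\subseteq H$ and $\phi_\alpha(C)=0$ if $C\subseteq \cH$. By \Cref{l:arcs}, every arc $C\rightarrow C'$ of $\oGamma^*_a$ satisfies $\phi_\alpha(C)\le \phi_\alpha(C')$. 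If $\oGamma^*_a$ had a directed cycle, then $\phi_\alpha$ would be constant on that cycle \emph{for every} solution $\alpha$; since any two cells $C,C'$ in the cycle satisfy $\phi_\alpha(C)=\phi_\alpha(C')$ for all $\alpha$, in particular for the single cell $C$ the value $\phi_\alpha(C)$ is forced to be constant across all $\alpha$ — wait, this alone does not give triviality, so I would still invoke the twin property. Concretely: on a directed cycle, pick any cell $C_i$; both orientations of implication hold around the cycle, so for \emph{every} solution $\alpha$ and every pair $i,j$ we have $\phi_\alpha(C_i)=\phi_\alpha(C_j)$. But the cells $C_i$ cannot all be forced to a single value unless each is trivial; since by assumption none is trivial, there exist solutions $\alpha$ with $\phi_\alpha(C_0)=1$ and $\alpha'$ with $\phi_{\alpha'}(C_0)=0$, and applying the cycle relation to each gives a concrete contradiction only if we also track a cell \emph{not} on the cycle or use that triviality is excluded. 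The crispest finish: since $C_0$ is not trivial, there are solutions giving it both values, so $\phi$ is non-constant on $\{C_0\}$ across solutions; but the cycle forces $\phi_\alpha(C_0)$ to equal $\phi_\alpha(C_1)=\cdots$ for each individual $\alpha$, which is consistent — so the real contradiction is that a directed cycle through $C_0$ together with \Cref{l:arcs} forces $a_x$ ($x\in C_0$) to be trivial. I will make sure the write-up states this cleanly: a cycle $\Rightarrow$ every cell on it is trivial $\Rightarrow$ contradiction.

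The main obstacle is purely expository rather than mathematical: one must be careful to distinguish "the implication holds around the cycle for each fixed solution" (which is automatic from \Cref{l:arcs}) from "the cell is trivial" (which additionally uses that $\Hm(ab)\neq\varnothing$ together with the no-trivial-variable normalization). The cleanest route is: a directed cycle $C_0\rightarrow\cdots\rightarrow C_{r-1}\rightarrow C_0$ makes $C_0\subseteq H \Leftrightarrow C_i\subseteq H$ for all $i$ and all $H\in\Hm(ab)$; but it also, going around once more, yields no new information, so instead observe that \Cref{l:arcs} applied to the cycle gives $C_0\subseteq H\Rightarrow C_0\subseteq H$ vacuously — the actual leverage is that an arc $C\rightarrow C'$ in $\oGamma^*_a$ means "$C\subseteq H \Rightarrow C'\subseteq H$"; composing the cycle gives "$C_0\subseteq H\Rightarrow C_1\subseteq H\Rightarrow\cdots\Rightarrow C_0\subseteq H$", which is tautological. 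So the genuine argument must use directedness differently: consider the \emph{subgraph} relation — actually the simplest correct argument is to note that $\oGamma^*_a$ being cyclic would, via the strongly connected component containing the cycle, collapse all those cells into one equivalence class of $\Phi(ab)$ that is moreover \emph{forced}; but equivalence classes are already the cells by construction, so distinct cells $C_i, C_j$ on a cycle would have to coincide, which is impossible as cells are pairwise disjoint and nonempty. I will structure the final proof around exactly this last observation: distinct cells on a directed cycle of $\oGamma^*_a$ would be "equivalent" variables in the 2-SAT sense (each forces the other, in both directions once you traverse the cycle), hence lie in the same group of the same equivalence class, hence be the same cell — contradiction with them being distinct.
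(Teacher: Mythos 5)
Your final argument --- that a directed cycle in $\oGamma^*_a$ would, via \Cref{l:arcs} applied around the cycle, force all variables $a_x$ for $x$ in the cells on the cycle to take the same value in every solution of $\Phi(ab)$, hence place all those vertices in the same group and therefore the same cell, contradicting the distinctness of the cells --- is exactly the paper's proof, and it is correct. Be aware, though, that the route you flirt with earlier (``a cycle $\Rightarrow$ every cell on it is trivial'') is genuinely wrong, and you were right to abandon it: the cycle only forces all its cells onto the \emph{same} side of $H$ for each fixed $H\in\Hm(ab)$, but that common side can still vary across different $H$, so no variable is forced to a constant value. Make sure the final write-up keeps only the equivalence-class argument and drops every reference to triviality.
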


\begin{proof} Suppose by way of contradiction that $\oGamma^*_a$ contains a directed cycle $(C_1,C_2,\ldots,C_p)$.
By \Cref{l:arcs} all cells $C_i$ are either all contained in $H$ or all in $\cH$ for each halfspace $H\in\Hm(ab)$. This means that the Boolean variables $a_x$ corresponding to the vertices $x$ in the union of these cells are always jointly together true or false, thus they are equivalent variables of $\Phi(ab)$. This contradicts the fact that $C_1,\dots,C_p$ are different cells. Consequently, $\oGamma^*_a$ is acyclic. Since $\oGamma_a$ is a subgraph of $\oGamma^*_a$, $\oGamma_a$ is also acyclic.
\end{proof}

We recall some notions about ordered sets. A \emph{partially ordered set} or \emph{poset} $(P,\le)$ is a set $P$  with a partial order $\le$, i.e., a reflexive, antisymmetric, and transitive binary relation on $P$.
If $x,y\in P$ and $x\le y$ or $y\le x$, then $x$ and $y$ are \emph{comparable}, otherwise they are \emph{incomparable}. An \emph{antichain} is a subset $X\subseteq P$ of pairwise incomparable elements. An element $x$ is \emph{maximal} (\emph{minimal}) if $x\le y$ ($y\le x$) implies $x=y$. We write $x<y$ if $x\le y$ and $x\ne y$.  %
A \emph{lower set} of a poset $(P,\le)$ is a subset $L$ of $P$ such that $y\in L$ and $x\le y$ implies that $x\in L$. A \emph{principal ideal}  of  $x\in P$ is the set ${\mathcal I}_x=\{ y\in P: y\le x\}$. 
A \emph{linear quasiorder} is a set $P$ with a reflexive, transitive, and linear binary relation $\le$. Equivalently, a linear quasiorder can be defined as a poset $(P,\le)$ in which the ground set $P$ can be partitioned into (maximal) antichains $(P_1,\ldots,P_k)$ such that for any $x\in P_i$ and any $y\in P_j$ with $i<j$ we have $x<y$, i.e., we have a total order on the antichains $P_i$.

The acyclic graph $\oGamma^*_a$ defines a partial order $\le$ on $\cC$: we set $C'\le C$ if in $\oGamma^*_a$ there exists a directed path from $C$ to $C'$. 
We call a subset $\cC'$ of $\cC$
\emph{conflict-free} if for any pair $C,\twin(C)\in \cT$, $\cC'$ contains precisely one of the cells of the pair.
We continue with a characterization of reminders of m-halfspaces  of $\Hm(ab)$.

\begin{theorem}\label{t:reminders=ideals} A set of the form $H=A\cup A^+$ is an m-halfspace of $\Hm(ab)$ if and only if $\cC$ contains a conflict-free lower set $\cL$ such that $A^+=\bigcup_{C\in \cL} C$.
\end{theorem}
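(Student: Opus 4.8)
The plan is to prove the two directions separately, using the machinery already developed (properties (P1)--(P4), the acyclicity of $\oGamma^*_a$ from \Cref{l:acyclic}, and \Cref{l:arcs}), plus the structural description in \Cref{l:structureoGamma}.

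\textbf{Forward direction.} Suppose $H = A \cup A^+$ is an m-halfspace of $\Hm(ab)$. By (P1), $A^+$ is a disjoint union of cells, so there is a subfamily $\cL \subseteq \cC$ with $A^+ = \bigcup_{C \in \cL} C$. First I would check $\cL$ is conflict-free: for each twin pair $\{C, \twin(C)\} \in \cT$, property (P2) says exactly one of the two lies in $H$ (equivalently in $A^+$) and the other in $\cH$; hence $\cL$ contains precisely one of them. Next I would check $\cL$ is a lower set for the order $\le$ induced by $\oGamma^*_a$ (recall $C' \le C$ iff there is a directed path $C \to C'$ in $\oGamma^*_a$). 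If $C \in \cL$ and $C' \le C$, take a directed path from $C$ to $C'$ in $\oGamma^*_a$; applying \Cref{l:arcs} along each arc of this path shows that every cell on the path, in particular $C'$, is contained in $H$. Since $C' \subseteq H$ and $C'$ is a cell disjoint from $A$ (it lies in $R$), we get $C' \subseteq A^+$, i.e.\ $C' \in \cL$. Thus $\cL$ is a conflict-free lower set.

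\textbf{Backward direction.} Conversely, let $\cL$ be a conflict-free lower set of $\cC$ and set $A^+ = \bigcup_{C \in \cL} C$, $B^+ = R \setminus A^+ = \bigcup_{C \in \cC \setminus \cL} C$, and $H = A \cup A^+$, $\cH = B \cup B^+$. I would prove $H \in \Hm(ab)$ by exhibiting a satisfying assignment $\alpha$ of $\Phi(ab)$ with $\alpha(a_x) = 1$ iff $x \in A^+$, and then invoke \Cref{l:halfspaces-from-SAT}(iii)$\Rightarrow$(i) together with \Cref{c:bijection-halfspaces-assignments}. So I must verify that this $\alpha$ satisfies all clauses of $\Phi = \Phi_1 \wedge \Phi_2 \wedge \Phi_4$. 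The equality constraints $\Phi_1$ (clauses $a_x = a_{x_0}$ for $x_0 \in S_x$) hold because $x$ and $x_0$ lie in the same cell (equality constraints were used to define the partition into groups), so they are assigned the same value. The difference constraints $\Phi_2$ (clauses $a_{x_0} \ne a_{y_0}$ for non-adjacent $x_0, y_0 \in \partial R$) hold because such $x_0, y_0$ lie in twin cells $C, \twin(C) \in \cT$ by \Cref{l:non-adjacent}(2), and since $\cL$ is conflict-free exactly one of $C, \twin(C)$ is in $\cL$, so $x_0, y_0$ get opposite values. The implication constraints $\Phi_4$ are the delicate part: a clause $a_x \rightarrow_A a_y$ (with $x \rightarrow_A y$, $x,y \in \partial R$) becomes an $a$-arc $C \rightarrow_A C'$ between the cells $C \ni x$, $C' \ni y$, hence an arc $C \to C'$ of $\oGamma^*_a$, so $C' \le C$; since $\cL$ is a lower set, $C \in \cL$ forces $C' \in \cL$, i.e.\ $\alpha(a_x) = 1 \Rightarrow \alpha(a_y) = 1$. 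Symmetrically a clause $a_x \rightarrow_B a_y$ gives a $b$-arc $C \rightarrow_B C'$, which by definition of $\oGamma^*_a$ is an arc $C' \to C$, so $C \le C'$; if $C' \notin \cL$ (i.e.\ $\alpha(a_x)=0$), then since $\cL$ is a lower set and $C \le C'$ we get $C \notin \cL$, so $\alpha(a_y) = 0$, which is what the clause $({a_x} \vee \overline{a_y})$ demands. Thus $\alpha$ satisfies $\Phi(ab)$, and the corresponding pair $(H,\cH)$ lies in $\Hm(ab)$.

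\textbf{Main obstacle.} The only subtle point is the correct handling of the $b$-arcs in the backward direction: one must be careful that $\oGamma^*_a$ \emph{reverses} the $b$-arcs, so a $b$-implication constraint translates into a lower-set (not upper-set) condition via the contrapositive. Once the bookkeeping between "$C$ contains a vertex forced to $1$" and "position of $C$ in the order $\le$" is set up consistently, everything reduces to the already-established \Cref{l:arcs}, \Cref{l:acyclic} (which guarantees $\le$ is a genuine partial order so that "lower set" is meaningful), and the clause-by-clause check above. I would also remark that the equivalence is exactly the bridge needed to then read off the explicit form in \Cref{thm:unifiedDecomposition} from the structural trichotomy of \Cref{l:structureoGamma}.
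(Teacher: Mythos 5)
Your proof is correct and follows essentially the same route as the paper: the forward direction via (P1), (P2), and \Cref{l:arcs} applied along a directed path in $\oGamma^*_a$, and the backward direction by verifying clause-by-clause that the induced assignment satisfies $\Phi(ab)$ and invoking \Cref{l:halfspaces-from-SAT}. One small slip: in the $b$-arc case you transposed $C$ and $C'$ (with $x\in C$, $y\in C'$ and $C\le C'$, the correct contrapositive is "$C\notin\cL$, i.e.\ $\alpha(a_x)=0$, forces $C'\notin\cL$, i.e.\ $\alpha(a_y)=0$"), but your parenthetical annotations show the intended, correct logic.
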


\begin{proof} 
Let $H\in\Hm(ab)$ and let $H=A\cup A^+$ and $\cH=B\cup B^+$, where  $A^+$ and $B^+$ are  unions of cells of $\cC$, furthermore each cell  belongs either to $A^+$ or to $B^+$.  Set $\cL:=\{ C\in \cC: C\subseteq A^+\}$. Then $B^+$ is the union of the cells of $\cU=\cC\setminus \cL$. Since $(H,\cH)$ corresponds to a solution $\alpha$ of the 2-SAT formula $\Phi(ab)$ with $\alpha(a_x)=1$ if and only if $x\in H$, the set $\cL$ is conflict-free.  To prove that $\cL$ is a lower set of $(\cC,\le)$, pick any $C\in \cL$ and $C'\in \cC$ with $C'\le C$. Then in $\oGamma^*$ there exists a directed path from $C$ to $C'$. Starting from $C$ and applying Lemma \ref{l:emptytwin} to the arcs of this path, we will conclude that all cells on this path, and in particular $C'$, must belong to $H$. Consequently, $C'$ belongs to $\cL$, hence $\cL$ is a lower set.

Conversely, let $\cL$ be a conflict-free lower set of  $(\cC,\le)$. Let $\cU=\cC\setminus \cL$. Let $H=A\cup A^+, \cH=B\cup B^+$, where
 $A^+=\bigcup_{C\in \cL} C$ and $B^+=\bigcup_{C\in \cU} C$. Define the assignment $\alpha$ by setting $\alpha(a_x)=1$ if $x$ belongs to a cell of $\cL$ and $\alpha(a_x)=0$ if $x$ belongs to a cell of $\cU$. To prove that $H$ and $\cH$ are complementary m-halfspaces of $G$, it suffices to show that $\alpha$ is a solution of $\Phi(ab)$. From the definition of cells and the fact that twin cells belong to distinct sets $\cL$ and $\cU$, the equality and difference constraints  (\ref{eq:1satconstraints})   and (\ref{eq:1satconstraints}) are satisfied. Now, pick any implication constraint (\ref{eq:3satconstraints}): it has the form $a_x\rightarrow_A a_y$ or $a_x\rightarrow_B a_y$ for two vertices $x,y\in \partial R$. If $x$ and $y$ belong to the same cell, then $\alpha(a_x)=\alpha(a_y)$
and we are done. Now suppose that $x\in C$ and $y\in C'$. The constraint $a_x\rightarrow_A a_y$ (or $a_x\rightarrow_B a_y$) implies that $C\rightarrow_A C'$ (respectively, $C\rightarrow_B C'$). %

First suppose that we have the constraint  $a_x\rightarrow_A a_y$, which is encoded by the clause $\overline{a_x}\vee a_y$. If this clause is not satisfied by the assignment $\alpha$,
then $\alpha(a_x)=1$ and $\alpha(a_y)=0$. This implies that $C\in \cL$ and $C'\in \cU$. Since $C\rightarrow_A C'$, we have $C'\le C$, contrary to the assumption that $\cL$ is a lower set. Now suppose that we have the constraint
$a_x\rightarrow_B a_y$, which is encoded by the clause $a_x\vee \overline{a_y}$. If this clause is not satisfied by $\alpha$, then $\alpha(a_x)=0$ and $\alpha(a_y)=1$. This implies that $C'\in \cL$ and $C\in \cU$.
Since $a_x\rightarrow_B a_y$, we have the $b$-arc $C\rightarrow_B C'$ and thus in the graph $\oGamma^*_a$ we have the arc $C'\rightarrow C$. This implies that $C\le C'$, contrary to the assumption that $\cL$ is a lower set of $(\cC,\le)$.  This concludes the proof.
\end{proof}
We call the representation $H=A\cup (\bigcup_{C\in \cL} C)$ of each m-halfspace of $\Hm(ab)$ provided by Theorem  \ref{t:reminders=ideals} a \emph{shadow-cell representation}. Now we specify this shadow-cell representation in each of the cases occurring in Proposition \ref{l:structureoGamma}.

\begin{theorem}[Cell decomposition of m-halfspaces]\label{thm:decompositionCases}
Let %
$\Hm(ab)\neq\varnothing$ with corresponding cells $(\cC,\le)$ and non-empty twins $\cT$. Then at least one of the following applies.
    \begin{enumerate}
        \item $(\scC,\le)$ is an antichain, and $H\in\Hm(ab)$ iff $H=A\cup\bigcup_{C\in \cL} C$ for a conflict-free $\cL\subseteq \cC$. %
        \item $\cT=\varnothing$, and there exists a partitioning $(\cC_1,\dots,\cC_p)$ of $\cC$ such that $H\in\Hm(ab)$ iff $H=A\cup\bigcup_{C\in \cL} C$ for a $\cL=\bigcup_{i=1}^{\ell-1}\cC_i\cup \cC_\ell'$ with $\ell\in[p]$ and $\cC'_\ell\subseteq \cC_\ell$
        \item  $\cT=\{ C,\twin(C)\}$, and $H\in\Hm(ab)$ iff $H=A\cup \bigcup_{C'\in \cL} C'\cup C$ for $\cL=(\cC\setminus\cT)\cup\{C\}$ or $\cL=\cC_0'\cup\{\twin(C)\}$ for some $\cC_0'\subseteq (\cC\setminus\cT)$ %
    \end{enumerate}
\end{theorem}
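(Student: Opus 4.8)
The plan is to reduce the theorem to a structural description of the poset $(\cC,\le)$ on cells induced by $\oGamma^*_a$. By \Cref{t:reminders=ideals}, $H=A\cup A^+$ lies in $\Hm(ab)$ if and only if $A^+=\bigcup_{C\in\cL}C$ for some conflict-free lower set $\cL$ of $(\cC,\le)$; so everything reduces to showing that the conflict-free lower sets of $(\cC,\le)$ are exactly the families listed in the three cases, which in turn correspond to three possible shapes of this poset. I would split on whether the set $\cT$ of non-empty twins is empty. The engine of the argument is a local \emph{exchange property}: \emph{if $\cT=\varnothing$, then for every arc $C\to C'$ of $\oGamma^*_a$ and every cell $C''\notin\{C,C'\}$ one has $C''\to C'$ or $C\to C''$ in $\oGamma^*_a$.} Indeed an arc $C\to C'$ of $\oGamma^*_a$ comes from $C\rightarrow_A C'$ or from $C'\rightarrow_B C$; in either case the hypothesis ``$C''\neq\twin(\cdot)$'' of \Cref{l:in-neighbors} (applied to $\oGamma_a$, respectively to its $\oGamma_b$-version) holds automatically because all twins are empty, and reading the two alternatives of \Cref{l:in-neighbors} back through the definition of $\oGamma^*_a$ gives precisely $C''\to C'$ or $C\to C''$.

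When $\cT=\varnothing$, every subset of $\cC$ is conflict-free, so I only have to describe the lower sets of $(\cC,\le)$, and I claim that $(\cC,\le)$ is a linear quasiorder. Applying the exchange property to the first arc of a witnessing directed path, together with acyclicity (\Cref{l:acyclic}), one checks that any two incomparable cells have the same strict down-set, and running the same argument in the reversed graph $\oGamma^*_b$ shows they also have the same strict up-set. Hence ``incomparable or equal'' is transitive, so it is an equivalence relation whose classes are antichains, and these classes are totally ordered by $\le$; that is, $(\cC,\le)$ is a linear quasiorder $\cC_1<\dots<\cC_p$. The lower sets of a linear quasiorder are exactly the sets $\bigcup_{i<\ell}\cC_i\cup\cC_\ell'$ with $\ell\in[p]$ and $\cC_\ell'\subseteq\cC_\ell$, so \Cref{t:reminders=ideals} yields case~2 (and case~1 in the degenerate case $p=1$).

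When $\cT\neq\varnothing$, \Cref{l:structureoGamma} forces $\cT=\{C,\twin(C)\}$ and excludes case~(2) of that proposition for both $\oGamma_a$ and $\oGamma_b$. If neither $\oGamma_a$ nor $\oGamma_b$ has an arc then $\oGamma^*_a$ has none, $(\cC,\le)$ is an antichain, and case~1 is immediate from \Cref{t:reminders=ideals}. Otherwise at least one of $\oGamma_a,\oGamma_b$ has an arc, and I would first rule out that both do: the distinguished vertices of \Cref{l:structureoGamma}(3) are then each one of $C,\twin(C)$, and either they coincide --- producing a directed $2$-cycle in $\oGamma^*_a$, contradicting \Cref{l:acyclic} --- or they are the two twins, in which case one twin becomes the maximum of $(\cC,\le)$, so no conflict-free lower set contains it, whence all its variables are trivial in $\Phi(ab)$, contradicting the standing assumption that $\Phi(ab)$ has no trivial variables. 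So exactly one of $\oGamma_a,\oGamma_b$ carries the arcs; substituting its description from \Cref{l:structureoGamma}(3) into the definition of $\oGamma^*_a$ (reversing the $b$-arcs if it is $\oGamma_b$) shows that $(\cC,\le)$ has the shape: $\cC\setminus\cT$ is an antichain, one of the two twins is a ``pole'' lying above (or below) all of $\cC\setminus\cT$, and the remaining twin is isolated. Enumerating the conflict-free lower sets of this poset --- each contains exactly one twin, and the non-isolated twin drags in either all of $\cC\setminus\cT$ or none of it --- and naming the twins so that $C$ is the pole, gives the form in case~3 via \Cref{t:reminders=ideals}.

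I expect the crux to be the $\cT=\varnothing$ case: the exchange property falls out of \Cref{l:in-neighbors}, but turning it into the global linear-quasiorder structure --- equality of the up- and down-sets of incomparable cells, and transitivity of incomparability --- requires a careful argument along directed paths in the acyclic graph $\oGamma^*_a$. A secondary difficulty is the twin bookkeeping in the last case: excluding that both $\oGamma_a$ and $\oGamma_b$ have arcs, and correctly orienting the pole and matching the labels $C,\twin(C)$ to the two halfspace families.
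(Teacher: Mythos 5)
Your proposal is correct and takes essentially the same route as the paper: reduce everything via \Cref{t:reminders=ideals} to enumerating conflict-free lower sets of $(\cC,\le)$, split on $\cT$ using \Cref{l:structureoGamma}, establish the linear-quasiorder structure when $\cT=\varnothing$ from \Cref{l:in-neighbors} applied to both $\oGamma_a$ and $\oGamma_b$ plus acyclicity (this is exactly the paper's \Cref{l:quasiorder}), and read off the ``pole plus antichain plus isolated twin'' shape when $\cT$ is a single pair. The only deviations are cosmetic: you re-derive \Cref{l:quasiorder} by showing incomparable cells share strict up- and down-sets rather than by the paper's iterated-minimal-elements partition, and you explicitly dispose of the subcase where both $\oGamma_a$ and $\oGamma_b$ carry arcs by noting it forces trivial variables (the paper instead computes the lower sets in that subcase, which is in fact vacuous for the same reason).
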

\begin{proof}
By \Cref{l:structureoGamma} $(\scC,\le)$ is an antichain, or $\cT=\varnothing$, or $\cT=\{ C,\twin(C)\}$.
For each one of these cases we prove the ``and'' part of the statement.
\textbf{Case 1:} $(\scC,\le)$ is an antichain. As $(\cC,\le)$ is an antichain, any subset of $\cC$ is a lower set. Thus following \Cref{t:reminders=ideals}, the set $H=A\cup \bigcup_{C\in\cL}C$ is a halfspace if and only if $\cL$ is conflict-free, i.e., contains exactly one cell of each pair of twins.
\textbf{Case 2: $\cT=\varnothing$}. By \Cref{l:quasiorder} below in this case $(\cC,\le)$ can be partitioned into antichains $(\cC_1,\dots,\cC_p)$ such that $C\le C'$ for all $C\in\cC_i$, $C'\in\cC_j$ with $i< j$. Since $\cT=\varnothing$, any lower set $\cL\subseteq \cC$ is conflict-free. Thus, by Theorem \ref{t:reminders=ideals} $A\cup\bigcup_{C\in \cL} C$ is an m-halfspace. Each such lower set has the
following form: for some $\ell\in[p]$ we pick a subset $\cC'_\ell$ of cells of the antichain $\cC_\ell$. To make this set a lower set we have to add all cells belonging to the antichains $\cC_1,\ldots,\cC_{\ell-1}$. Thus $\cL=\bigcup_{i=1}^{\ell-1}\cC_i\cup \cC'_\ell$.
\textbf{Case 3: $\cT=\{ C, \twin(C)\}$}. We can assume that $(\cC,\le)$ is not an antichain, otherwise case 1 applies.
Let $\cC_0=\cC\setminus \cT$ %
By Proposition \ref{l:structureoGamma},
$\deg^-_a(C')\le 1$ and $\deg^-_b(C')\le 1$ for any $C'\in \cC$. First suppose that both graphs $\oGamma_a$ and $\oGamma_b$ have arcs. Applying Proposition \ref{l:structureoGamma}(3) to $\oGamma_a$ and
$\Gamma_b$, we conclude that $C\rightarrow_A C'$ for any $C'\in \cC_0$ and that
$C\rightarrow_B C'$ or $\twin(C)\rightarrow_B C'$ for any $C'\in \cC_0$. If $C\rightarrow_A$
and $C\rightarrow_B C'$, then in $\oGamma^*_a$ we will get a directed cycle $(C,C',C)$ of
length 2, which is impossible by Lemma \ref{l:acyclic}. Consequently, we have
$\twin(C)\rightarrow_B C'$, yielding $C'\rightarrow \twin(C)$ in $\oGamma^*_a$ for any $C'\in \cC_0$. Consequently, in $(\cC,\le)$ we have $\twin(C)\le C'\le C$ for all $C'\in \cC_0$.
Since there are no $a$-arcs or $b$-arcs between two cells of $\cC_0$, $\cC_0$ is an antichain of $(\cC,\le)$. Then the conflict-free
lower sets of $(\cC,\le)$ are exactly the sets of the form $\cC'_0\cup \{\twin(C)\}$,
where $\cC'_0$ is an arbitrary subset of $\cC_0$.
Now suppose that $\oGamma_a$ has arcs but $\oGamma_b$ has no arcs. The $\le$ coincides with the partial order
defined by the graph $\oGamma_a$. From Proposition \ref{l:structureoGamma}(3) it follows that in the poset $(\cC,\le)$ we have $C'\le C$ for any $C'\in \cC_0$, $\cC_0$ is an antichain, and that $\twin(C)$ is an isolated element.
\end{proof}

\begin{lemma} \label{l:quasiorder} If $\cT=\varnothing$, then  $(\cC,\le)$ is a linear quasiorder.
\end{lemma}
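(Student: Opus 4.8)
The plan is to recast the claim as an abstract statement about the acyclic digraph $\oGamma^*_a$. I would start from the elementary fact that a finite poset is a linear quasiorder exactly when it has no three distinct elements $Z<X$ together with a further element $Y$ incomparable to both $Z$ and $X$ (the sole obstruction being an induced ``$2$-chain plus isolated point''); this takes two lines to verify and I would include it. Since $(\cC,\le)$ is by definition the reachability order of $\oGamma^*_a$, it then suffices to produce enough arcs in $\oGamma^*_a$ to forbid that configuration.

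The first real step is to extract from $\cT=\varnothing$ the fact that does the work: every nonempty cell has an \emph{empty} twin, for otherwise a cell and its twin would both lie in $\cT$. Consequently the side condition ``$C''\neq\twin(C)$'' in \Cref{l:in-neighbors} and in its $b$-analogue is vacuous, so those lemmas reduce to: in $\oGamma_a$ (resp.\ $\oGamma_b$), for every arc $C\rightarrow_A C'$ (resp.\ $C\rightarrow_B C'$) and every third cell $C''$, at least one of $C\rightarrow_A C''$ and $C''\rightarrow_A C'$ (resp.\ the $\rightarrow_B$ versions) is again an arc. I would then transfer this to $\oGamma^*_a$: an arc $C\rightarrow C'$ of $\oGamma^*_a$ comes from $C\rightarrow_A C'$ or from $C'\rightarrow_B C$; feeding $(C,C',C'')$ into \Cref{l:in-neighbors} in the first case and $(C',C,C'')$ into its $b$-analogue in the second, and using that every $a$-arc and every reversed $b$-arc is an arc of $\oGamma^*_a$, one gets the key property $(\star)$: whenever $C\rightarrow C'$ is an arc of $\oGamma^*_a$ and $C''\notin\{C,C'\}$, then $C\rightarrow C''$ or $C''\rightarrow C'$ is an arc of $\oGamma^*_a$ --- equivalently, $\cC=N^+(C)\cup N^-(C')$ in $\oGamma^*_a$.

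It remains to prove the standalone fact that an acyclic digraph satisfying $(\star)$ has a reachability poset that is a linear quasiorder; applying it to $\oGamma^*_a$ (acyclic by \Cref{l:acyclic}) finishes the proof. Suppose not: there are distinct cells with $Z<X$ and $Y$ incomparable to both. A directed path from $X$ to $Z$ has a last arc $W\rightarrow Z$, where $W\neq Z$, where $X$ reaches $W$, and where $Y\neq W$ (as $Y=W$ would give $Y\le X$). Applying $(\star)$ to $W\rightarrow Z$ with third cell $Y$ yields either an arc $Y\rightarrow Z$, whence $Z<Y$, contradicting incomparability of $Y$ and $Z$; or an arc $W\rightarrow Y$, whence $Y<W\le X$ and so $Y<X$, contradicting incomparability of $Y$ and $X$.

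The step I expect to require the most care is the transfer to $\oGamma^*_a$: one has to keep straight which ordered triple goes into \Cref{l:in-neighbors} versus into its reverse-oriented $b$-analogue, and apply the inclusions $N^+_a(C)\subseteq N^+(C)$, $N^-_b(C)\subseteq N^+(C)$ and $N^+_b(C')\subseteq N^-(C')$ (all in $\oGamma^*_a$) in the right direction so that $(\star)$ actually follows. A more pedestrian alternative is a case analysis over the three structural shapes of $\oGamma_a$ and $\oGamma_b$ from \Cref{l:structureoGamma}, but routing everything through $(\star)$ avoids having to rule out internal arcs inside the source and sink classes and is considerably cleaner.
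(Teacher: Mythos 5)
Your proof is correct and, at its core, the same as the paper's: both arguments reduce to \Cref{l:in-neighbors} (and its $b$-analogue), whose side condition $C''\ne\twin(C)$ is trivialized by $\cT=\varnothing$, and both extract the needed comparability by examining the last arc of a reachability path in $\oGamma^*_a$ and splitting on whether it is an $a$-arc or a reversed $b$-arc. The only difference is packaging: the paper verifies the linear-quasiorder property directly on the canonical minimal-element antichain decomposition, whereas you route the identical combinatorial step through the equivalent forbidden-configuration characterization and the abstract digraph property $(\star)$.
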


\begin{proof} Consider the following  partition $(\cC_1,\ldots,\cC_p)$ of $\cC$ into antichains of  $(\cC,\le)$. Let $\cC_1$ be the set of minimal elements of $(\cC,\le)$. For $2\le i\le p$,  let $\cC_i$ be the set of minimal elements of the poset $(\cC\setminus \bigcup_{j=1,\dots,i-1} \cC_j,\le)$. We assert that $C'\le C''$ for any $C'\in \cC_{i'}, C''\in \cC_{i''}$ with $i'<i''$.
From the definition of the partition $(\cC_1,\ldots,\cC_p)$, the antichain $\cC_{i'}$ contains  a cell $C$ such that $C\le C''$. If $C=C'$, then we are done. So, suppose $C\ne C'$. By the definition of $\le$, in $\oGamma^*_a$ there exists a directed path $R$  from $C''$ to $C$. Let $C_0\rightarrow C$ be the last arc of the path $R$. First suppose that $C_0\rightarrow C$ is the $a$-arc of $\oGamma_a$. Since $\cT=\varnothing$, by Lemma \ref{l:in-neighbors} either $C'\rightarrow_A C$ or $C_0\rightarrow_A C'$. Since $C$ and $C'$ belong to the antichain $\cC_{i'}$, $C'\rightarrow_A C$ is impossible. Therefore $C_0\rightarrow_A C'$ and thus $C'\le C_0$. Since $C_0\le C''$ because $C_0$ is accessible from $C''$ by the subpath of $R$, by transitivity of $\le$, we conclude that $C'\le C''$. Now suppose that  $C_0\rightarrow C$ is the reverse of the $b$-arc
$C\rightarrow_B C_0$. Since $\cT=\varnothing$, by Lemma \ref{l:in-neighbors} for $\oGamma_b$, either $C\rightarrow_B C'$ or $C'\rightarrow_B C_0$. Again $C\rightarrow_B C'$ is impossible because $C,C'$ belongs to the antichain $\cC_{i'}$. But if  $C'\rightarrow_B C_0$, then in $\oGamma^*_a$ we will have $C_0\rightarrow C'$ and thus again $C'\le C_0$. Since $C_0\le C''$, we deduce that $C'\le C''$. This concludes the proof.
\end{proof}
Finally, we show that \Cref{thm:unifiedDecomposition} follows from \Cref{thm:decompositionCases}.
\begin{proof}[of \Cref{thm:unifiedDecomposition}]
    We show that each case of \Cref{thm:decompositionCases} can be represented as required for the cell decomposition of \Cref{thm:unifiedDecomposition}. See also the proof of \Cref{thm:decompositionCases}.
    In Case 1, $p=1$ and $\cC_1=\cC$. All pairs $C_i'$,$C_i''$ are given by each cell and its twin.
    In Case 2, $\cC$ can be partitioned into the antichains $\cC_1,\dots,\cC_p$. The pairs $C_i'$,$C_i''$ are given by each nonempty cell $C_i$ and its empty twin. In Case 3,  we set $p=2$, $\cC_1=\cC_0$, and $\cC_2=\{C,\twin(C)\}$, where $\cC_0=\cC\setminus \cT$. For $C_i\in\cC_2$ we set $C_i'=C$ and $C''_i=\twin(C)$. For all other $C_i\in\cC_0$ we set $C_i'=C_i$ and $C_i''=\varnothing$. We show that this representation yields the exact same halfspaces. First let $H$ be given by $\cC'_0\cup\{\twin(C)\}$ for some $\cC_0'\subseteq\cC_0$. This corresponds to $\ell=1$ and the selection of a cell of each pair $C_i',C_i''$ in $\cC_1$ corresponding to $\cC'_0$. Second let $H$ be given by $\cC_0\cup \{C\}$. This corresponds to $\ell=2$ and the selection of $C\in\cC_2$. Note that for $\ell=2$ and the selection of $\twin(C)\in\cC_2$ instead, we again get the first case with $\cC'_0=\cC_0$.
\end{proof}

\section{The VC-dimension of monophonic halfspaces}\label{sec:VC-dim}
The goal of this section is to provide an efficiently computable characterization of the VC-dimension of $\Hm(G)$ up to an additive constant. For every $ab \in E$ let $d_{ab}=\VC(\Hm(ab))$. %
\begin{theorem}[VC-dimension of m-halfspaces] \label{thm:connected-components-vc-dim}
Every graph $G=(V,E)$ satisfies 
\[\hat d \le \VC(\Hm(G))\le \hat d + 4\]
where
$\hat d = \max_{ab\in E}d_{ab}$ can be computed in polynomial time.
\end{theorem}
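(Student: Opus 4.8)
The plan is to prove the two inequalities separately. The lower bound $\hat d \le \VC(\Hm(G))$ is essentially immediate: for every edge $ab$, each halfspace $H \in \Hm(ab)$ is in particular a member of $\Hm(G)$, so $\Hm(ab) \subseteq \Hm(G)$; hence any set shattered by $\Hm(ab)$ is shattered by $\Hm(G)$, giving $d_{ab} \le \VC(\Hm(G))$ for all $ab$, and taking the maximum over edges yields $\hat d \le \VC(\Hm(G))$. Likewise, $\hat d$ is computable in polynomial time: by \Cref{thm:unifiedDecomposition} (and its refinement \Cref{thm:decompositionCases}) the family $\Hm(ab)$ has an explicit cell decomposition computable in polynomial time, and from that combinatorial structure --- an antichain of cells, a linear quasiorder of antichains, or the single-twin-pair case --- one can read off $\VC(\Hm(ab))$ exactly (a shattered set picks at most one vertex from each cell/twin pair, so the VC-dimension is controlled by the number of ``free'' binary choices the decomposition allows, which is polynomially computable). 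I would state this as a short lemma and defer its easy verification to the relevant case analysis.

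For the upper bound $\VC(\Hm(G)) \le \hat d + 4$, the idea is to take a set $S$ shattered by $\Hm(G)$ with $|S| = \VC(\Hm(G))$ and show that, after removing a bounded number of vertices from $S$, the remaining set is shattered by some single $\Hm(ab)$. The obstruction to reducing directly to one $\Hm(ab)$ is that different dichotomies of $S$ may be realized by halfspaces ``oriented'' along different edges; the job is to argue that one can fix a single reference edge $ab$ at the cost of losing only a few points of $S$. Concretely: among the $2^{|S|}$ dichotomies of $S$, pick one, say $(S^+, S^-)$ with $S^+ \ne \varnothing \ne S^-$, realized by some $H$ with witnessing edge; more usefully, pick an edge $ab$ on a shortest path between the two ``sides'' in a way that is robust. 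I would then choose two vertices $s, t \in S$ that are separated by \emph{every} halfspace in a suitable subfamily (this costs $O(1)$ points of $S$), so that for all remaining dichotomies the realizing halfspace lies in $\Hm(s't')$ for a fixed edge $s't'$ on a shortest $(s,t)$-path --- here one uses that $H$ separates $A$ and $B$ iff it separates $A \cup \{u_i\}$ and $B \cup \{u_{i+1}\}$ for some edge of the path, and a pigeonhole/averaging argument forces a single edge to work for a large sub-cube of dichotomies. Since a shattered set of size $\VC(\Hm(G))$ has a subcube structure, losing a constant number of coordinates still leaves a set shattered by $\Hm(s't')$, giving $|S| - 4 \le d_{s't'} \le \hat d$.

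The main obstacle I anticipate is precisely the bookkeeping in that last reduction: making the constant ``$4$'' come out correctly, i.e., showing that it suffices to discard at most four vertices of $S$ to pin the orientation to a single edge. I expect this requires: (i) discarding $a$ and $b$ themselves (or their analogues $s,t$) --- two vertices; and (ii) discarding a bounded number of additional vertices to handle the shadow-closure step, since when we replace $(\{a\},\{b\})$ by its shadow closure $(A,B)$ the residue $R$ shrinks and some points of $S$ may fall into $A \cup B$ rather than into cells --- but \Cref{x_0} shows $\partial R$ is tightly structured (it is a union of at most two cliques relative to any fixed solution), and one can absorb these into a constant number of exceptional vertices. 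A clean way to organize this is: restrict attention to the sub-family of $\Hm(G)$ realizing dichotomies that are constant (say $+$) on a carefully chosen constant-size subset $S_0 \subseteq S$ with $|S_0|\le 4$; argue every such halfspace, suitably normalized, lies in $\Hm(ab)$ for one fixed edge $ab$ determined by $S_0$; conclude $S \setminus S_0$ is shattered by $\Hm(ab)$. I would carry out the case distinction mirroring \Cref{thm:decompositionCases} only if the unified argument via shadow closure and \Cref{x_0} does not give the constant directly; otherwise a single uniform argument suffices.
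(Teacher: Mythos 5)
The lower bound and the polynomial-time computability of $\hat d$ are fine and match the paper (the latter via the cell decomposition and \Cref{thm:vc}). The problem is the upper bound, where your proposal has a genuine gap: you never actually produce an argument that a \emph{single} edge $ab$ witnesses shattering of all but four points of $S$. A ``pigeonhole/averaging argument'' over the $2^{|S|}$ dichotomies cannot work as stated, because the halfspaces realizing different dichotomies may be oriented along completely different edges of $G$ (there are up to $m$ candidate edges, so naive pigeonhole loses a $\log m$ factor, not a constant), and nothing in your sketch forces the witnessing edges to coincide on a large subcube. You correctly identify this as ``the main obstacle,'' but the obstacle is the entire content of the theorem.

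The missing idea in the paper is a \emph{reduction to a clique}: one first shows (\Cref{l:quadruplets}, \Cref{l:duchet1}, \Cref{l:almostshatteredclique}, \Cref{l:almostshatteredclique_bis}) that if $\Hm(G)$ shatters a set $S$ of size $d$, then it \emph{almost shatters} a clique $K$ of size $d$ --- this uses the Duchet-style construction that replaces the $a_i$ by mutually adjacent vertices $b_i$ with $b_i\in\cm(a_i,a_j)$, together with a distance-sum minimization to force $K$ itself to be a clique. Once $K$ is a clique, any two of its vertices $a,b$ form an edge, and \Cref{l:almostshatteredclique_bis_repetita} shows that $\Hmm(ab)=\Hm(ab)\cup\Hm(ba)\cup\{V,\varnothing\}$ shatters $K\setminus\{b,w\}$ for a third vertex $w$; combined with $\VC(\Hmm(ab))\le d_{ab}+2$ this yields $d\le d_{ab}+4$. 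The constant $4$ thus comes from losing $2$ vertices of the clique plus $2$ from passing between $\Hm(ab)$ and $\Hmm(ab)$, not from controlling the shadow closure or $\partial R$ as you suggest. Without the clique reduction there is no fixed edge to anchor the argument, so the proposal as written does not close.
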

We also show that $|\Hm(G)|\leq m2^{d}+2$, see \Cref{sub:enumeration}. %

\subsection{Proof of Theorem~\ref{thm:connected-components-vc-dim}}
We start with a characterization of $d_{ab}$.
\begin{theorem}\label{thm:vc}
    Let $ab$ be an edge such that $\Hm(ab)\neq\varnothing$. Let $(\{C_1',C_1'',\dots,C_k',C_k''\},\scC_1,\dots,\scC_p)$ be the cell decomposition of $\Hm(ab)$ as given by \Cref{thm:unifiedDecomposition}. Then
    \[d_{ab}=\max_{j\in[p]} |\scC_j|\,.\]
\end{theorem}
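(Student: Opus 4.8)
The goal is to show $d_{ab} = \max_{j\in[p]} |\scC_j|$, i.e. the VC-dimension of the class $\Hm(ab)$ equals the size of the largest block in the cell decomposition. I would split the proof into the two inequalities $d_{ab} \ge \max_j |\scC_j|$ and $d_{ab} \le \max_j |\scC_j|$, both of which should follow fairly directly from the structure given by \Cref{thm:unifiedDecomposition}.

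\textbf{Lower bound.} Fix a block $\scC_\ell$ with $|\scC_\ell| = \max_j |\scC_j|$, say $\scC_\ell = \{C_{i_1},\dots,C_{i_r}\}$. For each $t \in [r]$ pick one representative vertex $x_t \in C'_{i_t}$ if $C'_{i_t}\ne\varnothing$, and otherwise $x_t \in C''_{i_t}$; actually, to make shattering clean, I would instead pick for each $t$ a vertex $x_t$ from whichever of $C'_{i_t}, C''_{i_t}$ is nonempty — but to shatter we really want to distinguish ``$C_{i_t}$ on the $A$-side'' from ``$C_{i_t}$ on the $B$-side'', so the cleanest choice is to take $x_t \in C'_{i_t}$ when both halves are relevant. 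Let me instead argue: the set $S = \{x_1,\dots,x_r\}$ with $x_t \in C'_{i_t} \cup C''_{i_t}$ chosen arbitrarily but fixed is shattered. Indeed, given any target labeling $\sigma : [r]\to\{+,-\}$, set $\ell$ as the chosen block and let $\scC_\ell' \subseteq \scC_\ell$ be exactly those $C_{i_t}$ for which we want $x_t$ labeled $+$; by \Cref{thm:unifiedDecomposition} the halfspace $H_\sigma = A^* \cup \bigcup_{C_i\in\scC_{<\ell}\cup\scC_\ell'}C'_i \cup \bigcup_{C_i\in(\scC_\ell\setminus\scC_\ell')\cup\scC_{>\ell}}C''_i$ lies in $\Hm(ab)$, and $x_t\in H_\sigma$ iff $C_{i_t}\in\scC_\ell'$ (using that within block $\scC_\ell$ the membership of $x_t$ is governed by whether we picked $C'_{i_t}$ or $C''_{i_t}$, and that $x_t$ was chosen in a fixed half — here I must be slightly careful to choose $x_t$ in the half that actually flips, which is exactly the half it sits in). This gives $2^r$ distinct intersections with $S$, so $d_{ab}\ge r$. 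The one subtlety: I need each $C_{i_t}$ to have the property that selecting $C'_{i_t}$ vs. $C''_{i_t}$ genuinely changes whether $x_t\in H$; since $x_t$ belongs to exactly one of the two halves and the other half goes to $\cH$, this holds automatically (and twins with one empty half still work since then the ``choice'' is forced only in one direction — but wait, if $\twin$ is empty there's no choice; so I should restrict $S$ to use only cells where the block-$\ell$ choice is free). Re-examining the three cases of \Cref{thm:decompositionCases}: in Case 1 (antichain, $p=1$) every pair is a genuine twin pair so all of $\scC_1$ contributes; in Case 2 ($\cT=\varnothing$) each $C_i$ has empty twin, but the block structure still allows, \emph{within} $\scC_\ell$, an arbitrary subset $\scC_\ell'$ — here $x_t\in C_{i_t}=C'_{i_t}$ and $x_t\in H$ iff $C_{i_t}\in\scC_\ell'$, so shattering still works with $r=|\scC_\ell|$; in Case 3 the relevant block is $\scC_1=\cC_0$ (an antichain of cells with empty twins) giving $|\scC_1|$, versus $\scC_2=\{C,\twin(C)\}$ giving $2$ — and one checks the formula $\max_j|\scC_j|$ matches. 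So the lower bound holds in all cases.

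\textbf{Upper bound.} Suppose $S\subseteq V$ is shattered by $\Hm(ab)$ with $|S| = d_{ab}$. First, $S$ cannot contain any vertex of $A^*$ (such a vertex is in every $H\in\Hm(ab)$, hence cannot be split off) nor — by a symmetric argument using $B^* = V\setminus(A^*\cup \bigcup_i (C'_i\cup C''_i))$, or more precisely the complement side — any vertex forced into $\cH$; so $S\subseteq \bigcup_i (C'_i\cup C''_i)$, i.e. $S$ lies in the union of the cells. Next, I claim $S$ meets each cell $C'_i$ (resp. $C''_i$) in at most one vertex: all vertices of a single cell have the same $\Phi(ab)$-value in every solution, hence lie on the same side of every $H\in\Hm(ab)$, so two vertices in the same cell cannot be shattered. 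Thus $S$ has at most one vertex per cell; let $J\subseteq[k]$ be the indices of cells hit by $S$, and for $i\in J$ say $S$ hits the half $\hat C_i\in\{C'_i,C''_i\}$. Now the key structural point: by \Cref{thm:unifiedDecomposition}, the membership pattern of $S$ in $H$ is entirely determined by the pair $(\ell,\scC_\ell')$ — for cells in blocks $<\ell$ only $C'_i$ is used, for blocks $>\ell$ only $C''_i$, and in block $\ell$ the choice is $\scC_\ell'$. A vertex $x_i\in\hat C_i$ with $C_i\in\scC_{<\ell}$ has a \emph{fixed} label (namely $+$ if $\hat C_i = C'_i$, $-$ otherwise), similarly fixed for $C_i\in\scC_{>\ell}$; only vertices $x_i$ with $C_i\in\scC_\ell$ can vary, and for those the label is $+$ iff we pick the half containing $x_i$ for $\scC_\ell'$... actually iff $\hat C_i=C_i'$ and $C_i\in\scC_\ell'$, or $\hat C_i=C_i''$ and $C_i\notin\scC_\ell'$. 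So: for a \emph{fixed} $\ell$, the only coordinates of $S$ that can change are those hit in block $\scC_\ell$, and they can indeed be set freely. Therefore, to shatter $S$ we would need: for every subset $T\subseteq S$, some $\ell$ and $\scC_\ell'$ realizing $T$. But once $\ell$ is fixed, the coordinates of $S$ outside block $\ell$ are pinned to a single value, so the patterns achievable with that $\ell$ form a subcube of dimension $|\{i\in J: C_i\in\scC_\ell\}| \le |\scC_\ell| \le \max_j|\scC_j|$. Taking the union over the $p$ choices of $\ell$ gives at most $\sum_\ell 2^{|S\cap(\text{block }\ell)|}$ patterns, but more to the point: the full set of $2^{|S|}$ patterns must be covered, and \emph{any single $\ell$} that is ``compatible'' with a given pattern on the blocks $\ne\ell$ must have its pinned coordinates agree. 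Here is the cleanest way to finish: let $S_j = S\cap\bigcup_{C_i\in\scC_j}(C'_i\cup C''_i)$, so $S = \bigsqcup_j S_j$ and $|S| = \sum_j|S_j|$. Pick a block $j^\star$ with $|S_{j^\star}|$ maximal among $j$ with $S_j\ne\varnothing$. Consider two labelings $\sigma, \tau$ of $S$ that differ only on some coordinate in $S_{j'}$ with $j'\ne j^\star$; any $H\in\Hm(ab)$ realizing $\sigma$ uses some $\ell$, and since the label of a block-$j'$ vertex only varies when $\ell = j'$, we'd need $\ell = j'$ to flip that coordinate. Pushing this: I would show that if $S$ is shattered then all of $S$ must lie in a single block $\scC_{j^\star}$ — because a vertex in block $j'$ has a label determined by $\mathrm{sign}(j'-\ell)$ (with block $\ell$ being the free one), and shattering $S$ requires being able to flip \emph{each} coordinate independently while holding the others fixed; flipping a coordinate in block $j'$ forces $\ell\in\{j'\}$ (or makes $j'$ straddle $\ell$), which then pins every coordinate in blocks $\ne j'$ — contradiction unless all of $S$ is already in block $j'$. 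Hence $S\subseteq S_{j^\star}$ and, since $S$ has at most one vertex per cell, $|S| = |S_{j^\star}| \le |\scC_{j^\star}| \le \max_j|\scC_j|$. Combining with the lower bound yields equality.

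\textbf{Main obstacle.} The technical heart is the upper bound's ``$S$ lies in a single block'' argument: I need to nail down precisely how the label of a vertex $x\in C'_i$ (or $C''_i$) depends on the parameter pair $(\ell,\scC_\ell')$ — namely that it is constant whenever $C_i$ is not in block $\ell$, with the constant determined by whether block $\mathrm{index}(C_i)$ precedes or follows $\ell$ and by which half $x$ sits in — and then run the independence-of-coordinates argument carefully, handling the edge cases (empty twins in Cases 2 and 3, the degenerate block structure when $p=1$). I expect everything else (no vertex per cell more than once; $S$ avoids $A^*$ and the $B$-side; the lower bound construction) to be short.
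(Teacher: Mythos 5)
You take essentially the same route as the paper: the lower bound picks one representative per twin pair in a largest block and invokes \Cref{thm:unifiedDecomposition}, and the upper bound shows that a shattered set meets each twin pair at most once and lies within a single block $\scC_{j^\star}$. Two steps need repair, however. First, your justification that $S$ meets ``each cell'' at most once only covers the halves $C_i'$ and $C_i''$ separately; to conclude one point per pair $C_i=C_i'\cup C_i''$ (which your final count $|S|\le|\scC_{j^\star}|$ silently uses) you also need the observation that a vertex of $C_i'$ and a vertex of $C_i''$ receive opposite labels in every $H\in\Hm(ab)$, so they cannot both be labeled positive and hence cannot both lie in a shattered set. Second, the mechanism you state for the single-block claim --- that flipping a block-$j'$ coordinate ``forces $\ell=j'$'' --- is false as written: a block-$j'$ coordinate also flips when $\ell$ moves from one side of $j'$ to the other, without $\ell$ ever equaling $j'$. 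The correct argument (and the paper's) is pairwise: take $c\in C_r$ in block $j$ and $c'\in C_t$ in block $j'$ with $j<j'$. If $c\in C_r''$, then any $H\in\Hm(ab)$ containing $c$ must use some $\ell\le j<j'$, hence contains $C_t''$ and avoids $C_t'$, so the label of $c'$ is pinned whenever $c$ is positive; if $c\in C_r'$, the symmetric statement holds for every $H$ avoiding $c$. Either way one of the four label patterns on $\{c,c'\}$ is unrealizable, so $\{c,c'\}$ is not shattered. With these two local fixes your proof coincides with the paper's.
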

\begin{proof}
Let $j\in[p]$. For each $C_i\in\scC_j$ we can select a representative point $c_i\in C_i$. Note that exactly one of the two sets $C_i'$,$C_i''$ contains $c_i$. By \Cref{thm:unifiedDecomposition} it follows that the set of all representatives $S=\{c_i: C_i\in\scC_j\}$ is shatterable. Indeed, for every $S'\subseteq S$ there is a selection of cells that contains $S'$ but no representative from $S\setminus S'$. Thus for every $S'$ there is an m-halfspace $H$ such that $S \cap H = S'$. Hence $d_{ab}\ge \max_{j\in[p]} |\scC_j|$. 

For the other direction, let $S$ be a shatterable set. Assume that there exist $c,c'\in S$ that are contained in different $\scC_j$, $\scC_{j'}$; without loss of generality $j<j'$. Moreover let $c\in C_r$ and $c'\in C_t$ for some $r,t\in\mathbb{N}$. First assume $c\in C_r''$. Then by \Cref{thm:unifiedDecomposition} as $j<j'$ any halfspace containing $C_r''$ also contains $C_t''$ and does not contain $C_t'$. This contradicts that $c,c'$ are shatterable, i.e., there are no two halfspaces $H,H'$ with $c\in H,H'$ but $c'$ is only contained in one them. Now let $c\in C_r'$. Any halfspace not containing $C_r'$ must contain $C_r''$ and thus the previous argument applies. Here there are no halfspaces that do not contain $c$ and disagree on $c'$. Thus $S$ is contained in a single $\scC_\ell$. By \Cref{thm:unifiedDecomposition} we see that for any $S'\subseteq S$, we can select the respective cells containing just $S'$ and no points from $S$. This shows that $|\scC_\ell|\ge d_{ab}$, completing the proof.
\end{proof}
Note that, by \cref{thm:unifiedDecomposition}, one can compute $d_{ab}$ and thus $\hat d$ in polynomial time. This proves the last claim of \cref{thm:connected-components-vc-dim}.
It remains to prove the bound on $\hat d$.
The lower bound follows trivially from the definition of $\hat d$ and the fact that $\Hm(ab) \subseteq \Hm(G)$ for every $ab\in E$.
We thus turn to the upper bound. We proceed in several steps.

Let  $\Hmm(ab)=\Hm(ab)\cup \Hm(ba)\cup \{ V,\varnothing\}$. %
It is easy to see that 
\begin{equation}\label{eq:HmplusVC}
\VC(\Hmm(ab))\le d_{ab}+2\,.
\end{equation}

We say that a set $S\subseteq V$ is \emph{almost shattered} by
a set family $\mathcal F$ is for any  $S'\subsetneq S$ with $1<|S'|<|S|-1$, there exists $F\in \mathcal F$ such that $F\cap S=S'$. %
Clearly, shattering implies almost shattering.  %
We continue with the properties of sets almost shattered by $\Hm(G)$.
\begin{lemma} \label{l:quadruplets}  If $S\subseteq V$ with  $|S|\ge 4$ is almost shattered by $\Hm(G)$, then for any quadruplet $\{x,y,z,w\}$ of distinct points of $S$,  $\cm(x,y)\cap \cm(z,w)=\varnothing$.
\end{lemma}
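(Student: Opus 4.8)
The plan is to argue by contradiction, leveraging the almost-shattering hypothesis to produce a \emph{single} m-halfspace that puts $x,y$ on one side and $z,w$ on the other, and then to exploit the m-convexity of \emph{both} sides of that halfspace to reach an absurdity. No structural results about monophonic convexity are needed beyond the definition of a halfspace and the minimality of the m-hull.

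Concretely, fix four distinct points $x,y,z,w\in S$ and suppose, for contradiction, that there is a vertex $t\in\cm(x,y)\cap\cm(z,w)$. First I would apply the almost-shattering hypothesis to the subset $S':=S\setminus\{z,w\}$: since the four points are distinct we have $x,y\in S'$ and $z,w\notin S'$, and since $|S|\ge 4$ we have $1<|S|-2=|S'|<|S|-1$, so $S'$ falls in the range of subsets that almost-shattering of $S$ by $\Hm(G)$ must realize. Hence there exists $H\in\Hm(G)$ with $H\cap S=S'$; in particular $x,y\in H$ while $z,w\notin H$. Next I would invoke m-convexity twice: because $H$ is m-convex and contains $x$ and $y$, minimality of the m-hull gives $\cm(x,y)\subseteq H$, so $t\in H$; and because $H$ is an m-halfspace, its complement $\cH=V\setminus H$ is m-convex and contains $z,w$, so $\cm(z,w)\subseteq\cH$ and therefore $t\in\cH$. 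Then $t\in H\cap\cH=\varnothing$, a contradiction, which shows $\cm(x,y)\cap\cm(z,w)=\varnothing$.

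I do not anticipate a genuine obstacle. The only place requiring care is the inequality chain $1<|S'|<|S|-1$, which is exactly where the hypothesis $|S|\ge 4$ is used; and it is worth noting that for each quadruplet we only need the single ``middle'' set $S\setminus\{z,w\}$, so almost-shattering (rather than full shattering) already suffices — this is presumably why the lemma is phrased for almost-shattered sets, anticipating its use on sets that need not be shattered by $\Hm(G)$ but only by a richer auxiliary family.
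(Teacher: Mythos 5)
Your proof is correct and follows essentially the same route as the paper's: both obtain, via the almost-shattering hypothesis, a halfspace $H$ with $x,y\in H$ and $z,w\in\cH$, and then conclude from the m-convexity of $H$ and $\cH$ that the two hulls lie in disjoint sets. The only (immaterial) difference is that the paper realizes the trace $\{x,y\}$ while you realize $S\setminus\{z,w\}$; both choices satisfy $1<|S'|<|S|-1$ precisely because $|S|\ge 4$.
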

\begin{proof}  Since $S$ is almost shattered by $\Hm(G)$ and $|S|\ge 4$, there exists $H\in \Hm(G)$ such that
$H\cap S=\{ x,y\}$. Therefore, $z,w\in \cH$. Since $H$ and $\cH$ are m-convex, $\cm(x,y)\subseteq H$ and $\cm(z,w)\subseteq \cH$. Since $H$ and $\cH$ are disjoint, we get $\cm(x,y)\cap \cm(z,w)=\varnothing$.
\end{proof}
Next, we will use some properties of sets $S=\{ a_0,a_1,\ldots,a_k\}$ satisfying the property $\cm(x,y)\cap \cm(z,w)=\varnothing$ for all $x,y,z,w\in S$, established by \citet[][Lemmas 6.3 and 6.4]{duchet1988convex}. Let  $P_i$ be a shortest $(a_0,a_i)$-path for
$i\in[k]$. Let $b_1$ be a vertex of $P_1$ closest to $a_1$  having a neighbor in some of the paths $P_2,\ldots,P_k$. By \citet[][Claim]{duchet1988convex}, $b_1$ is adjacent to a vertex from each of the paths $P_2,\ldots,P_k$. On each $P_i, i=2,\ldots,k$ pick a vertex $b_i$ adjacent to $b_1$ and closest to $a_i$.
Then the vertices $b_1,\ldots,b_k$ satisfy the following properties.
\begin{lemma} \label{l:duchet1} \cite[Lemma 6.3 and its proof]{duchet1988convex} The set $\{ b_1,b_2,\ldots,b_k\}$ is a clique of size $k$ of $G$. For any $i,j=1,\ldots,k$ such that $i\ne j$,  $b_i\in \cm(a_i,a_j)$ holds.
\end{lemma}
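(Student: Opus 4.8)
The plan is to exhibit, for each pair of indices involved, an explicit induced path in $G$ that witnesses the claimed membership in the monophonic hull, and to read off the adjacencies among the $b_i$ from the shapes of these paths. Write $Q_1$ for the subpath of $P_1$ from $b_1$ to $a_1$ and, for $i\ge 2$, write $Q_i$ for the subpath of $P_i$ from $b_i$ to $a_i$. Each $Q_i$ is a subpath of a shortest (hence induced) path, so $Q_i$ is itself induced, and trivially $b_i\in P_i\subseteq I_m(a_0,a_i)\subseteq\cm(a_0,a_i)$ as well as $b_1\in\cm(a_0,a_1)$. Two consequences of the choices of the $b$'s will be used throughout: (a) since $b_1$ is the vertex of $P_1$ closest to $a_1$ with a neighbour on some $P_j$ ($j\ge 2$), every vertex of $Q_1\setminus\{b_1\}$ is strictly closer to $a_1$ on $P_1$ than $b_1$, and therefore neither lies on nor has a neighbour on any $P_j$ with $j\ge 2$; (b) since $b_i$ is the vertex of $P_i$ adjacent to $b_1$ closest to $a_i$, no vertex of $Q_i\setminus\{b_i\}$ is adjacent to $b_1$. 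Recall also that, by the Claim invoked just above, $b_1$ has a neighbour on each $P_i$, so the $b_i$ ($i\ge 2$) are well defined and $b_1\sim b_i$.

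The first step is the basic building block: for every $i\ge 2$ the concatenation $\rho_i:=(a_1,\dots,b_1,b_i,\dots,a_i)$ of $Q_1$ reversed, the edge $b_1b_i$, and $Q_i$ is an induced path. Indeed $Q_1$ and $Q_i$ are induced; $b_1\sim b_i$; $b_1$ has no neighbour in $Q_i\setminus\{b_i\}$ by (b); and by (a) no vertex of $Q_1\setminus\{b_1\}$ lies on or is adjacent to a vertex of $P_i\supseteq Q_i$, so $Q_1\cap Q_i=\varnothing$ and the only edge between $Q_1$ and $Q_i$ is $b_1b_i$. Hence $\rho_i$ is induced, and this already gives $b_1\in\cm(a_1,a_i)$ and $b_i\in\cm(a_1,a_i)$, i.e.\ the interval assertion of the lemma whenever one of the two indices equals $1$. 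The step also records the fact, needed below, that $Q_1$ is vertex-disjoint from each $Q_j$ ($j\ge 2$) and joined to it only by the edge $b_1b_j$.

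The second step isolates the technical core, which I will call $(\star)$: \emph{for all $i\ne j$ with $i,j\ge 2$, the tails $Q_i$ and $Q_j$ are vertex-disjoint and the only possible edge between them is $b_ib_j$.} Granting $(\star)$, the remaining claims follow. If $b_i\sim b_j$, then $(a_i,\dots,b_i,b_j,\dots,a_j)$, the concatenation of $Q_i$ reversed, the edge $b_ib_j$, and $Q_j$, is induced by $(\star)$, so $b_i,b_j\in\cm(a_i,a_j)$. If instead $b_i\nsim b_j$, then $(a_i,\dots,b_i,b_1,b_j,\dots,a_j)$, obtained from $Q_i$ reversed, the edges $b_ib_1,b_1b_j$, and $Q_j$, is induced: its only potential chords are ruled out by (b) (for $b_1$ against $Q_i\setminus\{b_i\}$ and $Q_j\setminus\{b_j\}$), by Step~1 (for $b_1\notin Q_i\cup Q_j$), and by $(\star)$ together with $b_i\nsim b_j$ (no edge between $Q_i$ and $Q_j$); hence $b_1\in I_m(a_i,a_j)\subseteq\cm(a_i,a_j)$, so $b_1\in\cm(a_0,a_1)\cap\cm(a_i,a_j)$, which is empty since $a_0,a_1,a_i,a_j$ are four distinct points of $S$ — a contradiction. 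Therefore $b_i\sim b_j$ for all $i\ne j$ (the cases $i=1$ or $j=1$ being the construction itself), so $\{b_1,\dots,b_k\}$ is a clique; adjacency forces the $b_i$ to be pairwise distinct, so it has size exactly $k$, and $b_i\in\cm(a_i,a_j)$ holds for all $i\ne j$. This proves the lemma.

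It remains to prove $(\star)$, which I expect to be the main obstacle. The strategy is again by contradiction: if $Q_i$ and $Q_j$ share a vertex, or are joined by an edge different from $b_ib_j$, then from the resulting extra connectivity, together with property (b) and the disjointness of $Q_1$ from $Q_i,Q_j$ recorded in Step~1, one extracts an induced $(a_i,a_j)$-path that passes through a vertex of $Q_1$; since $Q_1\subseteq\cm(a_0,a_1)$, this once more produces a point of $\cm(a_0,a_1)\cap\cm(a_i,a_j)$ and contradicts the quadruple-disjointness hypothesis. The delicate part is that making this extraction work requires a careful case analysis on where the offending vertex or edge sits along the two tails relative to $b_i$ and $b_j$, using the minimality in the choices of $b_1$ (closest to $a_1$) and of the $b_i$ (closest to $a_i$) to eliminate the chords that would otherwise shortcut past $Q_1$; this is exactly Lemma~6.3 of \citep{duchet1988convex}, whose proof we follow.
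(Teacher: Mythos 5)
This lemma is not proved in the paper at all: it is imported verbatim from \citet{duchet1988convex} (``Lemma 6.3 and its proof''), so there is no in-paper argument to compare against. Your reconstruction of the surrounding scaffolding is correct and worthwhile: properties (a) and (b) are right, Step~1 (the induced path $\rho_i$, hence $b_1,b_i\in\cm(a_1,a_i)$) is sound, and the derivation of the clique property and of $b_i\in\cm(a_i,a_j)$ for $i,j\ge 2$ \emph{from} $(\star)$ is correct — in particular the contradiction $b_1\in\cm(a_0,a_1)\cap\cm(a_i,a_j)$ in the case $b_i\nsim b_j$ is exactly the right move.

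The genuine gap is that $(\star)$, which you yourself identify as the technical core, is never proved but only deferred back to Duchet, and the one route you sketch for it would not work: extra connectivity between $Q_i$ and $Q_j$ lets you extract an induced $(a_i,a_j)$-path that \emph{avoids} $Q_1$, so you cannot hope to force such a path ``through a vertex of $Q_1$.'' The claim is nonetheless provable with the material you already have. For vertex-disjointness: any $w\in Q_i\cap Q_j$ lies on $\rho_i$, hence in $\cm(a_1,a_i)$, and on $P_j$, hence in $\cm(a_0,a_j)$; these hulls are disjoint since $a_0,a_1,a_i,a_j$ are four distinct points. For the edge condition: suppose $u\sim v$ with $u\in Q_i\setminus\{b_i\}$, $v\in Q_j$. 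Take the walk from $a_1$ along $Q_1$ to $b_1$, the edge $b_1b_j$, along $Q_j$ to $v$, the edge $vu$, and along $Q_i$ from $u$ to $a_i$, and extract from its vertex set an induced $(a_1,a_i)$-path $\pi$. Every vertex of $\pi$ lies in $\cm(a_1,a_i)$, while every vertex of $Q_j$ lies in $\cm(a_0,a_j)$, so $\pi$ must avoid $Q_j$ entirely and hence live in $Q_1\cup Q_i[u,a_i]$; but by (a) and (b) the unique $Q_1$--$Q_i$ edge is $b_1b_i$ and $b_i\notin Q_i[u,a_i]$, so this set is disconnected between $a_1$ and $a_i$ — a contradiction. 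By symmetry the only possible edge between $Q_i$ and $Q_j$ is $b_ib_j$, which is $(\star)$. With this supplied, your proof is complete and self-contained, unlike the paper's.
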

\begin{lemma} \label{l:almostshatteredclique} If the set $S=\{ a_0,a_1,\ldots, a_k\}$ is almost shattered by $\Hm(G)$, then the set $\widetilde{S}=\{ b_0,b_1,\ldots,b_k\}$, where $b_0=a_0$, is also almost shattered by $\Hm(G)$.
\end{lemma}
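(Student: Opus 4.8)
The plan is to reduce almost-shattering of $\widetilde{S}=\{b_0,\dots,b_k\}$ (where $b_0=a_0$) directly to almost-shattering of $S$ by a ``transfer'' argument. We may assume $|S|\ge 4$, since otherwise the condition $1<|S'|<|S|-1$ is met by no $S'$ and $|\widetilde{S}|\le|S|\le 3$ makes the conclusion vacuous as well. Given a prescribed trace $S'\subseteq\widetilde{S}$ with $1<|S'|<|\widetilde{S}|-1$, I will choose a companion trace $T\subseteq S$ of an admissible size, invoke the almost-shatterability of $S$ to obtain $H\in\Hm(G)$ with $H\cap S=T$, and then check $H\cap\widetilde{S}=S'$ using only m-convexity of $H$ and $\cH$ together with two containments: $b_i\in\cm(a_0,a_i)$ for all $i\in[k]$, and $b_i\in\cm(a_i,a_j)$ for all distinct $i,j\in[k]$.

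First I would establish these two ingredients. Since $|S|\ge 4$ and $S$ is almost shattered, \Cref{l:quadruplets} gives $\cm(x,y)\cap\cm(z,w)=\varnothing$ for every quadruplet of $S$, so \Cref{l:duchet1} applies and yields $b_i\in\cm(a_i,a_j)$ for all distinct $i,j\in[k]$; the same disjointness shows $a_0\notin\{b_1,\dots,b_k\}$, hence $|\widetilde{S}|=k+1$. Moreover each $P_i$ is a shortest and therefore induced $(a_0,a_i)$-path, so $b_i\in I_m(a_0,a_i)\subseteq\cm(a_0,a_i)$. I would note the only asymmetry that follows: $a_0$ is linked to the $b_i$ through containments of the first kind only, never the second.

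Then I would split on whether $a_0\in S'$. If $a_0\in S'$, write $S'=\{a_0\}\cup\{b_i:i\in I\}$; the constraint $1<|S'|<k$ forces $1\le|I|\le k-2$. Set $T=\{a_0\}\cup\{a_i:i\in I\}$, which satisfies $1<|T|<|S|-1$, so some $H\in\Hm(G)$ has $H\cap S=T$, and we may take $a_0\in H$. For $i\in I$: $a_0,a_i\in H$ and m-convexity give $b_i\in\cm(a_0,a_i)\subseteq H$. For $i\notin I$: since $|[k]\setminus I|\ge 2$ there is $j\in[k]\setminus I$ with $j\ne i$, and $a_i,a_j\in\cH$ give $b_i\in\cm(a_i,a_j)\subseteq\cH$. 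Hence $H\cap\widetilde{S}=S'$. If $a_0\notin S'$, write $S'=\{b_i:i\in I\}$ with $2\le|I|\le k-1$, set $T=\{a_i:i\in I\}$, obtain $H$ with $H\cap S=T$ (so $a_0\in\cH$), and argue symmetrically: for $i\in I$ pick $j\in I\setminus\{i\}$ (possible since $|I|\ge 2$) to get $b_i\in\cm(a_i,a_j)\subseteq H$, and for $i\notin I$ use $a_0,a_i\in\cH$ to get $b_i\in\cm(a_0,a_i)\subseteq\cH$; again $H\cap\widetilde{S}=S'$.

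There is no serious obstacle here: the one point requiring care is that the ``ambiguous'' configuration --- a single $a_i$ isolated on one side among $a_1,\dots,a_k$, from which the side of $b_i$ cannot be deduced --- never has to be realized, and this is exactly what the size restriction $1<|S'|<|\widetilde{S}|-1$ guarantees, since it keeps $I$ and $[k]\setminus I$ each large enough in the relevant branch. I expect the bookkeeping around $a_0$ (playing the role of the ``missing'' twin) to be the only mildly delicate part of writing this out in full.
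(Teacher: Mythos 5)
Your proof is correct and follows essentially the same route as the paper's: fix the desired trace, transfer it to $S$ via the bijection $a_i\leftrightarrow b_i$, take a halfspace realizing it there, and deduce the trace on $\widetilde S$ from the two containments $b_i\in\cm(a_0,a_i)$ and $b_i\in\cm(a_i,a_j)$, splitting on whether $a_0$ lies in the trace. The only (welcome) difference is that you make explicit a few points the paper leaves implicit, namely that $b_i\in\cm(a_0,a_i)$ comes from $P_i$ being induced and that the $b_i$ are pairwise distinct and distinct from $a_0$.
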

\begin{proof} Pick any $S'\subseteq S$ with $1<|S'|<|S|-1$. Let $H\in \Hm(G)$ be such that $H\cap S=S'$. Let $\widetilde{S}'=\{ b_i: a_i\in S'\}$. We assert that $H\cap \widetilde{S}=\widetilde{S}'$.
First, let  $a_0\notin S'$. Pick any $a_i\in S'$.
Since $|S'|>1$, there exists $a_j\in S', j\ne 0,i$. By Lemma~\ref{l:duchet1} and m-convexity of $H$, $b_i\in \cm(a_i,a_j)\subseteq H$. Thus $\widetilde{S}'\subseteq H\cap \widetilde{S}$. If there exists $b_\ell\in H$
such that $a_\ell\notin S'$, then since $a_0,a_\ell\in \cH$ and $b_\ell\in \cm(a_0,a_\ell)\cap H$, a contradiction with m-convexity of $\cH$. Thus $H\cap \widetilde{S}=\widetilde{S}'$.
Now, let $a_0\in S'$.  Pick any $a_i\in S'$ with $i>0$. Since $b_i\in \cm(a_0,a_i)\subseteq H$ we conclude that $\widetilde{S}'\subseteq H\cap \widetilde{S}$. Suppose that there exists $b_\ell\in H$ such that $a_ell\notin S'$. Since $|S'|<|S|-1$, there exists $a_j\notin S'$. Then $a_\ell,a_j\in \cH$ and $b_\ell\in \cm(a_\ell,a_j)\cap H$ (by Lemma~\ref{l:duchet1}), a contradiction. Thus $\Hm(G)$ almost shatters
 $\widetilde{S}$.
\end{proof}
The set $\widetilde{S}$ from Lemma~\ref{l:almostshatteredclique} consists of a clique induced by $\{ b_1,\ldots,b_k\}$ and a vertex $b_0=a_0$. In fact, we can suppose that $\widetilde{S}=\{ b_0,b_1,\ldots,b_k\}$ is a clique. For this we proceed as in the proof of Lemma 6.4 of \cite{duchet1988convex}. Suppose that initially the set $S=\{ a_0,a_1,\ldots,a_k\}$ that is  almost shattered by $\Hm(G)$ was selected to minimize the distance sum
$\sigma(S)=\sum_{a_i,a_j\in S} d(a_i,a_j)$. If $S$ is a clique, then we are done. Otherwise, we replace $S$ by $\widetilde{S}$. By Lemma~\ref{l:almostshatteredclique}, $\widetilde{S}$ is almost shattered by $\Hm(G)$. By Lemma~\ref{l:duchet1}, $b_i\in \cm(b_i,b_j)$ for any $j\ne i$ and $b_i\sim b_j$. This implies $\sigma(\widetilde{S})\le \sigma(S)$. By the choice of $S$,  $b_i=a_i$ for $i=1,\ldots, k$. Therefore, $a_1,\ldots,a_k$. Since $a_0$ is an arbitrary vertex of $S$, we deduce that $S=\{ a_0,a_1,\ldots,k\}$ is a clique of size $k+1$ of $G$. Summarizing, we obtain the following result.
\begin{lemma} \label{l:almostshatteredclique_bis} If $\Hm(G)$ shatters a set $S$, then $\Hm(G)$ almost shatters a clique $K$ of size $|S|$.
\end{lemma}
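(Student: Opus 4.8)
The plan is to pass from the shattered set to an almost‑shattered set \emph{of the same size} that is extremal for total pairwise distance, and then to show that such an extremal set is forced to be a clique. Since shattering implies almost shattering, it suffices to prove: if $\Hm(G)$ almost shatters some $(k+1)$-element set, then it almost shatters a clique with $k+1$ vertices. Among all $(k+1)$-element subsets of $V$ that are almost shattered by $\Hm(G)$, I would fix one, $S=\{a_0,\dots,a_k\}$, minimizing $\sigma(S)=\sum_{i<j}d(a_i,a_j)$, and show that $S$ is a clique. (We may assume $k+1\ge 4$; the cases $k+1\le 3$ are trivial or are not needed for \Cref{thm:connected-components-vc-dim}.)

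The argument proceeds by contradiction: suppose $S$ is not a clique. By \Cref{l:quadruplets}, $S$ satisfies $\cm(x,y)\cap\cm(z,w)=\varnothing$ for all distinct $x,y,z,w\in S$, so Duchet's construction applies. Fixing shortest $(a_0,a_i)$-paths $P_i$, \Cref{l:duchet1} produces vertices $b_1,\dots,b_k$ that form a clique, with $b_i\in P_i$ and $b_i\in\cm(a_i,a_j)$ for all $i\ne j$. Put $b_0=a_0$ and $\widetilde S=\{b_0,\dots,b_k\}$. I would first check that $|\widetilde S|=k+1$: the $b_i$ with $i\ge1$ are pairwise distinct because they form a clique, and $b_0\ne b_i$ because $a_0\in\cm(a_0,a_\ell)$ while $b_i\in\cm(a_i,a_j)$ for indices $\ell\notin\{0,i,j\}$ (such $\ell$ exists as $k+1\ge4$), and these two hulls are disjoint. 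By \Cref{l:almostshatteredclique}, $\widetilde S$ is almost shattered by $\Hm(G)$. Since $b_i$ lies on a shortest $(a_0,a_i)$-path, $d(b_0,b_i)\le d(a_0,a_i)$; and since $b_i\sim b_j$ for $i\ne j$, we have $d(b_i,b_j)=1\le d(a_i,a_j)$. Hence $\sigma(\widetilde S)\le\sigma(S)$.

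By minimality of $\sigma(S)$, and since $\widetilde S$ is an almost‑shattered $(k+1)$-set, equality holds everywhere; in particular $d(b_0,b_i)=d(a_0,a_i)$, which forces $b_i=a_i$ because $b_i$ is on a shortest $(a_0,a_i)$-path. Thus $\{a_1,\dots,a_k\}=\{b_1,\dots,b_k\}$ is a clique. Because $a_0$ entered the argument only as a distinguished element of $S$, the same reasoning applies with any vertex of $S$ playing that role, so $S\setminus\{v\}$ is a clique for every $v\in S$; as $|S|\ge 3$, this makes every pair of vertices of $S$ adjacent, contradicting that $S$ is not a clique. Therefore the minimizer $S$ is a clique, almost shattered by $\Hm(G)$, with $|S|$ equal to the size of the originally shattered set — which is exactly the claim.

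I expect the delicate part to be the last two steps: verifying that the replacement $S\mapsto\widetilde S$ keeps the cardinality at $k+1$ and preserves almost‑shattering while not increasing $\sigma$, and then squeezing the resulting chain of equalities — together with the freedom in the choice of the base point $a_0$ — to conclude that all of $S$ is a clique. Everything else is assembled directly from \Cref{l:quadruplets}, \Cref{l:duchet1}, and \Cref{l:almostshatteredclique}.
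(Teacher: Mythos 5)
Your proof is correct and follows essentially the same route as the paper's: minimize the distance sum $\sigma(S)$ over almost-shattered $(|S|)$-sets, apply Duchet's construction together with \Cref{l:quadruplets}, \Cref{l:duchet1}, and \Cref{l:almostshatteredclique} to replace $S$ by $\widetilde S$ without increasing $\sigma$, deduce $b_i=a_i$, and invoke the arbitrariness of the base point $a_0$ to conclude $S$ is a clique. You are in fact somewhat more careful than the paper in verifying that $|\widetilde S|=|S|$ and in squeezing the equality $\sigma(\widetilde S)=\sigma(S)$ to force $b_i=a_i$.
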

\begin{lemma} \label{l:almostshatteredclique_bis_repetita} If $\Hm(G)$ almost shatters a clique $K$ of size $k+3$ and $a,b,w$ are three vertices of $K$,  then  $\Hmm(ab)$ shatters the set $\widetilde K=K\setminus \{ b,w\}$.
\end{lemma}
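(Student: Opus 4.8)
The plan is to show directly that $\Hmm(ab)$ realizes every possible trace on $\widetilde K$; this is a purely combinatorial statement about traces of set systems, and nothing about monophonic convexity beyond the almost-shattering hypothesis will be needed. Write $|K|=k+3$, so $|\widetilde K|=k+1$; observe $a\in\widetilde K$ while $b,w\notin\widetilde K$. By the definition of almost shattering, for every $S'\subseteq K$ with $2\le|S'|\le k+1$ there is $H_{S'}\in\Hm(G)$ with $H_{S'}\cap K=S'$, and since $\widetilde K\subseteq K$ we then have $H_{S'}\cap\widetilde K=S'\cap\widetilde K$. Crucially, $H_{S'}\in\Hm(ab)$ whenever $a\in S'$ and $b\notin S'$, and $H_{S'}\in\Hm(ba)$ whenever $b\in S'$ and $a\notin S'$.

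I would first handle the two extreme traces with $\varnothing,V\in\Hmm(ab)$, realizing $T=\varnothing$ and $T=\widetilde K$. For $\varnothing\subsetneq T\subsetneq\widetilde K$ (so $1\le|T|\le k$) I would split on whether $a\in T$. If $a\notin T$, take $S'=T\cup\{b\}$: it has size $|T|+1\in[2,k+1]$, contains $b$ but not $a$, and satisfies $S'\cap\widetilde K=T$, so $H_{S'}\in\Hm(ba)$ has trace $T$ on $\widetilde K$. If $a\in T$ and $|T|\ge2$, take $S'=T$: it has size in $[2,k]$, contains $a$ but not $b$ (since $b\notin\widetilde K\supseteq T$), and $S'\cap\widetilde K=T$, so $H_{S'}\in\Hm(ab)$ works. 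The one remaining case, $T=\{a\}$, is handled by $S'=\{a,w\}$: the existence of a proper nonempty subset of $\widetilde K$ forces $k\ge1$, hence $2\le k+1$, and $S'$ has size $2$, contains $a$ but not $b$, and satisfies $S'\cap\widetilde K=\{a\}$, so again $H_{S'}\in\Hm(ab)$. All traces being realized, $\Hmm(ab)$ shatters $\widetilde K$.

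The whole argument is bookkeeping, and the only point I expect to need care---the ``obstacle'', such as it is---is choosing, for each target trace $T$, an extension $S'\subseteq K$ that simultaneously lies in the admissible size window $[\,2,\,|K|-2\,]$ of the almost-shattering hypothesis and contains exactly one of $a,b$ so that $H_{S'}$ actually belongs to $\Hm(ab)\cup\Hm(ba)$. Keeping $b$ outside $\widetilde K$, reserving the spare vertex $w$ only to pad the singleton trace $\{a\}$ up to size $2$, and case-splitting on $a\in T$ is exactly what makes these two requirements compatible, and is the reason the lemma is stated with a clique of size $k+3$ and three named vertices.
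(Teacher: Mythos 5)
Your proof is correct and follows essentially the same route as the paper's: handle $\varnothing$ and $\widetilde K$ via $\varnothing, V \in \Hmm(ab)$, realize traces containing $a$ of size $\ge 2$ directly, pad $\{a\}$ with the spare vertex $w$, and pad traces avoiding $a$ with $b$ (which lies outside $\widetilde K$), checking in each case that the extension lands in the almost-shattering size window $[2,|K|-2]$. The only difference from the paper is the order of the case split ($a\in T$ first versus $|T|>1$ first), which is cosmetic.
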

\begin{proof} Pick any  $K'\subseteq \widetilde K$. If $K'=\varnothing$, set  $H=\varnothing$. If $K'=K^+$, set $H=V$. In both cases, $\widetilde K\cap H=K'$. So, let $K'$ be a proper nonempty subset of $\widetilde K$. Since $K=\widetilde K\cup \{ b,w\}$, we have $1\le |K'|<|K|-1$. First, let $|K'|>1$. Since $K'\subseteq K$ and $K$ is almost
shattered by $\Hm(G)$, there exists $H\in \Hm(G)$ such that $H\cap K=K'$. Since $b\notin K'$, we have $b\in \cH$, thus $H\cap \widetilde K=K'$. It remains to show that $H$ can be selected from $\Hm(ab)$. If $a\in K'$, then  $a\in H$. Since $b\in \cH$,  $H\in \Hm(ab)$.
Now, let $a\notin K'$. Then $1<|K'\cup \{ v\}|\le k+1<|K|-1$, and by almost shattering there exists $H\in \Hm(G)$ such that $H\cap K=K'\cup \{ b\}$. But then $a\in \cH$ and $b\in H$, yielding $H\in \Hm(ab)\subseteq \Hmm(ab)$.
Now, let $|K'|=1$. If $K'=\{ a\}$, pick $H\in \Hm(G)$ such that
$H\cap K=\{ a,w\}$ (it exists since $K$ is almost shattered by $\Hm(G)$). But then $b\in \cH$, yielding $H\in \Hm(ab)$. Now, suppose  $K'=\{ x\}$ and $x\ne a$. Pick $H\in \Hm(G)$ such that
$H\cap K=\{ x,b\}$ (it exists since $K$ is almost shattered by $\Hm(G)$ and $x\ne b$). Since $a\in \cH$, we get $H\in \Hm(ab)$. Hence $\Hmm(ab)$ shatters the clique $\widetilde K=K\setminus \{ b,w\}$.
\end{proof}
By Lemma~\ref{l:almostshatteredclique_bis}, $\Hm(G)$ almost shatters a clique $K$ of size $d$. By Lemma~\ref{l:almostshatteredclique_bis_repetita}, $\Hmm(ab)$ shatters the clique $\widetilde K=K\setminus \{ b,w\}$, where
$a,b,w$ are any three distinct vertices of $K$. Consequently, $\vcdim(\Hmm(ab))\ge d-2$. By \Cref{eq:HmplusVC} we have $\vcdim(\Hmm(ab))\le d_{ab}+2$. Putting the two inequalities together, we get $d\le d_{ab}+4\le \hat d+4$. This proves the theorem.

\subsection{Enumerating and bounding the number of monophonic halfspaces}
\label{sub:enumeration}

\begin{proposition} \label{prop:halfspaces_count_bis}  Every graph $G$ satisfies $|\Hm(G)|\leq m2^{d}+2$.
\end{proposition}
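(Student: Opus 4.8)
The plan is to split off the two trivial m-halfspaces $\varnothing$ and $V$, which account for the additive $2$, and to charge every remaining m-halfspace to a single edge. Let $H\in\Hm(G)$ be non-trivial, i.e.\ $\varnothing\subsetneq H\subsetneq V$. Since $G$ is connected, $H$ has at least one \emph{boundary edge} $ab$ -- meaning $ab\in E$, $a\in H$, $b\in\cH$ -- and then $H\in\Hm(ab)$; hence $\Hm(G)\setminus\{\varnothing,V\}$ is covered by the $2m$ sets $\Hm(ab)$. A plain union bound over these sets is far too weak, since by \Cref{thm:decompositionCases} a single $\Hm(ab)$ can contain $\Omega(\omega(G))$ m-halfspaces (the ``linear quasiorder'' case~(2)). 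Instead I would charge each non-trivial $H$ to one carefully chosen boundary edge and control, per edge, only the halfspaces charged to it.

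To define the charge, fix an ordered edge $ab$ with $\Hm(ab)\neq\varnothing$ and take its cell decomposition $(\{C_1',C_1'',\dots,C_k',C_k''\},\scC_1,\dots,\scC_p)$ from \Cref{thm:unifiedDecomposition}. Call $H\in\Hm(ab)$ a \emph{first-block halfspace of $ab$} if it arises in that decomposition with index $\ell=1$; by \Cref{t:reminders=ideals} this means the conflict-free lower set $\cL$ of $(\cC,\le)$ with $H=A^*\cup\bigcup_{C\in\cL}C$ satisfies $\cL\subseteq\scC_1$. Distinct subsets of $\scC_1$ give distinct first-block halfspaces, so there are exactly $2^{|\scC_1|}$ of them; by \Cref{thm:vc} one has $d_{ab}=\max_j|\scC_j|$, and since $\Hm(ab)\subseteq\Hm(G)$ we get $2^{|\scC_1|}\le 2^{d_{ab}}\le 2^{d}$. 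Thus it suffices to establish the following claim: \emph{every non-trivial $H\in\Hm(G)$ is a first-block halfspace of at least one of its boundary edges.} Indeed, charging each such $H$ to such an edge already yields $|\Hm(G)|\le 2+2m\cdot 2^{d}$; to reach the stated $m\,2^{d}+2$ I would charge instead to the \emph{unordered} edge $\{a,b\}$, exploiting that $\Hm(ba)=\{\cH:H\in\Hm(ab)\}$ uses the same cells in the reverse block order, so that ``first-block halfspace of $ab$ or of $ba$'' is a property of $\{a,b\}$, and then a fixed tie-breaking rule caps the number of halfspaces per unordered edge at $2^{d}$.

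For the claim I would argue by an extremal choice of edge. Among all boundary edges $ab$ of $H$, pick one for which $(A^*,B^*)=\ShadowClosure(\{a\},\{b\})$ is largest, say $|A^*|+|B^*|$ maximal (equivalently, the residue $R=V\setminus(A^*\cup B^*)$ smallest), with ties broken by a fixed rule. Write $H=A^*\cup\bigcup_{C\in\cL}C$, and suppose toward a contradiction that $\cL\not\subseteq\scC_1$. Then $\cL$ meets some block $\scC_j$ with $j\ge2$ and, being a lower set, contains $\scC_1\cup\dots\cup\scC_{j-1}$ together with a cell $C\in\scC_j\cap\cL$. Choose $x_0\in\partial C=C\cap\partial R$; by \Cref{x_0}, $x_0$ is adjacent to every vertex of $\partial_A B\subseteq B\subseteq\cH$, so $x_0b'$ is again a boundary edge of $H$ for each such $b'$. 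One then shows that $\ShadowClosure(\{x_0\},\{b'\})$ is strictly larger than $(A^*,B^*)$ -- intuitively, re-anchoring the shadow closure at the ``deeper'' vertex $x_0$ forces $A^*$ together with every cell lying below $\scC_j$ in $(\cC,\le)$ -- which contradicts the choice of $ab$. Hence $\cL\subseteq\scC_1$, proving the claim.

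The step I expect to be the main obstacle is precisely this ``strictly larger'' assertion: one must prove that after re-anchoring at $x_0$ the shadow closure absorbs $A^*$ and all cells below $\scC_j$. This requires translating the arcs of the cell graphs (each witnessed by an induced path of $G$, and defining the order $\le$) into shadow-closure relations for the new pair, and should use \Cref{x_0}, \Cref{l:forgotten}, \Cref{l:emptytwin}, and the acyclicity of the cell graphs (\Cref{l:acyclic}); the remaining unordered-edge bookkeeping that trims the constant from $2^{d+1}$ to $2^{d}$ is comparatively routine.
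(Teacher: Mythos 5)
Your proposal takes a genuinely different route from the paper, but it has a real gap at exactly the point you flag: the claim that every non-trivial $H$ is a ``first-block'' halfspace of some extremally chosen boundary edge is never proven, only motivated. Everything rests on the assertion that re-anchoring the shadow closure at a vertex $x_0\in\partial C$ for a ``deep'' cell $C\in\scC_j$, $j\ge 2$, yields a strictly larger shadow-closed pair that absorbs $A^*$ and all cells below $\scC_j$. This is not a routine verification: the order $\le$ on cells is defined via $a$-arcs and reversed $b$-arcs of the 2-SAT formula built for the pair $(A^*,B^*)$, and there is no result in the paper translating those arcs into containment statements about the shadow closure of a \emph{different} anchor pair $(\{x_0\},\{b'\})$ — indeed $\ShadowClosure(\{x_0\},\{b'\})$ need not even contain $a$ or $A^*$ a priori, so ``strictly larger'' is not obviously the right monotonicity to aim for. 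Without a proof of this step the argument does not close. A secondary, smaller gap is the final bookkeeping: a tie-breaking rule between $ab$ and $ba$ does not by itself reduce the per-unordered-edge count from $2^{|\scC_1|}+2^{|\scC_p|}$ (potentially $2\cdot 2^{d_{ab}}$) to $2^{d}$; you additionally need $d_{ab}+1\le d$, which holds because $\Hm(ab)\cup\Hm(ba)$ shatters a $d_{ab}$-set from the residue together with $a$ (as the paper argues), but your sketch omits this.

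For contrast, the paper avoids any extremal edge choice. It bounds $|\Hm(G)|\le\sum_{ab\in E}|\Hm(ab)\cup\Hm(ba)|+2$ and handles the three cases of \Cref{thm:decompositionCases} directly: in the antichain and single-twin-pair cases it shows $|\Hm(ab)\cup\Hm(ba)|\le 2^{d}$ using the shattering argument above, and in the linear-quasiorder case (where a single $\Hm(ab)$ can be large) it observes that $\partial R\cup\{a,b\}$ is a clique, so each such halfspace is counted at least $|\partial R|\ge p$ times in the sum over edges, and this multi-counting compensates for the factor $p$. If you want to salvage your approach, the cleanest fix is probably to replace the extremal shadow-closure argument with this multi-counting observation, or else to prove the re-anchoring lemma in full, which I expect to be at least as long as the paper's entire proof.
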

\begin{proof} Each halfspace $H\in \Hm(G)\setminus\{V,\varnothing\}$ belongs to at least one of the sets $\Hm(ab)\cup\Hm(ba)$, where $ab$ is an edge of $G$. Consequently, $|\Hm(G)|\le \sum_{ab\in E} |\Hm(ab)\cup\Hm(ba)|+2.$
Notice also that $\vcdim(\Hm(ab))\le \vcdim(\Hm(G))=d$. Therefore, it suffices to bound $|\Hm(ab)\cup\Hm(ba)|$ for all $ab\in E$. For this we use the three cases on the cells $\cC$ provided by \Cref{thm:decompositionCases}. %
First, suppose that $(\cC,\le)$ is an antichain, selecting a cell from each pair of twins (with one potentially empty) yields a halfspace. This means that  $|\Hm(ab)|=2^t$ for $t=|\cC|/2$. By \Cref{thm:vc} these halfspaces shatter a set of size $t$. Together with $\Hm(ba)$, which shatters the same set, both jointly $\Hm(ab)\cup\Hm(ba)$ shatter the same set together with either $a$ or $b$. Thus $t+1\le d$ and hence $|\Hm(ab)\cup\Hm(ba)|=2\cdot 2^t\le 2^d$.    %
Now, suppose that $\cC$ contains a single pair of (non-empty) twin cells and $q$ other cells. From \Cref{thm:decompositionCases} we see that in this case we have $|\Hm(ab)|=2^q+1\le 2^{q+1}$. As before $\Hm(ab)$ shatters a set of size $q$ and $\Hm(ab)\cup\Hm(ba)$ additionally either of $a$ or $b$. Thus $q+1\le d$ and yet again $|\Hm(ab)\cup\Hm(ba)|\le 2^d$.
Finally, suppose that $(\cC,\le)$ is a linear quasiorder $(\cC_1,\ldots,\cC_p)$ with $p$ antichains. Let $q=\max_{i\in[p]}|\cC_i|$. By \Cref{thm:unifiedDecomposition} we have $|\Hm(ab)|=2^q$ and by \Cref{thm:vc} there is a shatterable set of size $q$. As in the first case $\Hm(ab)\cup\Hm(ba)$ additionally shatter $a$ or $b$ and thus $q+1\le d$. Thus again $|\Hm(ab)\cup\Hm(ba)|\le 2^d$.  %
Since $\cT=\varnothing$, \Cref{l:non-adjacent} implies that $\partial R\cup \{ a,b\}$ induces a clique of $G$. This implies that the number of edges of $G$ running between $H\cap \partial R$ and $\cH\cap \partial R$ is at least $|\partial R|$. Consequently, each halfspace  $H$ of $\Hm(ab)$ is counted at least $|\partial R|$ times in the sum $\sum_{ab\in E} |\Hm(ab)\cup\Hm(ba)|$. Since  $p\le |\partial R|$, we conclude that $|\Hm(G)|\le m2^d$.
\end{proof}
Combining \Cref{prop:halfspaces_count_bis} with a folklore listing algorithm yields the following result.
\begin{corollary}\label{prop:enumerationFPT}
    One can enumerate $\Hm(G)$ in time $|\Hm(G)|\poly(n)\le 2^d\poly(n)$.
\end{corollary}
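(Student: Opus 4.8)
The plan is to derive the corollary by combining the cardinality bound $|\Hm(G)|\le m2^{d}+2$ of \Cref{prop:halfspaces_count_bis} with a standard oracle-driven backtracking enumeration. Since $m\le\binom n2$, we have $m2^{d}+2\le 2^{d}\poly(n)$, so it suffices to enumerate $\Hm(G)$ in time $|\Hm(G)|\poly(n)$. The enumeration will be driven by a \emph{consistency oracle}: given disjoint $A,B\subseteq V$, decide whether some $H\in\Hm(G)$ has $A\subseteq H$ and $B\subseteq\cH$, and if so return one such $H$. Such an oracle, running in polynomial time, is exactly what \HalfspaceSeparation\ provides by \Cref{thm:monophonic-halfspace-separation}; when $A=\varnothing$ (resp.\ $B=\varnothing$) the answer is trivially ``yes'', witnessed by $\varnothing$ (resp.\ $V$).

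I would then fix an arbitrary ordering $v_1,\dots,v_n$ of $V$ and grow a binary backtracking tree in which each node at depth $i$ carries a partial assignment $\sigma\colon\{v_1,\dots,v_i\}\to\{\pm1\}$, the root carrying the empty assignment. A node $\sigma$ at depth $i<n$ is \emph{expanded} into its two children $\sigma\cup\{v_{i+1}\mapsto+1\}$ and $\sigma\cup\{v_{i+1}\mapsto-1\}$ only if the oracle certifies that the pair $(\sigma^{-1}(+1),\sigma^{-1}(-1))$ is separable by an m-halfspace; otherwise $\sigma$ is discarded. Every surviving depth-$n$ node is a total assignment realizable by some $H\in\Hm(G)$, and since the only hypothesis consistent with a total assignment is that assignment itself, this node equals $X(H)$; conversely, every $H\in\Hm(G)$ survives, because each prefix of $X(H)$ is separable. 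Hence the depth-$n$ survivors are in bijection with $\Hm(G)$, and we output each of them.

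For the running time, the key observation is that for every $i$ the partial assignments on $\{v_1,\dots,v_i\}$ that are separable are precisely the restrictions to $\{v_1,\dots,v_i\}$ of the hypotheses in $\Hm(G)$; since distinct prefixes cannot be simultaneously extended by a common hypothesis, the restriction map $\Hm(G)\to\{\pm1\}^{\{v_1,\dots,v_i\}}$ surjects onto this set, so there are at most $|\Hm(G)|$ such prefixes. Thus each of the $n+1$ levels contains at most $|\Hm(G)|$ surviving nodes, the tree has $O(n\,|\Hm(G)|)$ nodes in total, at each node we perform $O(1)$ oracle calls, and each call costs $\poly(n)$ by \Cref{thm:monophonic-halfspace-separation}. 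This gives total time $|\Hm(G)|\poly(n)\le 2^{d}\poly(n)$. Traversing the tree depth-first even yields polynomial delay, since between consecutive outputs one walks only $O(n)$ surviving nodes.

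There is no genuine obstacle here; the only points that need care are checking that the oracle of \Cref{thm:monophonic-halfspace-separation} is stated for the m-halfspace class and gracefully handles an empty side, and arguing the per-level bound $|\Hm(G)|$ on the number of live nodes, which is immediate once one notes that distinct live prefixes extend to disjoint families of hypotheses.
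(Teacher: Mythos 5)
Your proposal is correct and follows essentially the same route as the paper: the paper's proof also combines the polynomial-time consistency check of \Cref{thm:monophonic-halfspace-separation} with a folklore oracle-driven listing algorithm (cited from \citealt{blumer1989learnability}) and the cardinality bound of \Cref{prop:halfspaces_count_bis}. The only difference is that you spell out the backtracking tree and the per-level counting argument explicitly, whereas the paper cites the listing lemma as a black box.
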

\begin{proof}
    By \Cref{thm:monophonic-halfspace-separation} we can compute consistent hypotheses in polynomial time.
    Thus, we can run a  listing algorithm (e.g., \citealt[][Lemma~3.2.2]{blumer1989learnability}) to enumerate $\Hm(G)$ in time $|\Hm(G)|\poly(n)$. By \Cref{prop:halfspaces_count_bis} this is at most $2^{d}\poly(n)$.
\end{proof}

\section{Labeled sample compression schemes}\label{sec:LSCS}
In this section we describe a compression scheme of size $4\omega(G)$ for $\Hm(G)$. %
A \emph{labeled sample compression scheme} (\emph{LSCS}) \citep{LiWa,FlWa} of size $k$ for a hypothesis space
$\CC \subseteq \{ \pm 1\}^V$ is defined by a \emph{compressor} %
$\alpha: \{ \pm 1,0\}^V \to \{ \pm 1,0\}^V$ and
a \emph{reconstructor} %
$\beta: \{ \pm 1,0\}^V  \to \{ \pm 1\}^V$ such that, for any realizable
sample $X\in \Sam(\CC)$, %
$\alpha(X)\preceq X\preceq \beta(\alpha(X))$ and $|\underline{\alpha}(X)|\le k$. %
Hence, $\alpha(X)$ is a signed vector with support of size $\le k$ such that $\alpha(X)\preceq X$, and
$\beta(\alpha(X))$  is a $(-1,+1)$-sign vector compatible with the whole sample $X$. %
It suffices to define $\alpha$ only on $\Sam(\CC)$ %
and  $\beta$ only on the image of $\alpha$. The condition $X\preceq \beta(\alpha(X))$
is equivalent to $\beta(\alpha(X))|\underline{X}= X$. %
If $\beta(\alpha(X))$ belongs to $\CC$, then $(\alpha,\beta)$ is called a \emph{proper LSCS}. Finally, a LSCS $(\alpha,\beta)$ is called \emph{stable} \citep{BousquetHMZ20} if  for any $X\in \Sam(\CC)$ and any $v \in \underline{X}\setminus \underline{\alpha}(X)$ (any $v$ that is not used for compression) the equality $\beta(\alpha(X'))=\beta(\alpha(X))$ holds, where  $X'$ is obtained from $X$ by setting $X'_v=0$ and $X'_u=X_u$ for any $u\ne v$. Equivalently, a LSCS $(\alpha,\beta)$ is \emph{stable} if for any $X\in \Sam(\CC)$ and any $X'$ such that $\alpha(X)\preceq X'\preceq X$,  $\beta(\alpha(X')=\beta(\alpha(X))$ holds.
For graphs, any preprocessing on the input graph $G$, such as a labeling or an embedding of $G$, is permitted and known to both the compressor and the reconstructor.
Our main result is:
\begin{theorem} [Poly-time proper stable compression of m-halfpaces] \label{t:m-halfspaceCompression-2} For any graph $G=(V,E)$ the  space $(V,\Hm(G))$ admits a proper stable LSCS of size $4\omega(G)$ where the compressor and the decompressor both run in time $\poly(n)$.
\end{theorem}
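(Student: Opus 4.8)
The plan is to build the compression scheme on top of \HalfspaceSeparation\ (\Cref{thm:monophonic-halfspace-separation}) and the elementary structure from \Cref{mconv-recall}, so that the only real difficulty is a single size estimate. Given a realizable sample $X$, write $P=X^+$, $N=X^-$; if $P=\varnothing$ reconstruct $\varnothing$, if $N=\varnothing$ reconstruct $V$, and otherwise every consistent hypothesis is a proper m-halfspace. Call a subsample $Y\preceq X$ \emph{sufficient} if every $H\in\Hm(G)$ consistent with $Y$ is consistent with $X$; since $Y\preceq X$ this is the same as $Y$ and $X$ having the same set of consistent halfspaces, and it is an upward-closed condition on subsamples of $X$. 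I would let the reconstructor $\beta$, on input $Z$, run \HalfspaceSeparation\ on $(Z^+,Z^-)$ and output the resulting halfspace: this is always a genuine m-halfspace, so the scheme is proper, and it is consistent with all of $X$ whenever $Z$ is sufficient. I would define the compressor $\alpha(X)$ by starting from $X$ and greedily deleting vertices, in a fixed order on $V$, as long as the current subsample stays sufficient, producing an inclusion-minimal sufficient subsample. Testing sufficiency is polynomial --- $Y$ is \emph{not} sufficient iff for some $v\in\underline X\setminus\underline Y$ the subsample $Y$ with $v$ relabeled is realizable, which \HalfspaceSeparation\ decides in polynomial time --- so with \Cref{mconv-recall}(4) both $\alpha$ and $\beta$ run in polynomial time.

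For stability, suppose $\alpha(X)\preceq X'\preceq X$. Then the set of halfspaces consistent with $X'$ lies between those consistent with $\alpha(X)$ and those consistent with $X$, which coincide; hence $X'$ and $X$ have the same consistent set, so ``sufficient for $X'$'' and ``sufficient for $X$'' are the same property on subsamples contained in $X'$, and $\underline{X'}$ is itself sufficient. A short confluence argument then shows that the greedy run started from $\underline{X'}$ yields the same minimal sufficient subsample as the greedy run started from $\underline X$: whenever the two runs first reach an index $v_j$ present in $\underline X$ but absent from $\underline{X'}$, the run from $\underline X$ is forced to delete $v_j$ (its current set minus $v_j$ contains the still-sufficient state of the run from $\underline{X'}$, hence is sufficient by upward closure), after which the two runs are in the same state; iterating gives $\alpha(X')=\alpha(X)$ and therefore $\beta(\alpha(X'))=\beta(\alpha(X))$. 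Thus the scheme is stable once the size bound is established.

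What remains, and is by far the hardest part, is to show $|\underline{\alpha}(X)|\le 4\omega(G)$; for this it suffices to exhibit, for every realizable $X$, a sufficient subsample $Y_0$ with $|\underline{Y_0}|\le 4\omega(G)$ (equivalently, to bound every inclusion-minimal sufficient subsample). I would fix a consistent halfspace $H$ and run \HalfspaceSeparation\ on $(P,N)$, which exhibits $H$ together with an osculating shadow-closed pair $(A,B)$ of m-convex sets satisfying $P\subseteq A\subseteq H$ and $N\subseteq B\subseteq \cH$, with residue $R=V\setminus(A\cup B)$. By \Cref{mconv-recall}(1) and the geometry of osculating shadow-closed pairs (cf.\ \Cref{x_0}), the boundaries $\partial_B A$ and $\partial_A B$ are cliques, hence each has at most $\omega(G)$ vertices. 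Since the Carath\'eodory number is $2$ (\Cref{mconv-recall}(2)), the monophonic-hull and shadow-closure operations that generate $A$ from $P$ (and $B$ from $N$) are unions of monophonic intervals of pairs, so each vertex of $\partial_B A$ can be traced back to a bounded ``witness'' set of vertices of $P$, and symmetrically each vertex of $\partial_A B$ to a bounded witness set of vertices of $N$; collecting these witnesses gives a subsample $Y_0\preceq X$ with $|\underline{Y_0}|\le 2\,|\partial_B A|+2\,|\partial_A B|\le 4\omega(G)$.

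The main obstacle is to prove that $Y_0$ is sufficient, i.e.\ that every m-halfspace consistent with $Y_0$ is consistent with all of $X$. This cannot follow from convexity of $P$ and $N$ alone, because the monophonic hull number is not bounded in terms of $\omega(G)$ (e.g.\ for $k$ triangles sharing a vertex it is $k$ while $\omega=3$), so $\cm(Y_0^+)$ may be a proper subset of $P$. Instead one must use the rigidity of m-halfspaces --- both $H$ and $\cH$ are m-convex with clique boundaries joined by a single edge between $\partial_B A$ and $\partial_A B$ --- to show that the witness constraints already pin $H$ down up to agreement with $X$ on all of $\underline X$; this is precisely where monophonic structure, and not merely abstract convexity, is essential. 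Establishing this implication, and checking in passing that the choices of $H$, of $(A,B)$, and of the witnesses can be organized so as not to interfere with the stability argument above, is the heart of the proof; the remaining verifications are routine.
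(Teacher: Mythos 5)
Your overall framework (reconstruct via \HalfspaceSeparation, compress to a small ``sufficient'' subsample) is sensible, but the proposal has a genuine gap exactly where you say the heart of the proof lies: you never prove that a subsample of size $4\omega(G)$ is sufficient, and the route you sketch toward it does not work as stated. The paper's proof hinges on a notion you do not use: \emph{mutual imprints}. For $A=\cm(X^+)$ and $B=\cm(X^-)$ (the hulls, not the shadow closure), a pair $a\in A$, $b\in B$ are mutual imprints if some $(a,b)$-path meets $A$ only in $a$ and $B$ only in $b$. Lemma~\ref{mutual-imprints} shows that $\Imp_B(A)$ and $\Imp_A(B)$ are cliques, hence of size at most $\omega(G)$; by Carath\'eodory number $2$ each imprint lies in $\cm(x,x')$ for a pair of sample points, which is what gives the $4\omega(G)$ bound. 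The decisive step --- the one you flag as missing --- is that \emph{any} pair of complementary m-halfspaces separating $\Imp_{B^-}(A^+)$ from $\Imp_{A^+}(B^-)$ automatically separates all of $A$ from all of $B$: for $x\in A$, $y\in B$ and any induced $(x,y)$-path $Q$, the last $A$-vertex $a$ and the first $B$-vertex $b$ on $Q$ are mutual imprints lying in $\cm(x,y)$, and m-convexity of $H$ and $\cH$ then forces $x\in H$ and $y\in\cH$. Without this lemma (or an equivalent) your $Y_0$ has no reason to be sufficient, so the proof is incomplete at its core.

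There are two further problems. First, your witness-tracing goes through the shadow-closed pair produced by \HalfspaceSeparation; but shadows $A/B$ are not generated pairwise by Carath\'eodory the way hulls are (a vertex enters $A/B$ because of $B$, not because of two points of $A$), so a boundary vertex of the shadow closure need not be traceable to two points of $X^+$. The paper sidesteps this entirely by working with $\cm(X^+)$ and $\cm(X^-)$ and never invoking the 2-SAT/shadow-closure machinery for compression. Second, even granting a sufficient $Y_0$ with $|\underline{Y_0}|\le 4\omega(G)$, this does not bound your greedy compressor: an inclusion-minimal sufficient subsample can be much larger than a minimum one, so ``exhibit one small sufficient set'' and ``bound every inclusion-minimal sufficient set'' are not equivalent, contrary to your parenthetical claim. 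You would have to make the compressor output $Y_0$ itself (as the paper does, fixing a lexicographic choice of witness pairs and using Lemma~\ref{mutual-imprints-bis} to get stability), which also means your confluence-based stability argument would need to be replaced rather than merely adjusted.
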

Unlike many of our other results, \Cref{t:m-halfspaceCompression-2} does not rely on the cell decomposition (\Cref{thm:unifiedDecomposition}), but on the fact that monophonic convexity has Carathéodory number at most 2. 
\cite{FlWa} asked whether
any hypothesis space $\CC$ of VC-dimension $d$ has a LSCS of size~$O(d)$.
This remains one of the oldest open problems in machine learning; our result leaves it open for the special case of $\Hm(G)$, too.

\subsection{A labeled sample compression scheme for monophonic halfspaces}
We use the notion of mutual imprints. Let $A$ and $B$ be two disjoint m-convex sets of $G$. Two vertices $a\in A$ and $b\in B$ are called \emph{mutual imprints} of $A,B$ if there exists an $(a,b)$-path $P$ of $G$ such that $P\cap A=\{a\}$ and $P\cap B=\{ b\}$. Denote by $\Imp_B(A)$ the set of all vertices $a\in A$ such that there exists $b\in B$ so that $a$ and $b$ are mutual imprints;  $\Imp_A(B)$ is defined analogously.
Let $X\in \Sam(\Hm(G))$ and set $A=\cm(X^+),B=\cm(X^-)$. Since $X$ is realizable, $A$ and $B$ are disjoint. Consider the sets
$\Imp_B(A)$ and $\Imp_A(B)$. By Lemma~\ref{mutual-imprints}, $\Imp_B(A)$ and $\Imp_A(B)$ are cliques, thus each of them has size at most $\omega(G)$.  Since the Carathéodory number of monophonic convexity is 2 (Lemma~\ref{mconv-recall}(2)), there exists $X'\subseteq X^+$ of size at most $2\omega(G)$ such that any  $a\in \Imp_B(A)$ belongs to $\cm(x,x')$ for some $x,x'\in X'$. Analogously, there exists $X''\subseteq X^-$
of size at most $2\omega(G)$ such that any  $b\in \Imp_A(B)$ belongs to $\cm(y,y')$ for some $y,y'\in X''$. We set $\alpha^+(X)=X'$ and $\alpha^-(X)=X''$.
To obtain a stable LSCS,  consider an arbitrary  fixed total order $<$ of vertices of $G$. Suppose that all pairs of vertices  of $G$ are ordered lexicographically with respect to $<$. To build $X'$, we consider the vertices of $\Imp_B(A)$ in this order. For a current  $a\in \Imp_B(A)$ we consider the pairs  of  $X^+$ in  lexicographic order and we insert in $X'$ the first pair $\{ x',x''\}$ such that $a\in \cm(x',x'')$. Since the Carathéodory number of m-convexity is 2, such a pair necessarily exists. The set $X''$ is constructed in similar way using the pairs of $X^-$.

Now, suppose that $Y$ is a sign vector  from the image of the map $\alpha$. Then $|Y^+|\le 2\omega(G)$ and $|Y^-|\le 2\omega(G)$. The reconstructor builds the m-convex hulls $A^+:=\cm(Y^+)$ and $B^-=\cm(Y^-)$ in the graph $G$ and returns as $\beta(Y)$ the halfspace $H$ from any pair $(H,\cH)$ of complementary m-halfspaces separating $\Imp_{B^-}(A^+)$ and $\Imp_{A^+}(B^-)$, i.e., $\Imp_{B^-}(A^+)\subseteq  H$ and $\Imp_{A^+}(B^-)\subseteq \cH$.
Our main technical result, which immediately implies \Cref{t:m-halfspaceCompression-2}, is:
\begin{theorem}\label{t:m-halfspaceCompression} The pair $(\alpha,\beta)$ is a proper stable LSCS of size $4\omega(G)$ for  $\Hm(G)$. %
The maps $\alpha(\cdot)$ and $\beta(\alpha(\cdot))$ can be computed in polynomial time.
\end{theorem}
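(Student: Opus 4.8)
The plan is to verify, in order, the four defining properties of a proper stable LSCS of size $4\omega(G)$, together with the polynomial running time. The size bound and the relation $\alpha(X)\preceq X$ are immediate: $\alpha^+(X)=X'\subseteq X^+$ and $\alpha^-(X)=X''\subseteq X^-$, and since $\Imp_B(A)$ and $\Imp_A(B)$ are cliques (Lemma~\ref{mutual-imprints}) we get $|X'|\le 2|\Imp_B(A)|\le 2\omega(G)$ and $|X''|\le 2\omega(G)$, so $|\underline{\alpha}(X)|\le 4\omega(G)$. The real content is a \emph{reconstruction lemma}: if $C,D$ are disjoint nonempty m-convex sets of $G$ and $H\in\Hm(G)$ satisfies $\Imp_D(C)\subseteq H$ and $\Imp_C(D)\subseteq\cH$, then $C\subseteq H$ and $D\subseteq\cH$.

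To prove the reconstruction lemma I would first record the elementary fact that every induced path $P$ joining a vertex of $C$ to a vertex of $D$ contains a mutual imprint of $C,D$ on each side: if $u$ is the last vertex of $P$ in $C$ and $w$ the first vertex of $P$ after $u$ with $w\in D$ (which exists since $P$ ends in $D$), then the $u$--$w$ subpath of $P$ meets $C$ only in $u$ and $D$ only in $w$, so $u\in\Imp_D(C)$ and $w\in\Imp_C(D)$; in particular $\Imp_D(C),\Imp_C(D)\neq\varnothing$ because $G$ is connected. Now suppose for contradiction that some $c\in C$ lies in $\cH$, and pick any $b_0\in\Imp_C(D)$, so $b_0\in\cH$ as well. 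Since $\cH$ is m-convex, $\cm(c,b_0)\subseteq\cH$; but any induced $(c,b_0)$-path is contained in $\cm(c,b_0)$ and, by the fact above, meets $\Imp_D(C)\subseteq H$, contradicting $H\cap\cH=\varnothing$. The case of a vertex of $D$ lying in $H$ is symmetric.

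For correctness and properness, fix $X\in\Sam(\Hm(G))$ and write $A=\cm(X^+)$, $B=\cm(X^-)$, $Y=\alpha(X)$, $A^+=\cm(Y^+)$, $B^-=\cm(Y^-)$ (the degenerate cases $X^+=\varnothing$ or $X^-=\varnothing$ are handled by letting $\beta$ output $\varnothing$ or $V$). Realizability gives $H_0\in\Hm(G)$ with $X^+\subseteq H_0$, $X^-\subseteq\cH_0$, hence $A^+\subseteq A\subseteq H_0$ and $B^-\subseteq B\subseteq\cH_0$; in particular $H_0$ separates $\Imp_{B^-}(A^+)$ and $\Imp_{A^+}(B^-)$, so $\beta$ is well defined and returns some $H\in\Hm(G)$ separating these two sets. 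Applying the reconstruction lemma to $(A^+,B^-)$ yields $A^+\subseteq H$ and $B^-\subseteq\cH$. By construction every $a\in\Imp_B(A)$ satisfies $a\in\cm(x,x')$ for some $x,x'\in X'=Y^+$, so $\Imp_B(A)\subseteq\cm(Y^+)=A^+\subseteq H$ and symmetrically $\Imp_A(B)\subseteq B^-\subseteq\cH$; a second application of the reconstruction lemma, now to $(A,B)$, gives $A\subseteq H$ and $B\subseteq\cH$, whence $X^+\subseteq H$, $X^-\subseteq\cH$, i.e.\ $X\preceq\beta(\alpha(X))$, and $\beta(\alpha(X))=H\in\Hm(G)$, so the scheme is proper.

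Stability will follow once I show $\alpha(Z)=\alpha(X)$ for every $Z$ with $\alpha(X)\preceq Z\preceq X$ (then $\beta(\alpha(Z))=\beta(\alpha(X))$ since $\beta$ is a fixed map). Setting $A_Z=\cm(Z^+)$, $B_Z=\cm(Z^-)$ we have $\Imp_B(A)\subseteq\cm(Y^+)\subseteq A_Z\subseteq A$ and $\Imp_A(B)\subseteq\cm(Y^-)\subseteq B_Z\subseteq B$, and the key point is that then $\Imp_{B_Z}(A_Z)=\Imp_B(A)$. For ``$\supseteq$'', each $a\in\Imp_B(A)$ has a mutual-imprint partner $b\in\Imp_A(B)\subseteq B_Z$, and the path witnessing this avoids $A\setminus\{a\}\supseteq A_Z\setminus\{a\}$ and $B\setminus\{b\}\supseteq B_Z\setminus\{b\}$, so it also witnesses $a\in\Imp_{B_Z}(A_Z)$. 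For ``$\subseteq$'', a path from $a\in\Imp_{B_Z}(A_Z)$ to $B_Z\subseteq B$ avoiding $A_Z\setminus\{a\}$ never re-enters $A_Z$; its last vertex $v$ in $A$, if $v\neq a$, lies in $A\setminus A_Z$, yet its $B$-suffix avoids $A$, forcing $v\in\Imp_B(A)\subseteq A_Z$, a contradiction, so $a\in\Imp_B(A)$. Given this equality, running the canonical pair-selection on $Z^+$ reproduces, for each $a$, the same lexicographically first covering pair as on $X^+$, since that pair lies in $X'\subseteq Z^+\subseteq X^+$ and no $Z^+$-pair can be lex-smaller and cover $a$ without contradicting minimality over $X^+$; thus $\alpha^+(Z)=\alpha^+(X)$ and symmetrically $\alpha^-(Z)=\alpha^-(X)$. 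Polynomial time is routine: $\cm(\cdot)$ and membership $a\in\cm(x,x')$ are polynomial by Lemma~\ref{mconv-recall}(4); $\Imp_B(A)$ is computed by testing connectivity of each pair $(a,b)\in A\times B$ in $G[(V\setminus(A\cup B))\cup\{a,b\}]$; the covering pairs are found by scanning pairs of $X^+$ and $X^-$; and $\beta$ invokes the polynomial-time separation algorithm of \Cref{thm:monophonic-halfspace-separation}. I expect the main obstacle to be the stability argument, specifically establishing $\Imp_{B_Z}(A_Z)=\Imp_B(A)$ and deducing from it that the canonical pair-selection is invariant as $Z$ shrinks down to $\alpha(X)$.
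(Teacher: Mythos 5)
Your proposal is correct and follows essentially the same route as the paper: the same compressor/reconstructor, the same clique and Carath\'eodory bounds for the size, the same ``paths between disjoint m-convex sets must cross their mutual imprints'' facts (your elementary fact and imprint-preservation claim are exactly Lemmas~\ref{mutual-imprints-sep} and~\ref{mutual-imprints-bis}), and the same lexicographic-invariance argument for stability. The only difference is presentational: you package the correctness step as a standalone reconstruction lemma applied twice (to $(A^+,B^-)$ and then to $(A,B)$), whereas the paper runs the equivalent induced-path/m-convexity argument inline in one pass.
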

The pseudocode of the algorithms for compression and reconstruction are as follows.
\begin{algorithm} \label{algorithm-compression}
\DontPrintSemicolon
\KwInput{a graph $G=(V,E)$ and a realizable sample $X\in \Sam(\Hm(G))$}
\KwOutput{the compression $\alpha(X)$ of $X$}
$A=\cm(X^+)$, $B=\cm(X^-)$\;
compute the mutual imprints $\Imp_B(A)$ and $\Imp_A(B)$\;
compute the sets  $X'\subseteq X^+$ and $X''\subseteq X^-$ such that
\[ \Imp_B(A)\subseteq \bigcup_{x,x'\in X'} \cm(x,x') \mbox{ and } \Imp_A(B)\subseteq \bigcup_{x,x'\in X''} \cm(x,x')\]
\textbf{return} $\alpha(X)$, where $\alpha^+(X)=X'$ and $\alpha^-(X)=X''$\;
\caption{Compression$(X)$}
\end{algorithm}
\begin{algorithm} \label{algorithm-reconstruction}
\DontPrintSemicolon
\KwInput{a graph $G=(V,E)$ and sign vector $Y$ from the image of the map $\alpha$ ($Y=\alpha(X)$ for $X\in \Sam(\Hm(G))$}
\KwOutput{an m-halfspaces $H$ compatible with the sample $X$}
 $A^+=\cm(Y^+)$, $B^-=\cm(Y^-)$\;
 $(H,\cH)=${\sc HalfspaceSeparation}$(A^+,B^-)$\;
\textbf{return}  the halfspace $H$ as $\beta(Y)$\;
\caption{Reconstruction$(Y)$}
\end{algorithm}
\begin{lemma} \label{mutual-imprints} If $A,B$ are disjoint m-convex sets, then $\Imp_B(A),\Imp_A(B)$ are nonempty cliques.
\end{lemma}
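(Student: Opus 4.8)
The plan is to treat nonemptiness and the clique property separately. Nonemptiness is immediate, and the clique property will be reduced to the known fact (\Cref{mconv-recall}(1)) that the part of the boundary of an m-convex set facing a single connected component of the complement is a clique.

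For nonemptiness, I would take any path of $G$ (it exists since $G$ is connected and $A,B$ are nonempty) with one endpoint in $A$ and the other in $B$, of minimum length, say $P=(a=p_0,\dots,p_k=b)$ with $a\in A$ and $b\in B$. By minimality no internal vertex of $P$ lies in $A$ (that would yield a strictly shorter $A$--$B$ path), and $b\notin A$ since $A\cap B=\varnothing$; hence $P\cap A=\{a\}$, and symmetrically $P\cap B=\{b\}$. So $a$ and $b$ are mutual imprints, giving $a\in\Imp_B(A)$ and $b\in\Imp_A(B)$; both sets are nonempty.

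For the clique property it suffices to treat $\Imp_B(A)$. First I would record that $G[B]$ is connected --- the standard fact that m-convex sets induce connected subgraphs (if $G[B]$ were disconnected, a shortest path between two of its components would be an induced path leaving $B$ only on its interior, contradicting m-convexity). Hence $B$ lies inside a single connected component $B^*$ of $G[V\setminus A]$. Now let $a\in\Imp_B(A)$ with a witness path $P$ from $a$ to some $b\in B$ satisfying $P\cap A=\{a\}$: the set $P\setminus\{a\}$ is a connected subgraph of $G[V\setminus A]$ containing $b\in B^*$, so $P\setminus\{a\}\subseteq B^*$, and in particular the neighbour of $a$ along $P$ lies in $B^*$, i.e.\ $a\in\partial_{B^*}(A)$. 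Thus $\Imp_B(A)\subseteq\partial_{B^*}(A)$, and since $A$ is m-convex, \Cref{mconv-recall}(1) gives that $\partial_{B^*}(A)$ is a clique; therefore $\Imp_B(A)$ is a clique. The argument for $\Imp_A(B)$ is identical, exchanging the roles of $A$ and $B$ and using the component of $G[V\setminus B]$ containing $A$.

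I do not expect a genuine obstacle here; the two points that need care are (i) checking that the witness paths for \emph{different} vertices of $\Imp_B(A)$ reach the \emph{same} component of $G[V\setminus A]$ --- which is exactly what connectivity of $G[B]$ provides --- and (ii) invoking \Cref{mconv-recall}(1) for the correct component. If one prefers a self-contained route avoiding \Cref{mconv-recall}(1), the same conclusion follows by taking distinct $a_1,a_2\in\Imp_B(A)$, concatenating their witness paths $P_1,P_2$ with a $(b_1,b_2)$-path inside $B$ into a walk from $a_1$ to $a_2$, extracting an induced $a_1$--$a_2$ path from that walk, and noting that this path meets $A$ only at its endpoints --- so if $a_1\nsim a_2$ it would contradict the m-convexity of $A$.
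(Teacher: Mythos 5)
Your proof is correct. The nonemptiness argument is essentially the paper's (the paper picks $a\in A$, $b\in B$ with $d(a,b)=d(A,B)$ and notes any shortest $(a,b)$-path meets $A$ and $B$ only in its endpoints, which is the same minimality observation). For the clique property, however, your primary route differs from the paper's: you reduce to the Duchet characterization (\Cref{mconv-recall}(1)) by showing that $G[B]$ is connected, hence $B$ sits in a single component $B^*$ of $G[V\setminus A]$, and every witness path for $a\in\Imp_B(A)$ stays in $B^*$ after leaving $a$, so $\Imp_B(A)\subseteq\partial_{B^*}(A)$, which is a clique. The paper instead argues directly from the definition of m-convexity: it supposes $a\nsim a'$, concatenates the two witness paths with a $(b,b')$-path inside $B$, extracts an induced $(a,a')$-path whose interior avoids $A$, and contradicts m-convexity of $A$ --- exactly the ``self-contained'' alternative you sketch in your last sentence. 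Both arguments are sound and rest on the same two facts (connectivity of $B$ within $G[V\setminus A]$, and the clique property of the attachment of $A$ to that component); your version buys brevity by invoking the cited characterization as a black box, while the paper's is self-contained and does not need \Cref{mconv-recall}(1). The only point worth tightening in your write-up is the parenthetical argument that $G[B]$ is connected: to guarantee the connecting shortest path leaves $B$ on its whole interior you should take a \emph{closest} pair of vertices lying in different components of $G[B]$, not an arbitrary one.
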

\begin{proof} If $a\in A$ and $b\in B$ are such that $d(a,b)=d(A,B)$, then any shortest $(a,b)$-path $P$ intersects the sets $A$ and $B$ in $a$ and $b$, respectively. Thus the sets $\Imp_B(A)$ and $\Imp_A(B)$ are nonempty. Now, pick any  $a,a'\in \Imp_B(A)$ and suppose that $a\nsim a'$. Pick $b,b'\in \Imp_A(B)$ such that $\{ a,b\}$ and $\{ a',b'\}$ are mutual imprints (it may happen that $b=b'$). Let $P$ be an $(a,b)$-path and $P'$ be an $(a',b')$-path such that $P\cap A=\{ a\}, P\cap B=\{ b\}$ and $P'\cap A=\{ a'\}, P'\cap B=\{ b'\}$ ($P$ and $P'$ are not necessarily disjoint). Let also $R$ be a path of $B$ between the vertices $b$ and $b'$. Then $W=P\cup R\cup P'$ is an $(a,a')$-walk. All vertices of $W$ except $a$ and $a'$ do not belong to $A$. From $W$ we select a simple $(a,a')$-path $W'$ and from $W'$ one select an induced $(a,b)$-path $Q$. Since $a\nsim a'$ and all intermediate vertices of $Q$ do not belong to $A$, we get a contradiction with the assumption that $A$ is m-convex.
\end{proof}
\begin{lemma} \label{mutual-imprints-sep} If $A$ and $B$ are  disjoint m-convex sets  and $a'\in A$ and $b'\in B$ are not mutual imprints, then any $(a',b')$-path  $P$ contains vertices $a\in \Imp_B(A)$ and  $b\in \Imp_A(B)$ such that $a$ and $b$ are mutual imprints of $A,B$.
\end{lemma}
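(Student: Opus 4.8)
The plan is to extract, from the given path, a subpath that directly exhibits the required pair of mutual imprints; only the disjointness of $A$ and $B$ is needed, not their m-convexity. Write the (simple) $(a',b')$-path as $P=(v_0,v_1,\dots,v_\ell)$ with $v_0=a'$ and $v_\ell=b'$ (if $P$ were given as a walk, one would first pass to a simple subpath; all vertices considered below still lie on the original $P$). Since $v_0=a'\in A$, there is a largest index $s$ with $v_s\in A$; set $a:=v_s$. Because $b'\in B$ and $A\cap B=\varnothing$, we have $v_\ell=b'\notin A$, so $s<\ell$; hence at least one of $v_{s+1},\dots,v_\ell$ lies in $B$ (namely $v_\ell$), and we may let $t$ be the smallest index with $t>s$ and $v_t\in B$, and set $b:=v_t$.

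Next I would verify that the subpath $P':=(v_s,v_{s+1},\dots,v_t)$ witnesses that $a$ and $b$ are mutual imprints of $A$ and $B$, i.e.\ that $P'\cap A=\{a\}$ and $P'\cap B=\{b\}$. For the first equality, every $v_j$ with $j>s$ avoids $A$ by maximality of $s$, so the only candidate in $P'\cap A$ is $v_s=a$ itself. For the second, $a=v_s\in A$ is not in $B$ by disjointness, and $v_{s+1},\dots,v_{t-1}$ avoid $B$ by minimality of $t$, so the only vertex of $P'$ in $B$ is $v_t=b$. Thus $a$ and $b$ are mutual imprints of $A,B$ via $P'$, which in particular puts $a\in\Imp_B(A)$ and $b\in\Imp_A(B)$; and both $a$ and $b$ are vertices of $P$, as required.

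I do not expect a genuine obstacle here. The only points needing (routine) care are: checking $s<\ell$ so that the index $t$, and hence $b$, is well defined; and the two extremality arguments fixing $P'\cap A$ and $P'\cap B$ to a single vertex each. It is worth noting that the hypothesis that $a'$ and $b'$ are \emph{not} themselves mutual imprints is not used in the construction — it would go through verbatim, possibly with $a=a'$ and/or $b=b'$ — but that hypothesis guarantees the degenerate situation $(s,t)=(0,\ell)$ cannot occur, which is the form in which the lemma is invoked in the proof of \Cref{t:m-halfspaceCompression}.
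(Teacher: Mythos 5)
Your proposal is correct and follows essentially the same argument as the paper: take the last vertex $a$ of $P$ lying in $A$ and the first vertex $b$ of $P$ lying in $B$ after $a$; the subpath between them meets $A$ only in $a$ and $B$ only in $b$, so they are mutual imprints. Your version is just a more explicit write-up (and your side remark that the ``not mutual imprints'' hypothesis is never actually needed is accurate).
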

\begin{proof} Move from $a'$ to $b'$ along  $P$. Denote by $a$ the last vertex of $A\cap P$ and $b$ be the first vertex of $P\cap B$ that we will meet after $a$. From the choice of $a$ and $b$, the subpath of $P$ from $a$ to $b$ intersects  $A$ only in $a$ and  $B$ only in $B$. Thus $a$ and $b$ are mutual imprints.
\end{proof}
\begin{lemma} \label{mutual-imprints-bis} If $A,B$ and $A',B'$ are two pairs of disjoint m-convex sets such that $\Imp_B(A)\subseteq A'\subseteq A$ and $\Imp_A(B)\subseteq  B'\subseteq B$, then $\Imp_{B'}(A')=\Imp_B(A)$ and $\Imp_{A'}(B')=\Imp_A(B)$.
\end{lemma}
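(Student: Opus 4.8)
The plan is to prove the two stated equalities; by the evident symmetry exchanging the roles of $(A,A')$ and $(B,B')$, it suffices to establish $\Imp_{B'}(A')=\Imp_B(A)$, which I would do by proving the two inclusions separately.

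For $\Imp_B(A)\subseteq\Imp_{B'}(A')$: take $a\in\Imp_B(A)$ with mutual-imprint partner $b\in B$, witnessed by a path $P$ with $P\cap A=\{a\}$ and $P\cap B=\{b\}$. Since being mutual imprints is symmetric, $b\in\Imp_A(B)$, so by hypothesis $a\in A'$ and $b\in B'$. As $A'\subseteq A$ and $B'\subseteq B$, the very same path $P$ satisfies $P\cap A'=\{a\}$ and $P\cap B'=\{b\}$, so $a$ and $b$ are mutual imprints of $A',B'$ and hence $a\in\Imp_{B'}(A')$.

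For the reverse inclusion $\Imp_{B'}(A')\subseteq\Imp_B(A)$: take $a\in\Imp_{B'}(A')$ with partner $b\in B'$ witnessed by a path $P$ with $P\cap A'=\{a\}$ and $P\cap B'=\{b\}$. If $a,b$ happen to be mutual imprints of $A,B$, then $a\in\Imp_B(A)$ and we are done. Otherwise, $a\in A'\subseteq A$, $b\in B'\subseteq B$, $A,B$ are disjoint m-convex, and $a,b$ are not mutual imprints, so \Cref{mutual-imprints-sep} applies to the path $P$ and yields a vertex $\widehat a\in\Imp_B(A)$ lying on $P$ that is a mutual imprint of $A,B$ together with some $\widehat b\in\Imp_A(B)$. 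By hypothesis $\widehat a\in\Imp_B(A)\subseteq A'$, and since $\widehat a$ is a vertex of $P$ this forces $\widehat a\in P\cap A'=\{a\}$, i.e.\ $\widehat a=a$; hence $a\in\Imp_B(A)$.

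The only subtle point is the reverse inclusion: unlike in the forward direction, a path witnessing that $(a,b)$ are imprints of the \emph{smaller} sets $A',B'$ need not witness that they are imprints of $A,B$, since it may meet $A\setminus A'$ or $B\setminus B'$ at intermediate vertices. The fix is exactly the truncation argument encapsulated in \Cref{mutual-imprints-sep}, after which the hypothesis $\Imp_B(A)\subseteq A'$, combined with the truncated endpoint lying on $P$, pins that endpoint down to be $a$ itself. Everything else is a routine unwinding of the definition of mutual imprints.
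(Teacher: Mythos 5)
Your proof is correct and follows essentially the same route as the paper's: the forward inclusion by restricting the witnessing path to the smaller sets, and the reverse inclusion by invoking \Cref{mutual-imprints-sep} and using the hypothesis $\Imp_B(A)\subseteq A'$ together with $P\cap A'=\{a\}$ to pin the truncated imprint down to $a$ itself. (The paper phrases the last step as a contradiction rather than concluding $\widehat a=a$ directly, but the substance is identical.)
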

\begin{proof} First we show that $\Imp_B(A)\subseteq \Imp_{B'}(A')$ and $\Imp_A(B)\subseteq \Imp_{A'}(B')$. Pick any pair $a\in \Imp_B(A)$ and $b\in \Imp_A(B)$ of mutual imprints of $A$ and $B$. Then there exists a path $P$ between $a$ and $b$ such that $P\cap A=\{ a\}$ and $P\cap B=\{ b\}$. Since $a\in A'\subseteq A$ and $b\in B'\subseteq B$, we also have $P\cap A'=\{ a\}$ and $P\cap B'=\{ b\}$, thus $a$ and $b$ are mutual imprints of $A'$ and $B'$.
To prove that $\Imp_{B'}(A')\subseteq \Imp_{B}(A)$ and $\Imp_{A'}(B')\subseteq \Imp_{A}(B)$, pick an arbitrary pair $a'\in \Imp_{B'}(A')$ and $b'\in \Imp_{A'}(B')$ that are mutual imprints of $A',B'$. Let $P$ be a path such that $A'\cap P=\{ a'\}$ and $B'\cap P=\{ b'\}$. If $a'$ and $b'$ are not mutual imprints of $A,B$, by Lemma~\ref{mutual-imprints-sep}, $P$ contains two vertices $a\in \Imp_B(A)$ and $b\in \Imp_A(B)$ that are mutual imprints of $A,B$. Since $a\in \Imp_B(A)\subseteq A'$ and $b\in \Imp_B(A)\subseteq B'$, we get a contradiction with  $A'\cap P=\{ a'\}$ and $B'\cap P=\{ b'\}$.
\end{proof}
\paragraph{Proof of Theorem~\ref{t:m-halfspaceCompression}.}
 Let $X\in \Sam(\Hm(G))$, $Y=\alpha(X)$ and $H=\beta(Y)$. Let also $A=\cm(X^+), B=\cm(X^-)$ and $A^+:=\cm(Y^+),B^-=\cm(Y^-)$. Since $X$ is a realizable sample of $\Hm(G)$, $A\cap B=\varnothing$. Since $Y^+=\alpha^+(X)\subseteq X^+\subseteq A$ and $Y^-=\alpha^-(X)\subseteq X^-\subseteq B$, we conclude that $A^+\cap B^-=\cm(Y^+)\cap\cm(Y^-)\subseteq A\cap B=\varnothing$. This shows that  $\Imp_{B^-}(A^+)$ and $\Imp_{A^+}(B^-)$ are disjoint, whence the halfspace $H=\beta(\alpha(X))$ is well-defined. By the definition of $\alpha$, $\Imp_B(A)\subseteq \cm(\alpha^+(X))=A^+$ and $\Imp_A(B)\subseteq \cm(\alpha^-(X))=B^-$. From the definition of mutual imprints and $A^+\subseteq A, B^-\subseteq B$, we get $\Imp_B(A)\subseteq \Imp_{B^-}(A^+)$ and $\Imp_A(B)\subseteq \Imp_{A^+}(B^-)$.

We assert that the pair of m-halfspaces $H,\cH$ separates $A=\cm(X^+),B=\cm(X^-)$. Pick any $x\in A$ and $b\in B$. Let $Q$ be an arbitrary induced $(x,y)$-path. Let $a$ be the furthest from $x$ (along $Q$) vertex of $A$ and let $b$ be the furthest from $y$ (along $Q$) vertex of $B$. Let $P$ be the subpath of $Q$ between the vertices $a$ and $b$. From the choice of  $a$ and $b$,  $P\cap A=\{ a\}$ and $P\cap B=\{ b\}$, whence $a,b$ are mutual imprints, i.e., $a\in \Imp_B(A)\subseteq \Imp_{B^-}(A^+)\subseteq H'$ and $b\in \Imp_A(B)\subseteq \Imp_{A^+}(B^-)\subseteq H''$. Since $Q$ is an induced path of $G$, any its subpath is also an induced path of $G$. This implies that $a\in \cm(x,b)\subseteq \cm(x,y)$ and $b\in \cm(a,y)\subseteq \cm(x,y)$. Since $a\in H$ and $b\in \cH$, from these inclusions we also deduce that $x\in H$ and $y\in \cH$.
 Consequently, $X^+\subseteq H$ and $X^-\cap H=\varnothing$, thus the halfspace $H=\beta(\alpha(X))$ is compatible with the sample $X$. Consequently, $(\alpha, \beta)$ is a proper LSCS of size at most $4\omega(G).$

 Now we prove that the LSCS $(\alpha,\beta)$ is stable. Again, pick any realizable sample $X\in \Sam(\CC)$ and any $v$ from the support of $X$ such that $v$ is not used for compression ($v$ is not in the support of $\alpha(X)$, i.e., $v\notin X'\cup X''$). Suppose without loss of generality that $v\in X^+$ (the case $v\in X^-$ is analogous). Let $Z$ be the realizable sample such that $Z_v=0$ and $Z_u=X_u$ for any $u\ne v$.   Let $A'=\cm(Z^+)$ and $B'=\cm(Z^-)$. Clearly, $Z^+=X^+\setminus \{ v\}$ and $Z^-=X^-$, whence $A'\subseteq A$ and $B'\subseteq B$.  Since $v\notin X'$, $X'\subseteq Z^+$ and thus $\Imp_B(A)\subseteq \cm(X')\subseteq A'$.
 Since $Z^-=X^-$, we also get  $\Imp_A(B)\subseteq \cm(X'')\subseteq B'$. Therefore, we can use Lemma~\ref{mutual-imprints-bis} for the pairs of disjoint m-convex sets $A,B$ and $A',B'$. By this lemma,  $\Imp_{B'}(A')=\Imp_B(A)$ and $\Imp_{A'}(B')=\Imp_A(B)$.
 To prove that $\beta(\alpha(Z))=\beta(\alpha(Z))$, we will show a stronger equality $\alpha(Z)=\alpha(X)$. Since  $\alpha^-(Z)$ is defined using only the pairs of $Z^-$ and $Z^-=X^-$, we have $\alpha^-(Z)=\alpha^-(X)=X''$. To construct $\alpha^+(Z)$, the vertices of $\Imp_{B'}(A')$ are proceeded in the same order as the vertices of $\Imp_B(A)=\Imp_{B'}(A')$. For a current vertex $a$ from both imprints, the compressor included in $X'=\alpha^+(X)$ the lexicographically smallest pair $\{ x',x''\}$ such that $a\in \cm(x',x'')$. But we know that $X'\subseteq Z^+\subseteq X^+$. Therefore, the compressor will include in $Z'=\alpha^+(Z)$ the same pair $\{ x',x''\}$ for $a$. This establishes that  $\alpha^+(Z)=\alpha^+(X)$ and thus $\alpha(Z)=\alpha(X)$. Consequently, $(\alpha,\beta)$ is a stable LSCS.

 It remains to show that $\alpha$ and $\beta$ can be defined in polynomial time. Let $X\in \Sam(\Hm(G))$. First we construct the m-convex hulls $A=\cm(X^+)$ and $B=\cm(X^-)$. Then we construct the sets
$\Imp_B(A)$ and $\Imp_A(B)$ of mutual imprints.  For this, we pick any $a$ in $A$ and $b$ in $B$, remove from $G$ the sets $A\setminus \{ a\}$ and $B\setminus \{ b\}$ and in the resulting graph $G'$ we check if $a$ and $b$ can be connected by a path. If ``yes'', then we include $a$ in $\Imp_B(A)$ and $b$ in $\Imp_A(B)$, otherwise we pass to the next pair $(a,b)$. With the sets $\Imp_B(A)$ and $\Imp_A(B)$ at hand, we can construct the sets $\alpha^+(X)=X'$ and $\alpha^-(X)=X''$ as follows. Initially set $X'=\varnothing$, then  pick any $a\in \Imp_B(A)$ and any $a',a''\in A$ and test if $a$ belongs to $\cm(a',a'')$; if ``yes'', then include $a',a''$ in $X'$ and pass to the next vertex $a$ of $\Imp_B(A)$. The set $X''$ is defined analogously. Clearly, this takes polynomial time. Finally, to define $\beta(Y)$ for a sign vector $Y$ from the image of $\alpha$, the reconstructor builds the m-convex hulls $A^+:=\cm(Y^+)$ and $B^-=\cm(Y^-)$  and returns as $\beta(Y)$ the halfspace $H$ from any pair of complementary m-halfspaces $H,\cH$ separating $\Imp_{B^-}(A^+)$ and $\Imp_{A^+}(B^-)$. Since $Y\in \Sam(\Hm(G))$, the pair $\{ H,\cH\}$ exists and can be computed in polynomial time by  applying {\sc HalfspaceSeparation}$(\Imp_{B^-}(A^+),\Imp_{A^+}(B^-),\cM)$ (Theorem~\ref{thm:monophonic-halfspace-separation}). This concludes the proof of Theorem~\ref{t:m-halfspaceCompression}.

\section{Supervised learning}\label{sec:supervised} In this section we consider the problem of PAC learning of $\Hm(G)$ in the realizable and agnostic settings. %
Formally, our hypothesis space is $(V,\CC)$ where  $\CC=\Hm(G) \subseteq 2^V$ is defined implicitly by  $G$.
Recall that we denote by $X\in \{ \pm1,0\}^V$ the input sample.
Note that, while standard PAC learning formulations typically measure the running time as a function of the sample size (see, e.g., \citealt{valiant1984theory,shalev2014understanding}), we measure running time as a function of $n=|V|$.

\paragraph{Realizable PAC learning.}
The \emph{realizable} setting assumes $X\in\Sam(\Hm(G))$. 
In this case, our results yield two PAC learning stategies.
First, by \Cref{thm:monophonic-halfspace-separation} we can compute a consistent m-halfspace for any realizable sample $X$ in time $\poly(n)$. Any learner that outputs such a  hypothesis can PAC learn using
$\scO\left(\frac{d\log(1/\varepsilon)+\log(1/\delta)}{\varepsilon}\right)$
samples by standard bounds \citep{blumer1989learnability}. The bound is near-optimal up to the $\log(1/\varepsilon)$ factor; standard lower bounds have the form $\Omega\Bigl(\frac{d+\log(1/\delta)}{\varepsilon}\Bigr)$ \citep{ehrenfeucht1989general}.
Second, we can use the sample compression scheme of \Cref{t:m-halfspaceCompression}.
The scheme is stable, has size $4\omega(G)$, and can be implemented in polynomial time. %
By \citet[][Theorem 15]{BousquetHMZ20} this implies PAC learning with $\scO\left(\frac{\omega(G)+\log(1/\delta)}{\eps}\right)$ samples. Summarizing, we get the following theorem.
\begin{theorem}[Poly-time realizable PAC]\label{thm:polytime_ralizable_PAC}
In the realizable setting, the space $(V,\Hm(G))$ is PAC learnable in polynomial time using $\scO\Bigl(\frac{\min\{\omega(G),\,d\log(1/\eps)\}+\log(1/\delta)}{\varepsilon}\Bigr)$ samples.
\end{theorem}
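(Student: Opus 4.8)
The plan is to exhibit a \emph{single} polynomial-time learner that simultaneously meets both sample-complexity bounds appearing in the minimum, and then observe that it therefore meets the smaller one. The learner is the one induced by the compression scheme of \Cref{t:m-halfspaceCompression}: on input a realizable sample $X\in\Sam(\Hm(G))$ it returns the hypothesis $\beta(\alpha(X))$. By \Cref{t:m-halfspaceCompression}, this hypothesis can be computed in time $\poly(n)$, the pair $(\alpha,\beta)$ is a \emph{proper} and \emph{stable} LSCS of size $4\omega(G)$, and $X\preceq\beta(\alpha(X))$, so in particular $\beta(\alpha(X))\in\Hm(G)$ is a hypothesis consistent with $X$.

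First I would invoke the classical sample-complexity bound for consistent (``Occam''-type) learners: since $\vcdim(\Hm(G))=d$ and $\beta(\alpha(X))$ is a hypothesis of $\Hm(G)$ consistent with $X$, the results of \citet{blumer1989learnability} imply that a sample of size $\scO\bigl(\tfrac{d\log(1/\eps)+\log(1/\delta)}{\eps}\bigr)$ is enough for $\beta(\alpha(X))$ to have error at most $\eps$ with probability at least $1-\delta$. Second, since $(\alpha,\beta)$ is a stable LSCS of size $4\omega(G)$, \citet[Theorem~15]{BousquetHMZ20} yields that a sample of size $\scO\bigl(\tfrac{\omega(G)+\log(1/\delta)}{\eps}\bigr)$ is also enough for the very same learner. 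Hence, for this fixed learner, a sample whose size is the minimum of the two quantities suffices, which is exactly $\scO\bigl(\tfrac{\min\{\omega(G),\,d\log(1/\eps)\}+\log(1/\delta)}{\eps}\bigr)$; polynomial running time is inherited from \Cref{t:m-halfspaceCompression}.

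The only point that needs care is precisely this: that one algorithm enjoys \emph{both} guarantees. This is what lets us avoid having to decide in advance which of $\omega(G)$ and $d\log(1/\eps)$ is smaller — note that $\omega(G)$ is NP-hard to compute in general, while $d$ is only computable up to an additive constant via \Cref{thm:connected-components-vc-dim}, so a ``run whichever learner is better'' argument would be awkward. The key observation making the unified argument work is that the compression-scheme learner is proper and consistent, so it falls simultaneously under the scope of the VC bound for consistent learners and of the stable-compression bound, and there is nothing to choose. I do not expect any further obstacle: both cited sample-complexity statements are used as black boxes, and all the substantive work (polynomial-time halfspace separation inside the reconstructor, stability, and properness) has already been carried out in \Cref{thm:monophonic-halfspace-separation} and \Cref{t:m-halfspaceCompression}.
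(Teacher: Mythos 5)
Your proof is correct and follows essentially the same route as the paper: both bounds come from the same two black boxes (the \citet{blumer1989learnability} bound for consistent learners in a class of VC-dimension $d$, and the \citet{BousquetHMZ20} bound for stable compression schemes), instantiated with the polynomial-time machinery of \Cref{thm:monophonic-halfspace-separation} and \Cref{t:m-halfspaceCompression}. Your one refinement --- using the single proper, consistent compression learner $\beta(\alpha(\cdot))$ so that both guarantees hold simultaneously for one algorithm --- is in fact slightly cleaner than the paper's presentation, which lists the two learning strategies separately and leaves implicit how a single learner attains the minimum of the two sample complexities.
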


This makes m-halfspaces one of the few known hypothesis spaces that can be PAC-learned  in polynomial time with the optimal rate (save the gap of $\omega(G)$ and $d$).
Standard examples are intersection-closed spaces \citep{darnstadt2015optimal}, maximum classes, and halfspaces in Euclidean spaces, where the optimal rate was a long-standing open question \citep{BousquetHMZ20}.

\paragraph{Agnostic PAC learning.}
In the \emph{agnostic setting}, the sample $X\in \{ \pm1,0\}^V$  is not necessarily realizable by $\Hm(G)$. %
Our goal here is to efficiently perform ERM (\emph{empirical risk minimization}), i.e., to find an m-halfspace that minimizes the empirical risk
 $L(X,H) = \frac1{|\underline{X}|} \sum_{v\in \underline{X}} \mathbbm{1}\{X(v) \ne \mathbbm{1}_{v\in H}\}$ over all $H\in \Hm(G)$. %
\begin{theorem}[Poly-time ERM]\label{thm:erm-poly}
    There exists an algorithm that, given a graph $G=(V,E)$  and a sample $X\in  \{ \pm1,0\}^V$, computes $\argmin_{H\in\Hm(G)}L(X,H)$  in polynomial time.
\end{theorem}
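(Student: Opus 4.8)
The plan is to reduce empirical risk minimization to a family of elementary optimization problems, one per oriented edge, each solvable via the cell decomposition of \Cref{thm:unifiedDecomposition}. Since $\abs{\underline{X}}$ is a fixed normalizing constant, minimizing $L(X,H)$ over $H\in\Hm(G)$ is equivalent to maximizing the \emph{agreement} $\mathrm{agr}(X,H):=\abs{\{v\in\underline{X}:X(v)=\mathbbm{1}_{v\in H}\}}$. Because $G$ is connected, every $H\in\Hm(G)$ with $\varnothing\subsetneq H\subsetneq V$ lies in $\Hm(ab)$ for some oriented edge $ab$ (pick adjacent $a\in H$, $b\in\cH$). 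Hence it suffices to compute $\max_{H\in\Hm(ab)}\mathrm{agr}(X,H)$ for every oriented edge with $\Hm(ab)\neq\varnothing$, to compare with $\mathrm{agr}(X,V)=\abs{\{v\in\underline{X}:X(v)=+1\}}$ and $\mathrm{agr}(X,\varnothing)=\abs{\{v\in\underline{X}:X(v)=-1\}}$, and to output a halfspace attaining the overall maximum.

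For a fixed oriented edge $ab$ with $\Hm(ab)\neq\varnothing$, I would compute in polynomial time, via \Cref{thm:unifiedDecomposition}, the set $A^*$ and the cell decomposition $(\{C_1',C_1'',\dots,C_k',C_k''\},\scC_1,\dots,\scC_p)$; put $B^*:=V\setminus(A^*\cup\bigcup_{i}C_i)$, so that every $H\in\Hm(ab)$ contains $A^*$ and avoids $B^*$. For each cell $C_i$ define the agreement weights
\[
g_i'=\abs{\{v\in C_i':X(v)=+1\}}+\abs{\{v\in C_i'':X(v)=-1\}},\qquad g_i''=\abs{\{v\in C_i'':X(v)=+1\}}+\abs{\{v\in C_i':X(v)=-1\}},
\]
that is, the agreement contributed by $C_i$ when $C_i'\subseteq H$ and when $C_i''\subseteq H$, respectively, and let $g_0:=\abs{\{v\in A^*:X(v)=+1\}}+\abs{\{v\in B^*:X(v)=-1\}}$ be the fixed contribution. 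By the form of $H$ in \Cref{thm:unifiedDecomposition}, once the block index $\ell$ is fixed, every cell of $\scC_{<\ell}$ must have $C_i'\subseteq H$, every cell of $\scC_{>\ell}$ must have $C_i''\subseteq H$, and every cell of $\scC_\ell$ may be assigned to either side \emph{independently} of the others (since $\scC_\ell'\subseteq\scC_\ell$ is arbitrary). Therefore
\[
\max_{H\in\Hm(ab)}\mathrm{agr}(X,H)=\max_{\ell\in[p]}\Bigl(g_0+\sum_{C_i\in\scC_{<\ell}}g_i'+\sum_{C_i\in\scC_\ell}\max\{g_i',g_i''\}+\sum_{C_i\in\scC_{>\ell}}g_i''\Bigr),
\]
and the optimal choice of $\ell$ together with the per-cell choices attaining each $\max\{g_i',g_i''\}$ reconstructs an optimal $H\in\Hm(ab)$ explicitly.

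All weights $g_i',g_i'',g_0$ are computed in $O(n)$ time from the decomposition, and the maximum over $\ell$ is evaluated in $O(n)$ time after precomputing, block by block, the prefix sums of $\sum_{C_i\in\scC_j}g_i'$ and the suffix sums of $\sum_{C_i\in\scC_j}g_i''$. Iterating over the $O(m)$ oriented edges — each invoking the polynomial-time decomposition once, and skipping those with $\Hm(ab)=\varnothing$, which is detected by the separation algorithm of \Cref{thm:monophonic-halfspace-separation} — and over the two trivial halfspaces $V,\varnothing\in\Hm(G)$, we obtain an m-halfspace minimizing $L(X,\cdot)$ in $\poly(n)$ time. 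The genuine content here is \Cref{thm:unifiedDecomposition} itself: it is exactly the ``linearly ordered blocks, free choice inside the active block'' shape of $\Hm(ab)$ that collapses each per-edge subproblem to a one-dimensional search with independent per-cell decisions, rather than an optimization over the (exponentially many) halfspaces in $\Hm(ab)$. What is left is routine bookkeeping — empty cell halves contribute $0$; $\Hm(ab)$ and $\Hm(ba)$ are treated as distinct oriented edges; the trivial halfspaces must be kept in the final comparison — so I expect no substantial obstacle beyond invoking the decomposition correctly.
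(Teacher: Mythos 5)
Your proposal is correct and follows essentially the same route as the paper's proof: fix an oriented edge $ab$, invoke the cell decomposition of \Cref{thm:unifiedDecomposition}, observe that for each block index $\ell$ the per-cell choices in $\scC_\ell$ are independent so each cell can be optimized greedily, then take the best $\ell$, the best edge, and compare against the trivial halfspaces $V$ and $\varnothing$. Your reformulation as agreement maximization with explicit weights $g_i',g_i''$ and prefix/suffix sums is just a cleaner bookkeeping of the same argument.
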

\begin{proof}
We start with an ERM for $\Hm(ab)$ for each fixed edge $ab$. By \Cref{thm:unifiedDecomposition} there exists is a cell decomposition 
 $(\{C_1',C_1'',\dots,C_k',C_k''\}, \scC_1,\dots,\scC_p)$ such that 
 $H\in\Hm(ab)$ if and only if $H=A^* \cup(\bigcup\limits_{C_i\in\scC_{< \ell}\,\cup\,\scC_\ell'} C'_i)\cup  (\bigcup\limits_{C_i\in(\scC_\ell\setminus\scC_\ell')\,\cup\,\scC_{> \ell}}C''_i)$ 
    for some $\ell\in[p]$ and $\scC_\ell'\subseteq \scC_\ell$. 
    Let $\ell\in[p]$ and denote by $\Hml(ab)$ all halfspaces that can be written in this union by selecting this particular $\ell$. Denote $C_i=C_i'\cup C_i''$ for all $i\in[k]$ and note that the halfspaces in $\Hml(ab)$ only differ on the sets $C_i\in\scC_\ell$. More precisely, any selection of $C_i'$ or $C_i''$ (with $C_i\in\scC_\ell$) gives two halfspaces in $\Hml(ab)$ and the empirical risk of these halfspaces only differs exactly on these sets. This means that for each particular $C_i\in\scC_\ell$ we can simply select the half $C_i'$ or $C_i''$ that minimizes the empirical risk on $C_i$, i.e., select the cell $D_i\in\{C_i',C_i''\}$ with smaller $|\{v\in C_i: X(v)=0\}\cap D_i|+|\{v\in C_i: X(v)=1\}\cap (C_i\setminus D_i)|$. Individually selecting all such $D_i$ thus yields an ERM for $\Hml(ab)$. Note that this is possible in polynomial time as by \Cref{thm:unifiedDecomposition} computing the cell decomposition takes polynomial time. As any halfspaces in $\Hm(ab)$ is in $\Hml(ab)$ for some $\ell$ we get an ERM for $\Hm(ab)$ by taking the halfspace with overall smallest empirical risk from over all classes $\Hml(ab)$ with $\ell\in[p]$.

    To compute an ERM for $\Hm(G)$,
    we first compute the ERM of $\Hm(ab)$ for each edge $ab$ (and $ba$) and take the halfspace $\Hat H$ with empirical risk smallest among all these. As any halfspaces in $\Hm(G)$ is either trivial (i.e., $V$ or $\varnothing$) or in $\Hm(ab)$ or $\Hm(ba)$ for some edge $ab$, we can thus return any hypothesis from $\{V,\varnothing,\hat H\}$ with smallest empirical risk as an ERM. 
\end{proof}

\begin{corollary}[Poly-time agnostic PAC]\label{thm:PAC_agnostic}
In the agnostic setting, the space $(V,\Hm(G))$ is PAC learnable in polynomial time using $\Theta\Bigl(\frac{d+\log(1/\delta)}{\varepsilon^2}\Bigr)$ samples.
\end{corollary}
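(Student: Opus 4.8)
The plan is to obtain \Cref{thm:PAC_agnostic} as an essentially immediate consequence of the polynomial-time ERM algorithm of \Cref{thm:erm-poly} combined with the classical ``uniform convergence'' guarantees for hypothesis classes of finite VC dimension; the lower bound is the standard agnostic lower bound instantiated with $d=\vcdim(\Hm(G))$.

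For the upper bound I would recall the fundamental theorem of statistical learning: since $\vcdim(\Hm(G))=d$, there is an absolute constant $c>0$ such that, for every distribution $\mathcal D$ on $V\times\{\pm1\}$, an i.i.d.\ sample $X$ of size $m\ge c\cdot\frac{d+\log(1/\delta)}{\varepsilon^2}$ satisfies, with probability at least $1-\delta$, $\bigl|L(X,H)-L_{\mathcal D}(H)\bigr|\le \varepsilon/2$ simultaneously for all $H\in\Hm(G)$, where $L_{\mathcal D}(H)$ is the true risk (see, e.g., \citealt[Chapters~6 and~28]{shalev2014understanding}, or \citealt{blumer1989learnability}). On this event any empirical risk minimizer $\hat H\in\argmin_{H\in\Hm(G)}L(X,H)$ satisfies $L_{\mathcal D}(\hat H)\le \inf_{H\in\Hm(G)}L_{\mathcal D}(H)+\varepsilon$, which is exactly the agnostic PAC requirement. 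The learner is therefore: draw $m$ samples and run the algorithm of \Cref{thm:erm-poly}. By that theorem this takes time $\poly(n)$ (and polynomial in $m$), and since $d\le\omega(G)\le n$ the quantity $m$ is itself polynomial in $n$, $1/\varepsilon$, and $\log(1/\delta)$, so the whole procedure runs in polynomial time.

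For the matching lower bound I would invoke the standard agnostic sample-complexity lower bound: any class that shatters a set of $d$ points requires $\Omega\!\left(\frac{d+\log(1/\delta)}{\varepsilon^2}\right)$ samples for agnostic PAC learning (the $d/\varepsilon^2$ term via a packing/Fano-type argument over the functions realizable on the shattered set, the $\log(1/\delta)/\varepsilon^2$ term via a two-hypothesis Bernoulli argument; see, e.g., \citealt[Theorems~6.8 and~7.5]{shalev2014understanding}). Since $\Hm(G)$ shatters a set of size $d$ by definition of VC dimension, this bound applies verbatim, matching the upper bound up to constants and yielding the claimed $\Theta(\cdot)$.

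I do not expect any genuine obstacle: the statement is a direct instantiation of the ``ERM $+$ uniform convergence'' template, and the only point requiring (routine) care is the running-time bookkeeping, namely checking that feeding a sample of size $m=\poly(n,1/\varepsilon,\log(1/\delta))$ to the ERM procedure of \Cref{thm:erm-poly} keeps the total time polynomial in $n$. One mild subtlety worth a sentence is that the learner need not know or compute $d$: it is simply handed $m$ samples and runs the ERM algorithm; alternatively, if an explicit sample size in terms of a computable quantity is desired, one may use the polynomial-time computable bound $\hat d+4\ge d$ from \Cref{thm:connected-components-vc-dim} in place of $d$, which affects only constants.
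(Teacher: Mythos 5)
Your proposal is correct and follows exactly the paper's route: the paper's proof is a one-liner that runs the ERM of \Cref{thm:erm-poly} and cites the standard uniform-convergence upper bound and matching agnostic lower bound. You merely spell out the details (uniform convergence, the lower bound, and the running-time bookkeeping) that the paper leaves to the citations.
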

\begin{proof}
    We compute an ERM by Theorem~\ref{thm:erm-poly}, which yields the optimal agnostic sample complexity $\Theta\Bigl(\frac{d+\log(1/\delta)}{\varepsilon^2}\Bigr)$ \citep{vapnik1974theory, talagrand1994sharper}.
\end{proof}

\section{Active learning of monophonic halfspcaes}\label{sec:active} In this section, we consider active learning of m-halfspaces. Here the algorithm is given a graph $G$ and nature selects a
hypothesis $H\in \Hm(G)$.
The algorithm can query any  $x\in V$ to learn if $x$ belongs or not to $H$. The goal is to output $H$ by making as few queries as possible. This problem is a special case of realizable transductive active learning on a graph \citep{afshani2007complexity, guillory2009label,cesa2010active,dasarathy2015s2} and %
is a variant of query learning \citep{angluin1988queries, hegedHus1995generalized}. %
For a hypothesis space $(V,\scH)$, its query complexity   is the maximum number of queries an optimal algorithm makes over $H \in \scH$. More precisely, for any algorithm $A$ and any $H\in \scH$, let $\qc(A,H)$ be the number of queries $A$ make on $G$ when $H$ is chosen. The query complexity of $A$ on $\scH$ is $\qc(A,\scH)=\max_{H\in\scH} \qc(A,H)$. The \emph{query complexity} of $\scH$ is $\qc(\scH)=\min_A \qc(A,\scH)$. %

\begin{theorem}[Poly-time active learning]\label{thm:active_upper_bound}
  For any graph $G$, %
  \[\qc(\Hm(G))=\scO\left(\hull(G)+\log\diam(G)+\log\omega(G)+d\right)\,.\] %
\end{theorem}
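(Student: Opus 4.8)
The plan is to use queries to reduce the identification of the target halfspace $H\in\Hm(G)$ to a single class $\Hm(ab)$, and then to exploit the cell decomposition of \Cref{thm:unifiedDecomposition} to recover $H$ inside $\Hm(ab)$. First I would query every vertex of a minimum monophonic hull set $U$; by \Cref{mconv-recall}(5) such a $U$ is computable in polynomial time, this costs $|U|=\hull(G)$ queries, and it splits $U$ into $U^+=U\cap H$ and $U^-=U\cap\cH$. Since $\cm(U)=V$ and $H,\cH$ are m-convex, $U^+=U$ forces $V=\cm(U^+)\subseteq H$ and hence $H=V$, and symmetrically $U^-=U$ forces $H=\varnothing$; so if $U$ is monochromatic we are done, and otherwise we obtain $a\in U^+\subseteq H$ and $b\in U^-\subseteq\cH$. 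Next I would fix a shortest $(a,b)$-path $P=(p_0,\dots,p_m)$ with $m\le\diam(G)$ and binary-search for an edge $p_ip_{i+1}$ with $p_i\in H$ and $p_{i+1}\in\cH$: keeping an interval $[\mathrm{lo},\mathrm{hi}]$ with $p_{\mathrm{lo}}\in H$ and $p_{\mathrm{hi}}\in\cH$, query the midpoint $p_j$ and recurse into $[j,\mathrm{hi}]$ if $p_j\in H$ and into $[\mathrm{lo},j]$ if $p_j\in\cH$; the invariant persists and the interval halves, so after $\scO(\log\diam(G))$ queries we obtain such an edge, call it $ab$, and now $H\in\Hm(ab)$.

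\textbf{Recovery inside $\Hm(ab)$.} I would compute in polynomial time the set $A^*$ and the cell decomposition $(\{C_1',C_1'',\dots,C_k',C_k''\},\scC_1,\dots,\scC_p)$ of \Cref{thm:unifiedDecomposition}, so that
\[
H=A^*\cup\Bigl(\bigcup_{C_i\in\scC_{<\ell}\cup\scC_\ell'}C_i'\Bigr)\cup\Bigl(\bigcup_{C_i\in(\scC_\ell\setminus\scC_\ell')\cup\scC_{>\ell}}C_i''\Bigr)
\]
for some $\ell\in[p]$ and $\scC_\ell'\subseteq\scC_\ell$. All vertices of a single cell carry the same label under every halfspace of $\Hm(ab)$ (cells are the groups of an equivalence class of the $2$-SAT formula $\Phi(ab)$), so one query of a representative of a nonempty cell of block $\scC_j$ determines, via the displayed identity together with the knowledge of which half of that cell is nonempty, whether $\ell\ge j$ or $\ell\le j$. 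I would therefore binary-search over $j\in[p]$: maintaining $[\mathrm{lo},\mathrm{hi}]\ni\ell$ and querying a strictly interior block shrinks the interval, so after $\scO(\log p)$ queries we reach $\mathrm{hi}-\mathrm{lo}\le1$, and since $p\le2\omega(G)$ by \Cref{l:non-adjacent}(3) this is $\scO(\log\omega(G))$ queries. At this point every cell in the blocks $\scC_{<\mathrm{lo}}$ lies in $H$ via its $C_i'$ half and every cell in the blocks $\scC_{>\mathrm{hi}}$ lies in $H$ via its $C_i''$ half, so the only undetermined cells belong to $\scC_{\mathrm{lo}}$ and $\scC_{\mathrm{hi}}$; one query per such cell recovers $H$ completely, at cost $|\scC_{\mathrm{lo}}|+|\scC_{\mathrm{hi}}|\le2\,d_{ab}\le2d$ queries by \Cref{thm:vc}. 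Every step runs in polynomial time, and the total query count is $\hull(G)+\scO(\log\diam(G)+\log\omega(G)+d)$, as claimed.

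\textbf{Main obstacle.} The delicate point is the correctness of the two binary searches. For the path search this is the elementary observation that an interval whose endpoints carry opposite labels contains an edge with endpoints of opposite labels. For the block search one must verify, case by case using the explicit decomposition built in the proof of \Cref{thm:decompositionCases} (antichain / linear quasiorder / single non-empty twin pair), that each block contains a nonempty cell and that a single query of such a cell really does translate into a comparison of $\ell$ with $j$, once one accounts for which of $C_i',C_i''$ is nonempty; since $(\ell,\scC_\ell')$ need not be unique we cannot hope to pin down $\ell$ exactly, which is why the search is stopped at an interval of size $\le2$ and two blocks are queried at the end rather than one. A secondary point is to confirm $p\le2\omega(G)$, which follows from the bound $|\cC|\le2\omega(G)$ on the number of cells.
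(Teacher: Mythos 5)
Your proposal is correct and follows essentially the same route as the paper: query a minimum monophonic hull set, binary-search a shortest path between oppositely labeled hull vertices to localize an edge $ab$ with $H\in\Hm(ab)$, then binary-search the blocks of the cell decomposition of \Cref{thm:unifiedDecomposition} down to a window of two blocks and query one representative per cell there, with the same bounds $p\le2\omega(G)$ and $|\scC_j|\le d$ from \Cref{l:non-adjacent} and \Cref{thm:vc}. Your treatment of the non-uniqueness of $(\ell,\scC_\ell')$ and of the path-search invariant is, if anything, slightly more explicit than the paper's.
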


\begin{proof} %
By Lemma~\ref{mconv-recall}(5) we can compute in polynomial time a minimum hull set $X$, i.e.,  a minimum set $X$ with the property $\cm(X)=V$.
Then our algorithm queries all vertices of $X$. If they have all the same label, all vertices of $G$ have the same label. %
Otherwise, we pick  $u',v'$ in $X$ with $u'$ positive and $v'$ negative, and find a shortest $(u',v')$-path $P$. %
As the labels are given by a m-halfspace $H$, $P$ has exactly one edge $e=uv$ such that all vertices on the subpath of $P$ between $u'$ and $u$ are positive and all vertices on the subpath between $v$ and $v'$ are negative. We identify $e$ using $\scO(\log\diam(G))$ queries through binary search on $P$. From this we know that $H\in\Hm(uv)$. Then we use \Cref{thm:unifiedDecomposition} to compute a cell decomposition $(\{C_1',C_1'',\dots,C_k',C_k''\},\scC_1,\dots,\scC_p)$ of $\Hm(uv)$ in polynomial time. Let $C_i=C_i'\cup C_i''$ and select a representative $c_i\in C_i$ for all $i\in[k]$. By \Cref{thm:unifiedDecomposition} (see also \Cref{fig:decomposition}) we have to identify $\ell\in[p]$ and the correct splits into $C_i',C_i''$ of all $C_i\in\scC_\ell$ to fully determine the target halfspace $H$. In particular, $H$ contains all $C_i'\in\scC_{<\ell}$ and all $C_i''\scC_{>\ell}$. We perform the following binary search on the $\scC_1,\dots,\scC_p$: fix a $\scC_j$ and query any representative $c_i$ in a cell $C_i\in\scC_j$. If $c_i\in C_i'$ and its label is positive (or $c_i\in C_i''$ and its label is negative) we know that $\ell\ge j$. If not we know that $\ell\le j$. That way after $\scO(\log p)\le \scO(\log \omega(G))$ queries we can identify the correct $\ell$ in $\{j,j+1\}$, where the inequality follows by \Cref{l:non-adjacent}. In both $\scC_j$ and $\scC_{j+1}$ we query all representatives. This allows us to determine all cells and hence $H$ as discussed. As $|\scC_j|,|\scC_{j+1}|\le d$ by \Cref{thm:vc} the query complexity follows.
\end{proof}

Theorem~\ref{thm:active_upper_bound} should be contrasted with the active learning algorithm of \citet{thiessen2021active} for \emph{geodesic} halfspaces; that algorithm is not polynomial time, %
as it requires solving the minimum geodesic hull set problem, which is APX-hard. Also note that the simpler approach by \citet{BrEsTh24} results in the worse query complexity $\scO\left(\hull(G)+\log\diam(G)+\omega(G)\right)$.
\paragraph{Lower bounds.}
Along the previously mentioned separation axioms, we achieve increasingly tighter lower bounds on the query complexity, eventually matching our upper bound from Theorem~\ref{thm:active_upper_bound} for all $S_3$ graphs.%
\begin{proposition}\label{thm:lower_bounds}
The following holds for the query complexity $\qc(\Hm(G))$:
\begin{itemize}[itemsep=2pt,parsep=0pt,topsep=4pt]
    \item $\qc(\Hm(G))=\Omega(d)$,
	\item if $G$ is $S_2$, then $\qc(\Hm(G))= \Omega(d+\log\diam(G))$, and
	\item if $G$ is $S_3$, then  $\qc(\Hm(G))=\Omega(d+\log\diam(G)+\log\omega(G)+\hull(G))$.
\end{itemize}
\end{proposition}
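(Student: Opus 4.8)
The plan is to prove the three (successively stronger) lower bounds by exhibiting, inside $\Hm(G)$, a subfamily of halfspaces that is already hard to identify with membership queries; this suffices because $\qc$ is monotone under taking subfamilies (a query algorithm correct on $\Hm(G)$ is correct on any $\scH\subseteq\Hm(G)$ and there makes no more queries in the worst case). I will use two elementary facts. First, a deterministic algorithm making at most $q$ queries has its next query determined by the past answers, so its whole transcript is determined by a binary string of length at most $q$; since distinct target halfspaces must force distinct transcripts, $\qc(\scH)\ge\log_2|\scH|$. Second, a contiguous sub-path of a shortest path is again a shortest path, hence induced, so every internal vertex of such a sub-path lies in the m-interval, a fortiori the m-hull, of its two endpoints; together with m-convexity of halfspaces this controls how a halfspace can meet a geodesic. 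The bound $\qc(\Hm(G))=\Omega(d)$ is then immediate: $\Hm(G)$ shatters a set of size $d$, so $|\Hm(G)|\ge 2^{d}$ and $\qc(\Hm(G))\ge d$.

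For the $S_2$ bound I would fix a diametral geodesic $P=(x_0,\dots,x_D)$ with $D=\diam(G)$. For each $i\in\{0,\dots,D-1\}$, $S_2$ gives a halfspace $H_i$ with $x_i\in H_i$ and $x_{i+1}$ in the complementary halfspace $V\setminus H_i$. One then checks that in fact $x_0,\dots,x_i\in H_i$ and $x_{i+1},\dots,x_D\in V\setminus H_i$: if some $x_k$ with $k<i$ were in $V\setminus H_i$, then $x_k,x_{i+1}\in V\setminus H_i$, and since the sub-path of $P$ from $x_k$ to $x_{i+1}$ is induced and passes through $x_i$, m-convexity would force $x_i\in V\setminus H_i$, a contradiction; the case $k>i+1$ is symmetric, using m-convexity of $H_i$ and the induced sub-path from $x_i$ to $x_k$. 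Thus $H_0,\dots,H_{D-1}$ have pairwise distinct traces on $V(P)$, so they are distinct halfspaces, and $\qc(\Hm(G))\ge\log_2 D$. Combined with $\Omega(d)$ this yields $\Omega(d+\log\diam(G))$.

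Now suppose $G$ is $S_3$. Applying $S_3$ to the (m-convex) singleton $\{p\}$ yields $S_2$, so the previous bound survives, and it remains to add $\log\omega(G)$ and $\hull(G)$. For the former I would take a clique $K$ with $|K|=\omega(G)$; every subset of $K$ is m-convex (for $u\sim v$ one has $I_m(u,v)=\{u,v\}$), so pick a maximal chain $\varnothing=C_0\subsetneq C_1\subsetneq\cdots\subsetneq C_{\omega(G)}=K$, say $C_{i+1}=C_i\cup\{z_{i+1}\}$. For each $i<\omega(G)$, $S_3$ applied to the m-convex set $C_i$ and the point $z_{i+1}\notin C_i$ gives a halfspace $H_i\supseteq C_i$ with $z_{i+1}\notin H_i$; since $z_{i+1}\in C_j\subseteq H_j$ for every $j>i$ while $z_{i+1}\notin H_i$, these are pairwise distinct, so $\qc(\Hm(G))\ge\log_2\omega(G)$. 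For the term $\hull(G)$ I would consider the target $H=V$: every query is then answered ``the vertex lies in $H$'', and as long as the set $Q$ of vertices queried so far satisfies $\cm(Q)\neq V$, the m-convex set $\cm(Q)$ is proper, so $S_3$ (separating $\cm(Q)$ from any vertex outside it) produces a halfspace with $Q\subseteq\cm(Q)\subseteq H'\subsetneq V$; hence $V$ and $H'$ are both consistent with the answers and no algorithm can yet output correctly. So any algorithm must query a set $Q$ with $\cm(Q)=V$, i.e.\ a hull set, whence it makes at least $\hull(G)$ queries on target $V$ and $\qc(\Hm(G))\ge\hull(G)$. Combining the four inequalities (the four nonnegative terms sum to at most four times their maximum) gives $\qc(\Hm(G))=\Omega(d+\log\diam(G)+\log\omega(G)+\hull(G))$.

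The bookkeeping that merges the four bounds and the $\Omega(d)$ bound itself are routine. The step I expect to require the most care is the $\hull(G)$ bound: one must argue that exact identification is what is being forced, and — the crucial point — that $S_3$ is exactly what lets one extend a proper m-convex set to a proper halfspace; this genuinely fails without $S_3$ (for instance $\Hm(C_5)=\{\varnothing,V\}$, so $\qc(C_5)=1<2=\hull(C_5)$). A secondary point to get right is the short m-convexity argument pinning down the traces of the $H_i$ on the diametral geodesic in the $S_2$ case.
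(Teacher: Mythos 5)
Your proof is correct and follows essentially the same route as the paper's: the information-theoretic bound $\qc\ge\log_2|\Hm(G)|$ combined with $|\Hm(G)|\ge 2^d$, a family of $\diam(G)$ distinct halfspaces cutting a diametral geodesic under $S_2$, $\omega(G)$ distinct halfspaces from clique subsets under $S_3$, and the all-positive adversary forcing a hull set to be queried. Your write-up is in fact slightly more careful than the paper's in two spots it leaves implicit — verifying via m-convexity that the geodesic halfspaces have distinct traces, and giving the decision-tree justification of $\qc\ge\log_2|\scH|$ rather than citing it.
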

\begin{proof}
It holds that $\qc(\Hm(G))\ge \log |\Hm(G))|$  \citep{hegedHus1995generalized}. As $|\Hm(G)|\ge 2^d$ the first statement follows.
For $S_2$ graphs we can take take a shortest path $P$ of maximum length. By  $S_2$, for any each $uv$ of $P$ there must be a m-halfspace separating $uv$. Thus $|\Hm(G)|\ge \diam(G)$.
For $S_3$ graphs, let $X$ be a hull set of size $\hull(G)$. Now let $A$ be any active learning algorithm and let $A$ query its first $|X|-1$ nodes $S$, where the oracle always returns positive labels. As $X$ is minimum and $|S|\le k-1$, there is exists a node $x\in V\setminus \cm(S)$. By $S_3$, there exists a halfspace $H_x$ separating $x$ from $S$. The algorithm $A$ cannot distinguish between $V$ and $H_x$ and hence has to continue querying. This implies that $\qc(\Hm(G))\ge \hull_m(G)$. Moreover, on $S_3$ graphs, for any clique $C$ and a node $c\in C$ there is halfspace separating $c$ from $C\setminus\{c\}$. Thus $|\Hm(G)|\ge \omega(G)$ and hence $\qc(G)\ge \log\omega(G)$.
\end{proof}
\section{Teaching monophonic halfspaces}\label{sec:teaching}
In this section, we bound the (recursive) teaching dimension of $\Hm(G)$ by $2d+2$.
In machine teaching, given a hypothesis space $\CC\subseteq \{ \pm 1\}^V$, a teacher presents to a learner a set $T(C)$
of correctly labeled examples from a hypothesis $C\in \CC$ in such a way that the learner can reconstruct $C$ from $T(C)$.  %
A \emph{teaching set} for $C\in \CC$ is a set $T(C)$ of labeled examples such that $C$ is the
unique hypothesis of $\CC$ that is consistent with $T(C)$. Denote by $\cTS(C,\CC)$ the collection of all teaching sets for $C$ and let $\TS(C,\CC)$ be the size of the smallest set of $\cTS(C,\CC)$.
The quantity $\TD(\CC)=\max \{ \TS(C,\CC): C\in \CC\}$ is called the
\emph{teaching dimension}  of $\CC$ \citep{GK95}. A \emph{teaching plan} is a sequence $P=\{  (C_1,T(C_1)),\ldots (C_N,T(C_N))\}$ such that $\CC=\{ C_1,\ldots,C_N\}$ and  $T(C_i)\in \TS(C_i, \{ C_i,\ldots,C_N\})$, for any  $i=1,\ldots,N$ \citep{ZillesLHZ11}. Let $\ord(P)=\max_{\{i=1,\ldots,N\}}|T(C_i)|$. The \emph{recursive teaching dimension} $\RTD(\CC)$ of $\CC$ is the minimum of $\ord(P)$ taken over all teaching plans $P$ for $\CC$ \citep{ZillesLHZ11}.
Let $\wHm(G)=\Hm(G)\setminus \{ \varnothing, V\}$. \Cref{thm:unifiedDecomposition} provides us with an easy way to bound $\TD(\wHm(G))$ and $\RTD(\Hm(G))$ in terms of $d=\vcdim(\Hm(G))$.
\begin{theorem}[Teaching dimension of m-halfspaces] \label{thm:teaching} For a graph $G$, $\TD(\wHm(G))\le 2d+2$ and
$\RTD(\Hm(G))\le 2d+2$. %
\end{theorem}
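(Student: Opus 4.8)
The plan is to exhibit, for each nontrivial m-halfspace, a teaching set of size at most $2d+2$, and then to observe that the same construction in fact yields a teaching plan of the same order, giving the bound on $\RTD(\Hm(G))$ for free. The starting point is the cell decomposition of \Cref{thm:unifiedDecomposition}: fix $H \in \wHm(G)$, pick an edge $ab$ with $a \in H$, $b \in \cH$ and $u \sim v$ (which exists by m-convexity of $H$ and $\cH$, since the boundary vertices of $H$ and $\cH$ are adjacent), and let $(\{C_1',C_1'',\dots,C_k',C_k''\}, \scC_1,\dots,\scC_p)$ be the cell decomposition of $\Hm(ab)$, together with the set $A^*$. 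Then $H$ is determined by a choice of block index $\ell \in [p]$ and a subset $\scC_\ell' \subseteq \scC_\ell$, i.e.\ by the "splitting level" together with the split of each cell in the critical block $\scC_\ell$. To teach $H$ within $\Hm(ab)$, it suffices to: (i) include $a$ (labeled $+$) and $b$ (labeled $-$), which already rules out every halfspace not in $\Hm(ab)$ as well as $V$ and $\varnothing$; (ii) for each cell $C_i \in \scC_\ell$, include one representative point $c_i \in C_i$, labeled according to whether $c_i$ lies in $C_i'$ or $C_i''$ and whether $C_i' \subseteq H$; and (iii) pin down the level $\ell$ by including one representative from a cell of $\scC_{\ell-1}$ (labeled $+$, forcing $\ell' \ge \ell$) and one from a cell of $\scC_{\ell+1}$ (labeled $-$, forcing $\ell' \le \ell$), when these blocks exist. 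By \Cref{thm:vc}, $|\scC_\ell| \le d_{ab} \le d$, so step (ii) contributes at most $d$ points; steps (i) and (iii) contribute at most $2 + 2 = 4$ points, for a total of $d + 4$ — slightly worse than claimed, so the count must be tightened.

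The tightening comes from folding the level-witnesses into the block representatives. Instead of spending two extra points to fix $\ell$, note that querying a representative $c_i$ of a cell $C_i \in \scC_\ell$ with the label dictated by $H$ already forces $\ell' = \ell$ as soon as we also know the global split pattern: a halfspace with $\ell' > \ell$ would contain all of $C_i$ (hence $C_i'$) while one with $\ell' < \ell$ would contain none of it, so a point $c_i \in C_i'$ labeled $-$, or $c_i \in C_i''$ labeled $+$, is impossible for such halfspaces; the only remaining ambiguity among halfspaces with the correct splits of $\scC_\ell$ is the value of $\ell'$, and this is resolved by noticing that the labels on the $c_i$ for $i$ with $C_i \in \scC_\ell$ are mutually inconsistent with any single wrong level (if $\scC_\ell$ is nonempty). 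When $\scC_\ell = \varnothing$ one picks the representative and witness more carefully, but the total is controlled by $|\scC_{\ell-1}| + |\scC_{\ell+1}| \le 2d$ in the worst case plus the two points $a, b$, yielding $2d + 2$. I would organize the argument by the three cases of \Cref{thm:decompositionCases} (antichain, $\cT = \varnothing$ linear quasiorder, single twin pair), since in the first and third cases $p$ is a constant and the decomposition collapses, making the count immediate, and only the linear-quasiorder case needs the careful level-pinning described above.

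For the recursive teaching bound, I would process the halfspaces of $\Hm(G)$ in any order that is a linear extension refining "by edge $ab$, then by level $\ell$, then arbitrarily", and verify that the teaching set constructed above for $H$ still uniquely identifies $H$ among all halfspaces not yet removed — which holds a fortiori, since a teaching set for $H$ in the full class $\Hm(G)$ is also a teaching set for $H$ in any subclass containing $H$. Hence $\ord(P) \le 2d+2$ and $\RTD(\Hm(G)) \le 2d + 2$. The final remark about efficient computability follows because $A^*$ and the cell decomposition are computable in polynomial time by \Cref{thm:unifiedDecomposition}, and selecting one representative per relevant cell together with the edge endpoints is trivially polynomial. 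The main obstacle I anticipate is purely bookkeeping: getting the constant down from the naive $d+4$ (or $2d+4$ when both neighboring blocks are large) to exactly $2d+2$, which forces one to argue that the labels on the critical-block representatives double as level-witnesses, and to treat the degenerate cases ($\scC_\ell$ empty, $\ell \in \{1,p\}$, or the decomposition being an antichain) separately so that no edge case costs an extra point.
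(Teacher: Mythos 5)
Your overall route is the same as the paper's: fix an edge $ab$ with $H\in\Hm(ab)$, put $a$ and $b$ in the teaching set, add one representative per cell of the critical block(s) of the cell decomposition (\Cref{thm:unifiedDecomposition}), bound the count by $2\max_j|\scC_j|+2\le 2d+2$ via \Cref{thm:vc}, and get $\RTD$ by monotonicity of teaching sets plus appending $\varnothing$ and $V$. However, there is a genuine gap in how you identify and handle the degenerate case. You claim that the labels on the representatives of $\scC_\ell$ pin down the level ``if $\scC_\ell$ is nonempty,'' and you only treat ``$\scC_\ell=\varnothing$'' separately. The actual problematic case is when the \emph{selection} is trivial, i.e.\ $\scC_\ell'=\varnothing$ or $\scC_\ell'=\scC_\ell$, with $\scC_\ell$ itself nonempty. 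If, say, $\scC_\ell'=\scC_\ell$, then every representative $c_i$ lies in $C_i'$ and is labeled $+$, and these labels are consistent with \emph{every} level $\ell''\ge\ell$ (any halfspace at level $\ell''>\ell$ contains all of $\scC_{<\ell''}\supseteq\scC_\ell$ on the $C_i'$ side). Your single level-witnesses from $\scC_{\ell-1}$ and $\scC_{\ell+1}$ do not repair this: a negative point in $C_r'$ for one cell $C_r\in\scC_{\ell+1}$ still leaves consistent all halfspaces at level $\ell+1$ whose selection $\scC_{\ell+1}'$ omits $C_r$ but includes other cells of $\scC_{\ell+1}$, and these differ from $H$. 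The paper's fix is to renormalize $\ell$ to the largest index with $\scC_\ell'=\scC_\ell$ and then include representatives of \emph{all} cells of the next block $\scC_{\ell+1}$ (not just one), which is exactly what pushes the count to $2\max_j|\scC_j|+2$; this is the step your sketch gestures at (``$|\scC_{\ell-1}|+|\scC_{\ell+1}|\le 2d$'') but attaches to the wrong case and to the wrong pair of blocks.

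A secondary point: your worry that $d+4$ is ``worse than claimed'' is misplaced arithmetically ($d+4\le 2d+2$ whenever $d\ge 2$), so the motivation for tightening is not the count but correctness — the $d+4$ scheme simply fails to be a teaching set in the degenerate case above. Once you replace ``$\scC_\ell=\varnothing$'' by ``$\scC_\ell'\in\{\varnothing,\scC_\ell\}$'' and include the full second block of representatives there, your argument coincides with the paper's proof; the non-degenerate case ($\varnothing\ne\scC_\ell'\subsetneq\scC_\ell$) is handled correctly in your second paragraph, since then some representative rules out every $\ell''>\ell$ and some other rules out every $\ell''<\ell$.
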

\begin{proof} First we prove $\TD(\wHm(G))\le 2d+2$.  Pick any $H\in \wHm(G)$. Let $ab$ be an edge with $H\in \Hm(ab)$ and let $(\{C_1',C_1'',\dots,C_k',C_k''\},\scC_1,\dots,\scC_p)$ be the cell decomposition of $\Hm(ab)$ given by \Cref{thm:unifiedDecomposition}. we add $a$ and $b$ with their corresponding labels to the teaching set. Let $\scC_\ell'\subseteq\scC_\ell$ be the sets of cells as used in the cell decomposition of $H$. If $\varnothing\neq\scC_\ell'\subsetneq\scC_\ell$ then $\ell$ is unique. Otherwise choose $\ell$ as the largest index $j$ where $\scC_j'=\scC_j$. For each $C_i\in\scC_\ell$ there are two cells $C_i=C_i'\cup C_i''$. %
Pick any $c_i\in C_i$ as a representative.%
If $c\in C_i'$ and $C_i'\subseteq H$ (or $c\in C_i''$ and $C_i''\in \cH$) then we use $c_i$ with a positive label in the teaching set. Otherwise we use $c_i$ with a negative label. By \Cref{thm:unifiedDecomposition} the labels of the representatives fully determine the labels of $\scC_\ell$. If $\varnothing\neq\scC_\ell'\subsetneq\scC_\ell$, the edge $ab$ and the representatives additionally determine $\ell$ and thus the whole halfspace by \Cref{thm:unifiedDecomposition} and are thus a valid teaching set. Otherwise we have $\scC_\ell'=\scC_\ell$. If $\ell=p$ we are done because then $H$ corresponds to all cells $C_i'$. If $\ell<p$ we additionally choose representatives in the same way for $\scC_{\ell+1}$, which are given by a positive point from each of the cells $C_i''\in\scC_{\ell+1}$. As $H$ consists of just $C_i'$ cells from $\scC_\ell$ and just $C_i''$ from $\scC_{\ell+1}$ these repreentatives fully determine $H$ (again by \Cref{thm:unifiedDecomposition}).
The overall number of representatives in any case is bounded by $2\max_{j\in[p]}|\scC_j|\le 2d$ by \Cref{thm:vc}. Together with the edge $ab$ we thus get a teaching set of size $2d+2$.
To prove that $\RTD(\Hm(G))\le 2d+2$,  pick any ordering $H_1,\ldots, H_{N-2}$ of $\wHm(G)$, followed by $\varnothing$ and $V$. For each such $H_i$ consider the set $T(H_i)$ defined above. $T(H_i)$ has size $\le 2d+2$ and is a teaching set for $H_i$ in $\wHm(G)$.
Let $T(\varnothing)$ be any vertex  labeled negatively and $T(V)$ be any vertex  labeled positively. Then  $T(H_i)\in \TS(H_i,\{ H_i,\ldots,H_N\})$ and we obtain a  teaching plan  $P=\{ (H_1,T(H_1)),\ldots,(H_{N-2},T(H_{N-2})),(\varnothing, T(\varnothing)), (V, T(V))\}$. %
\end{proof}
This provides further support to the conjecture of \citet{simon2015open} that there exists a linear upper bound on the recursive teaching dimension in terms of the VC-dimension.

\section{Online learning of monophonic halfspaces}\label{sec:online}
Realizable online classification \citep{littlestone1988learning} %
can be modeled as an iterative game between the learner $\mathcal L$ and the environment $\mathcal E$ over %
$T$ rounds. The instance space $V$ and a hypothesis space $\CC\subseteq 2^V$ are known and fixed. First, $\mathcal E$ chooses a hypothesis $C$ from $\CC$. Then, in each round $t=1,\dots,T$, $\mathcal E$ chooses a point $v_t\in V$ and $\mathcal L$ predicts the label $\hat{y}_t\in\{-1,+1\}$ of $v_t$, then $\mathcal E$ reveals the true label $y_t=\mathbbm{1}[v_t\in C]$ of $v_t$; $\mathcal L$ made a mistake if $\hat{y}_t \neq y_t$.
The goal of the learner $\mathcal L$ is to minimize the total number of mistakes. More precisely, let $A$ be an algorithm for this problem. Then, let $M(A,C)$ for $C\in\CC$ denote the worst-case number of mistakes $A$ would make on any sequence labeled by $C$.
The mistake bound of $A$ on $\CC$ is thus defined as $M(A,\CC)=\max_{C\in\CC} M(A,C)$ and we are interested in the optimal mistake bound $M(\CC)=\min_A M(A,\CC)$, also known as the \emph{Littlestone dimension} of $\CC$.
The node classification variant of this problem is well studied \citep{herbster2005online,cesa2013random,herbster2015online}. As in active learning, the main parameter in previous papers is the (potentially weighted) cut-size, which linearly determines the mistake bounds.

    \begin{theorem}[Poly-time online learning]\label{thm:polyAndFPTonline}
    Online learning of m-halfspaces is possible in time $\poly(n)$ with $\scO(d\log n)$ mistakes; or in time $2^d\poly(n)$ with $\scO\Bigl(d + \log n\Bigr)$ mistakes.
\end{theorem}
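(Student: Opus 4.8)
The plan is to prove the two parts separately, each by pairing a textbook online algorithm with one of our structural results: the $\poly(n)$-time, $\scO(d\log n)$-mistake bound via \textsc{Winnow} run on a polynomial-size feature space extracted from the cell decomposition, and the $2^d\poly(n)$-time, $\scO(d+\log n)$-mistake bound via \textsc{Halving} run over the explicitly enumerated class $\Hm(G)$.

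\textbf{The \textsc{Winnow}-based algorithm.} The idea is to encode cells as features so that every m-halfspace becomes a monotone disjunction of only $\scO(d)$ literals. For each edge $ab$ with $\Hm(ab)\neq\varnothing$, compute in polynomial time (by \Cref{thm:unifiedDecomposition}) the set $A^*$ and the cell decomposition $(\{C_1',C_1'',\dots,C_k',C_k''\},\scC_1,\dots,\scC_p)$, and introduce the following subsets of $V$ as features: the set $A^*$; each individual cell $C_i'$ and $C_i''$; each prefix union $P^{(1)}_j:=\bigcup_{i\le j}\bigcup_{C\in\scC_i}C'$ for $0\le j\le p$; and each suffix union $S^{(2)}_j:=\bigcup_{i\ge j}\bigcup_{C\in\scC_i}C''$ for $1\le j\le p+1$. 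Add one more global feature equal to $V$. Since $|\cC|\le 2\omega(G)$ (\Cref{l:non-adjacent}) and the number of blocks $p$ is at most $|\cC|$, each edge contributes $\scO(\omega(G))$ features, so the total number of features is $N=\poly(n)$, and all feature sets are precomputed in polynomial time. Represent $v\in V$ by the vector $\phi(v)\in\{0,1\}^N$ recording which feature sets contain $v$. By \Cref{thm:unifiedDecomposition}, for any $ab$ and $H\in\Hm(ab)$ there are $\ell\in[p]$ and $\scC_\ell'\subseteq\scC_\ell$ with
\[
H=A^*\ \cup\ P^{(1)}_{\ell-1}\ \cup\ S^{(2)}_{\ell+1}\ \cup\ \Bigl(\bigcup_{C_i\in\scC_\ell'}C_i'\Bigr)\ \cup\ \Bigl(\bigcup_{C_i\in\scC_\ell\setminus\scC_\ell'}C_i''\Bigr),
\]
which is a union of at most $3+|\scC_\ell|\le 3+d_{ab}\le 3+d$ of our feature sets, using $d_{ab}=\max_j|\scC_j|$ (\Cref{thm:vc}) and $\Hm(ab)\subseteq\Hm(G)$. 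The trivial halfspaces are covered as well: $V$ is the global feature and $\varnothing$ is the empty disjunction. Since $G$ is connected, every nontrivial $H$ lies in some $\Hm(ab)$; hence for every $H\in\Hm(G)$ the map $v\mapsto\mathbbm{1}[v\in H]$ equals a monotone disjunction of at most $d+3$ of the $N$ features. Running \textsc{Winnow} \citep{littlestone1988learning} on $\phi$ therefore makes $\scO((d+3)\log N)=\scO(d\log n)$ mistakes, with $\scO(N)=\poly(n)$ time per round (after the polynomial-time precomputation of the feature sets).

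\textbf{The \textsc{Halving}-based algorithm.} Maintain the version space of hypotheses in $\Hm(G)$ consistent with the labels seen so far, predict the majority label of $v_t$ over this version space, and discard all hypotheses that predicted incorrectly; each mistake at least halves the version space, so the number of mistakes is at most $\log_2|\Hm(G)|$, which by \Cref{prop:halfspaces_count_bis} is $\log_2(m2^{d}+2)=\scO(d+\log m)=\scO(d+\log n)$. To implement this, first enumerate $\Hm(G)$ in time $|\Hm(G)|\poly(n)\le 2^d\poly(n)$ (\Cref{prop:enumerationFPT}); thereafter each round—scanning the surviving hypotheses to compute the majority vote on $v_t$ and then pruning the inconsistent ones—takes $|\Hm(G)|\poly(n)=2^d\poly(n)$ time.

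\textbf{Main obstacle.} I expect the crux to be the feature construction in the first part. The naive encoding ``$H$ is the union of all cells it contains'' uses up to $\scO(\omega(G))$ literals and would only give a $\scO(\omega(G)\log n)$ mistake bound; the point is to collapse, for each choice of the block index $\ell$, the ``all first halves of blocks $<\ell$'' part into the single feature $P^{(1)}_{\ell-1}$ and the ``all second halves of blocks $>\ell$'' part into the single feature $S^{(2)}_{\ell+1}$, leaving only the at most $d$ cells of the active block $\scC_\ell$ as genuine literals, while checking that the bookkeeping over all $m$ edges still produces only polynomially many features. Once this is in place the \textsc{Winnow} and \textsc{Halving} guarantees, together with the count $|\Hm(G)|\le m2^d+2$ and the enumeration of \Cref{prop:enumerationFPT}, are standard.
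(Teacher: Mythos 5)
Your proposal is correct and follows essentially the same route as the paper: the \textsc{Winnow} part uses exactly the paper's feature construction (the set $A^*$, the individual cells, and the prefix/suffix unions over blocks, which the paper calls $B_{<i}$ and $B_{>i}$), yielding an $(d+3)$-literal monotone disjunction over polynomially many features, and the \textsc{Halving} part uses \Cref{prop:halfspaces_count_bis} and \Cref{prop:enumerationFPT} just as the paper does. The only differences are cosmetic (explicit handling of the trivial halfspaces and slightly different bookkeeping of the feature count).
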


The first %
bound is achieved by the {\sc Winnow} algorithm \citep{littlestone1988learning}  and
the second %
bound is achieved by the {\sc Halving} algorithm~\citep{barzdin1972prediction,littlestone1988learning}.
{\sc Halving} predicts using a majority vote over the current set of consistent hypotheses. This yields a mistake bound of $\le \log|\Hm(G)|$ for arbitrary hypothesis spaces \citep{littlestone1988learning}. In the case of $\Hm(G)$, we showed that $|\Hm(G)|\le m2^{d+1}+2$ (\Cref{prop:halfspaces_count_bis}) and that enumerating $\Hm(G)$ is FPT (fixed-parameter tractable), see \Cref{prop:enumerationFPT}. the claimed mistake bound of $\scO(d+\log n)$ follows.
The main downside of  {\sc Halving}  is its running time. A direct implementation requires to enumerate all consistent hypotheses in each step resulting in a runtime of at least $|\Hm(G)|\ge 2^d$, which is, in general, not $\poly(n)$.
If the target hypothesis can be represented as a sparse disjunction, {\sc Winnow} can almost match the mistake bound of {\sc Halving} without the need to enumerate hypothesis space. A similar usage of {\sc Winnow} for node classification was discussed by \citet{gentile2013online}.
\begin{proposition}[\citealt{littlestone1988learning}]
     {\sc Winnow} achieves a mistake bound of $\scO(\ell\log s)$ in $\scO(\poly(\ell s))$ time to online learn monotone $\ell$-literal disjunctions on $s$ variables.
\end{proposition}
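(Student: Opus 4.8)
Although this is the classical \textsc{Winnow} bound of \citet{littlestone1988learning}, here is the argument I would give. Recall that \textsc{Winnow} (in its multiplicative-demotion form) maintains nonnegative weights $w_1,\dots,w_s$, all initialized to $1$, together with a fixed threshold $\theta=s$; on input $x\in\{0,1\}^s$ it predicts $\hat y=1$ iff $\sum_i w_i x_i\ge\theta$. On a \emph{false negative} (predicted $0$, true label $1$) it performs a \emph{promotion}: $w_i\leftarrow 2w_i$ for every $i$ with $x_i=1$. On a \emph{false positive} it performs a \emph{demotion}: $w_i\leftarrow w_i/2$ for every $i$ with $x_i=1$. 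Each round runs in $\scO(s)$ time, which is $\scO(\poly(\ell s))$, so it only remains to bound the number of mistakes. Fix the target disjunction $y=\bigvee_{i\in R}x_i$ with $|R|=\ell$.

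First I would bound the number $M^{+}$ of promotions. The crucial observation is that a relevant variable $i\in R$ is \emph{never} demoted: a demotion occurs only on a negative example, and for a monotone disjunction every negative example has $x_i=0$ for all $i\in R$; hence $w_i$ is nondecreasing for relevant $i$. Every promotion acts on at least one relevant variable (the example is positive), so I charge each promotion to such an $i$ and count how often a fixed $i\in R$ can be charged. Since a promotion happens only when the active weights sum to less than $\theta$, any variable being doubled has weight $<\theta$ just before the doubling; starting from $1$, such a variable can therefore be doubled at most $\lceil\log_2\theta\rceil+1$ times, after which $w_i\ge\theta$ and $i$ can never be promoted again. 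Summing over $R$ gives $M^{+}\le\ell\bigl(\lceil\log_2 s\rceil+1\bigr)=\scO(\ell\log s)$.

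Next I would bound the number $M^{-}$ of demotions by a potential argument on the total weight $W=\sum_i w_i$. Initially $W=s=\theta$, and $W$ never becomes negative. A promotion increases $W$ by the sum of the doubled weights, which was $<\theta$; a demotion decreases $W$ by half the sum of the active weights, which was $\ge\theta$, hence by at least $\theta/2$. Nonnegativity of the final value of $W$ gives $\theta+M^{+}\theta-M^{-}\theta/2\ge 0$, i.e., $M^{-}\le 2+2M^{+}=\scO(\ell\log s)$. The total number of mistakes is thus $M^{+}+M^{-}=\scO(\ell\log s)$.

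The step I expect to be the main obstacle is the promotion count: it requires combining two facts --- that negative examples leave the relevant weights untouched (so those weights only grow), and that a promotion only doubles weights still below $\theta$ (so each relevant variable is promoted only logarithmically often). The demotion count is then a routine telescoping of $W$, and the per-round running time is immediate.
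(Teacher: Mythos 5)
Your proof is correct and is the standard analysis of \textsc{Winnow}; the paper itself gives no proof of this proposition, citing it directly from \citet{littlestone1988learning}, and your argument (relevant weights never demoted, hence at most $\log_2\theta+1$ promotions charged per relevant variable, plus the total-weight potential argument bounding demotions by promotions) is exactly the classical one.
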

In our case, we can rely on the cell decomposition $(\{C_1',C_1''\dots,C_k',C_k''\},\scC_1,\dots,\scC_p)$ from \Cref{thm:unifiedDecomposition} to get such a sparse disjunction. For each edge $uv$ consider the set $A^*$ (from \Cref{thm:unifiedDecomposition}), the cells, and additionally the sets $B_{<i}=\bigcup_{C\in\scC_{< i}}C_i'$ and $B_{>i}=\bigcup_{C\in\scC_{> i}}C_i''$ for all $i\in[p]$. From the decomposition we know that each halfspace $H\in\Hm(uv)$ can be represented as a union of at most $\max_{\ell\in[p]}|\scC_\ell|+2+1\le d+3$ such sets, where the inequality follows from \Cref{thm:vc}. The overall number of distinct sets used for $\Hm(uv)$ is at most $1+k+2p\le 3n+1$.
We collect all such sets for all edges $uv$ (and reverse edges $vu$) and thus get a collection $\mathcal{B}$ of at most $2m(3n+1)=\scO(n)$ sets. 
We define a variable corresponding to membership in each set of $\mathcal{B}$ and see that each m-halfspace can be written as an $\ell$-literal disjunction of these $s$ variables with $\ell\le d+3$. This shows that \textsc{Winnow} achieves the required mistake bounds and the algorithm also runs in polynomial time as we can compute the required sets $\mathcal{B}$ in polynomial time (see \Cref{thm:unifiedDecomposition}).
We note that \citet{BrEsTh24} used a simpler decomposition based on the fact that the Carath\'odory number of m-convexity is 2, resulting in the worse bound of $\scO(\omega(G)\log n)$.

\vskip 0.2in
\bibliography{references}

\end{document}